\pdfoutput=1
\documentclass{article}
\usepackage{iclr2027_conference,times}

\usepackage{amsmath,amsfonts,bm}

\def\eqref#1{equation~\ref{#1}}

\def\1{\bm{1}}

\DeclareMathAlphabet{\mathsfit}{\encodingdefault}{\sfdefault}{m}{sl}
\SetMathAlphabet{\mathsfit}{bold}{\encodingdefault}{\sfdefault}{bx}{n}

\newcommand{\E}{\mathbb{E}}

\newcommand{\R}{\mathbb{R}}

\usepackage{hyperref}
\hypersetup{pdftitle={ShellElastic: Scalable Elastic Pretraining with Simultaneous Maximal Updates and Hyperparameter Transfer},
  pdfauthor={Can Jin, Yan Kang, Yifan Gong, Chongjian Ge, Mingze Xu, Yuanjun Xiong, Dimitris N. Metaxas, Hongwu Peng}}
\usepackage{url}
\usepackage{amsmath,amssymb,amsthm}
\usepackage{booktabs}
\usepackage{array}
\usepackage{multirow}
\usepackage{graphicx}
\usepackage{xcolor}
\usepackage{xspace}
\usepackage{tikz}
\usepackage{pifont}
\usepackage{enumitem}
\usepackage{etoc}
\usepackage{placeins}
\usetikzlibrary{arrows.meta,positioning,calc,shapes.misc}
\graphicspath{{paper_figures/}{figures/}{./}}

\newtheorem{theorem}{Theorem}
\newtheorem{lemma}{Lemma}
\newtheorem{proposition}{Proposition}
\newtheorem{corollary}{Corollary}
\newtheorem{definition}{Definition}
\newtheorem{assumption}{Assumption}

\definecolor{ourcolor}{HTML}{8B4513}

\newcommand{\mup}{$\mu$P\xspace}
\newcommand{\muT}{$\mu$-Transfer\xspace}
\newcommand{\muI}{$\mu$-Internal\xspace}
\newcommand{\muE}{\textcolor{ourcolor}{\textsc{$\mu$-Elastic}}\xspace} 
\newcommand{\SP}{the standard parameterization}
\newcommand{\Ord}[1]{\Theta\!\left(#1\right)}
\newcommand{\silu}{\operatorname{SiLU}}
\newcommand{\CE}{\operatorname{CE}}

\newcommand{\dbase}{g}

\newcommand{\Ours}{\textcolor{ourcolor}{\textsc{ShellElastic}}\xspace}

\definecolor{shella}{RGB}{42,120,214}
\definecolor{shellb}{RGB}{235,104,52}
\definecolor{shellc}{RGB}{27,175,122}
\definecolor{shelld}{RGB}{74,58,167}
\definecolor{shelle}{RGB}{160,160,160}
\definecolor{zeroblk}{RGB}{238,238,238}
\tikzset{linkbadge/.style={rounded rectangle, minimum height=14.6pt, inner ysep=0pt,
  text height=7.8pt, text depth=2.4pt}}
\newlength{\linkbadgeurlwidth}
\AtBeginDocument{\settowidth{\linkbadgeurlwidth}{\small\nolinkurl{https://huggingface.co/spaces/Can111/ShellElastic}}}
\newcommand{\linkbadge}[4]{{\hypersetup{pdfborder={0 0 0}}\href{#3}{%
  \begin{tikzpicture}[baseline=(label.base)]
    \node[linkbadge, rounded rectangle right arc=none, inner xsep=7pt, fill=ourcolor,
      text=white, font=\small\bfseries] (label) {\makebox[92pt][l]{#1\hspace{4pt}#2}};
    \node[linkbadge, rounded rectangle left arc=none, inner xsep=8pt, fill=ourcolor!9,
      font=\small, right=0pt of label]
      {\makebox[\linkbadgeurlwidth][l]{\textcolor{ourcolor!85!black}{\nolinkurl{#4}}}};
  \end{tikzpicture}}}\kern-\tabcolsep}
\newcommand{\badgeicon}[1]{\tikz[baseline=-3.6pt, x=4.2pt, y=4.2pt, line width=0.8pt,
  line cap=round, line join=round]{\useasboundingbox (-4.6pt,-4.6pt) rectangle (4.6pt,4.6pt); #1}}
\newcommand{\globeicon}{\badgeicon{\draw (0,0) circle (1); \draw (-1,0) -- (1,0);
  \draw (0,-1) arc[start angle=-40.8, end angle=40.8, radius=1.53]
    arc[start angle=139.2, end angle=220.8, radius=1.53];}}
\newcommand{\codeicon}{\badgeicon{\draw (0.4,0.6) -- (1,0) -- (0.4,-0.6);
  \draw (-0.4,0.6) -- (-1,0) -- (-0.4,-0.6);}}
\newcommand{\dataicon}{\badgeicon{\draw (0,0.7) ellipse [x radius=0.85, y radius=0.3];
  \draw (-0.85,0.7) -- (-0.85,-0.7) arc[start angle=180, end angle=360, x radius=0.85, y radius=0.3]
    -- (0.85,0.7);
  \draw (-0.85,0) arc[start angle=180, end angle=360, x radius=0.85, y radius=0.3];}}

\title{ShellElastic: Scalable Elastic Pretraining\\ with Simultaneous Maximal Updates\\ and Hyperparameter Transfer}

\author{%
Can Jin$^{1}$\thanks{Corresponding author: \texttt{can.jin@rutgers.edu}.
  Work done during an internship at Adobe Research.}\phantom{$^{*}$},
Yan Kang$^{2}$,
Yifan Gong$^{2}$,
Chongjian Ge$^{2}$,
Mingze Xu$^{2}$,
Yuanjun Xiong$^{2}$,\\
\bfseries Dimitris N. Metaxas$^{1}$,
Hongwu Peng$^{2}$\\
$^{1}$Rutgers University \qquad $^{2}$Adobe Research\\[5pt]
\linkbadge{\globeicon}{Project Page}{https://jincan333.github.io/ShellElastic_page}{https://jincan333.github.io/ShellElastic_page}\\[4pt]
\linkbadge{\codeicon}{Code}{https://github.com/jincan333/ShellElastic}{https://github.com/jincan333/ShellElastic}\\[4pt]
\linkbadge{\dataicon}{Checkpoints \& Data}{https://huggingface.co/spaces/Can111/ShellElastic}{https://huggingface.co/spaces/Can111/ShellElastic}
}

\iclrfinalcopy
\begin{document}

\maketitle
\lhead{Preprint.}

\begin{abstract}
Foundation models need to support diverse inference budgets, yet training a separate model for each budget duplicates pretraining effort. Elastic models train several nested sub-networks inside one weight-shared supernet, but existing recipes face three obstacles: variable per-step workloads that hinder distributed scaling, limited hyperparameter (HP) transfer across model sizes, and no guarantee that all sub-networks are maximally updated at once. We present \Ours, an elastic transformer that organizes hidden dimensions into a dense core and successive shells, so that every hidden matrix is block lower triangular. One full-width forward pass contains every sub-network's exact hidden states as prefixes, so joint training keeps a fixed computation graph and costs about as much per step as training the widest sub-network alone. We further introduce \muE, an extension of maximal update parameterization (\mup) to \Ours and other elastic architectures. Its scaling rules provably keep all sub-networks of a supernet maximally updated simultaneously (\muI), and keep doing so when the supernet is widened or given more shells; sweeps confirm that HPs tuned on a small non-elastic proxy stay near-optimal for elastic families and larger supernets (\mbox{\muT}). A single \Ours run produces sub-networks that outperform MatFormer and fully elastic baselines at matched parameter counts, and approach separately trained models that together take several times the steps.
\end{abstract}

\etocdepthtag.toc{mtchapter}
\section{Introduction}
\label{sec:intro}
Foundation models need to support different compute budgets for different tasks and deployment platforms. 
Elastic training answers this need by optimizing several nested sub-networks inside one shared parameter set: slimmable networks and Once-for-All established the idea for vision models~\citep{yu2019slimmable,cai2020onceforall}, MatFormer nests the FFN width of a transformer and underlies released on-device models~\citep{devvrit2024matformer,gemmateam2025gemma3n}, and Flextron and Nemotron Elastic make pretrained LLMs elastic by post-training~\citep{cai2024flextron,taghibakhshi2025nemotronelastic}.

Three problems remain when the elastic training recipe is carried into foundation-model pretraining.

\emph{Scalability.} Standard recipes sample one or a few sub-networks per step~\citep{devvrit2024matformer,yu2019universally}, so the computation graph, the parameter access pattern and the workload change from step to step; under parameter sharding this variability complicates activation memory, recomputation and distributed scheduling, the very machinery that large-scale pretraining depends on. Training all sub-networks at every step fixes the workload but needs a forward and backward pass per sub-network, so the cost grows with the family size and erodes the saving that motivates elastic training.

\emph{Transferability.}
Hyperparameters (HPs) tuned at one model size need not remain effective at another, and repeating the search over learning rate, initialization scale and weight decay at every target scale cancels the compute that elastic training saves. Maximal update parameterization transfers HPs across widths for independently trained models~\citep{yang2022tensorv}; elastic models add two questions: can HPs tuned on a non-elastic proxy serve an entire elastic family, and can those tuned on a smaller elastic family serve a larger one? We refer to these two forms of transfer as \muT for elastic models and seek one scaling rule that delivers both under parameter sharing.

\emph{Simultaneous maximal updates.}
Existing elastic training recipes do not generally ensure maximal updates for all sub-networks simultaneously.\footnote{A parameter group is \emph{maximally updated} when the pre-activations it produces are $\Ord{1}$ at initialization and move by $\Ord{\eta_0}$ after one optimizer step, with $\eta_0$ the base learning rate before width scaling, so that its contribution neither vanishes nor blows up as the width grows~\citep{yang2021featurelearning,yang2022tensorv}.} Without width-dependent scaling, HPs that yield well-scaled feature updates in one sub-network make updates vanish or blow up in another, and re-tuning one global value only trades one width against another, so some widths stay mis-scaled. The requirement is therefore a shared parameterization that keeps every width maximally updated simultaneously despite parameter sharing; we call this property \muI for elastic models.

We address the first problem with a new architecture, \Ours, and the other two with \muE. \Ours{} (Figure~\ref{fig:architecture}) partitions the residual stream into a dense core followed by ordered shells, each a multiple of a base width $g$, and each supported sub-network is the core plus a prefix of the shells. Every hidden matrix is block lower triangular: a shell reads the core, the earlier shells and itself, but never a later shell. Trunk normalization acts within each group, attention heads align with the group boundaries, and every supported width has its own readout. One full-width forward pass therefore contains the exact hidden states of every sub-network as prefixes (Proposition~\ref{prop:prefix}), and the per-step compute stays close to that of a single full-width pass however many sub-networks are trained, so every step has a fixed workload, as in non-elastic training.

\muE (Figure~\ref{fig:mue-schematic}) extends maximal update parameterization (\mup)~\citep{yang2022tensorv} to elastic models with two factors: a \muT factor $m=g/d_0$ that carries HPs from a non-elastic proxy of width $d_0$ to base width $g$, and a \muI factor for the different widths inside one shared model. Every group reads the same input coordinates in every sub-network that contains it, so we set its initialization, learning rate and weight decay from this fan-in, while each readout uses its own input width (Table~\ref{tab:widthparam}). This keeps every group maximally updated in all sub-networks at once (Theorem~\ref{thm:shell}), which uniform scaling across groups cannot do in general (Proposition~\ref{prop:noniid}). The same two factors apply to MatFormer and to fully elastic models (Appendix~\ref{app:other-param}).

Our main contributions are summarized as follows:
\begin{itemize}
\vspace{-1mm}
\item[\ding{72}] \textbf{Scalable elastic architecture.} We propose \Ours, an elastic transformer whose single full-width forward pass yields the exact hidden states of every sub-network as prefixes, so every step has a fixed workload rather than a varying one.
\item[\ding{72}] \textbf{\muE parameterization.} We introduce \muE, which transfers HPs from a non-elastic proxy to an elastic model and from smaller to larger elastic models (\muT), and keeps all sub-networks of a supernet maximally updated at once (\muI).
\item[\ding{72}] \textbf{Theoretical and empirical validation.} We prove that \muE keeps all sub-networks maximally updated at both levels of transfer; sweeps confirm that transferred HPs stay near-optimal and that \muE beats the standard parameterization on every elastic family.
\item[\ding{72}] \textbf{Comprehensive ablation studies.} We search architectures at equal parameters and ablate sub-network selection, loss weights, normalization scope, readout design, in-place distillation, sub-network ensembles and stage-residual gradient scaling for elastic pretraining.
\vspace{-2mm}
\end{itemize}

\begin{figure}[t]
\vspace{-4mm}
\centering
\includegraphics[width=\linewidth]{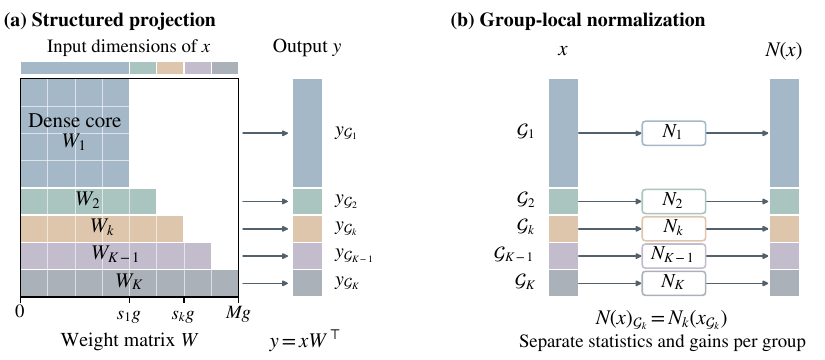}
\vspace{-6mm}
\caption{\textbf{Core and shells of \Ours{} preserve supported prefixes.} (a) Each colored block row produces output group $\mathcal G_k$ from the input prefix ending at $s_kg$. Group $\mathcal G_1$ is the dense core; subsequent groups are shells. (b) $N_k$ normalizes group $\mathcal G_k$ using its own statistics.}
\label{fig:architecture}
\vspace{-3mm}
\end{figure}

\vspace{-2mm}
\section{Related Work}
\label{sec:related}
\vspace{-2mm}

\textbf{Elastic models.}
Elastic models train several nested sub-networks in one shared parameter set, so one run serves many inference budgets~\citep{yu2019slimmable,cai2020onceforall,devvrit2024matformer}.
Slimmable networks train one convolutional network at several widths~\citep{yu2019slimmable,yu2019universally}, Once-for-All also nests depth, kernel size and resolution~\citep{cai2020onceforall}, and the same idea has been applied to the width and depth of BERT~\citep{devlin2019bert,hou2020dynabert} and to output representations~\citep{kusupati2022mrl}; these recipes train the sub-networks jointly or by sampling, usually with distillation from the largest one~\citep{hinton2015distilling}. For decoder-only transformers~\citep{vaswani2017attention}, MatFormer~\citep{devvrit2024matformer} nests only the FFN hidden width and underlies released on-device models~\citep{gemmateam2025gemma3n}, while Flextron and Nemotron Elastic extract nested sub-networks from a pretrained model~\citep{cai2024flextron,taghibakhshi2025nemotronelastic}.
The concurrent StairFormer~\citep{trost2026fullynested} nests every smaller model as a prefix of one forward pass and instantiates this nesting with equal-width blocks; in its experiments, it trails dense models of the same size at every budget but supports a wider range of usable budgets than MatFormer.
\Ours{} uses the same prefix structure but chooses the supported widths, which set the core and shell sizes, and normalizes within groups instead of over cumulative prefixes.
No existing work derives how HPs should scale across nested widths or keeps every sub-network maximally updated.

\textbf{HP transfer and $\mu$-parameterization.}
Tuning HPs such as the learning rate and initialization is expensive at scale, and under \SP{}~\citep{sohldickstein2020standard} their optima drift with width, so a small model cannot stand in for a large one~\citep{yang2022tensorv}.
Scaling HPs with model size removes this drift, so values tuned on a small proxy transfer zero-shot~\citep{yang2022tensorv,dey2025completep}.
Building on infinite-width theory~\citep{jacot2018ntk,mei2018meanfield}, \citet{yang2021featurelearning} identified \mup{}, a parameterization that gives every layer maximal $\Ord{1}$ feature updates as width grows, and \citet{yang2022tensorv} turned it into a transfer rule for initialization, learning rate and multipliers.
CompleteP extends the rule across depth for pre-norm transformers~\citep{yang2024depthmup,bordelon2024depthwise,dey2025completep}, and \citet{mlodozeniec2025completed} add batch size and token budget.
\citet{everett2024scaling} derive per-layer learning-rate exponents across parameterizations and optimizers, including Adam, and find that HP transfer is not limited to \mup{}; Table~\ref{tab:widthparam} follows their scaling rules.
These rules assume separately trained models, where each tensor has one fan-in; \muE{} extends them to the shared tensors of an elastic supernet.

\vspace{-2mm}
\section{Method}
\label{sec:method}
\vspace{-2mm}
In the \Ours{} architecture, all sub-networks reuse hidden-state prefixes from one full-width trunk pass; \muE{}, a parameterization based on each weight's active input width, enables reuse of base optimization settings across model sizes.
\vspace{-1mm}
\subsection{The \Ours{} architecture}
\label{sec:arch}
\vspace{-1mm}
\paragraph{Core, shells, and supported widths.}
We organize hidden dimensions into a dense \textbf{\emph{core}} followed by successive \textbf{\emph{shells}} (Figure~\ref{fig:architecture}). For base width $g$, choose supported sub-network width indices
\begin{equation}
 \mathcal S=\{s_1<\cdots<s_K=M\}\subseteq\{1,\ldots,M\},\qquad
 D_{\mathrm{model}}^{(s)}=sg,\quad D_{\mathrm{model}}=Mg.
\label{eq:widthset}
\end{equation}
With $s_0=0$, these boundaries define contiguous groups $\mathcal G_k=\{s_{k-1}g+1,\ldots,s_kg\}$ of size $(s_k-s_{k-1})g$. The core is $\mathcal G_1$, and the shells are $\mathcal G_2,\ldots,\mathcal G_K$. Sub-network $s_k$ contains the first $k$ groups.

\paragraph{Projections and normalization.}
Write a token's hidden state as a row vector $x$, with $x_{:d}$ denoting its first $d$ dimensions. A square projection is $y=xW^\top$, where $W\in\R^{D_{\mathrm{model}}\times D_{\mathrm{model}}}$. For output group $k$, the block row $W_k\in\R^{|\mathcal G_k|\times s_kg}$ reads only the first $s_kg$ input dimensions:
\begin{equation}
 y_{\mathcal G_k}=x_{:s_kg}W_k^\top.
\label{eq:grouplinear}
\end{equation}
This makes $W$ block lower triangular at the group boundaries (Figure~\ref{fig:architecture}a).

Group-local normalization uses a separate operator $N_k$ and statistics within each group (Figure~\ref{fig:architecture}b). For RMSNorm~\citep{zhang2019rmsnorm}:
\begin{equation}
 \big[N_k(x_{\mathcal G_k})\big]_j=a_j\frac{x_j}{\sqrt{|\mathcal G_k|^{-1}\sum_{u\in\mathcal G_k}x_u^2+\epsilon_N}},\qquad j\in\mathcal G_k.
\label{eq:norm}
\end{equation}
Here $a_j$ is a learned gain and $\epsilon_N>0$ is a numerical stabilizer. The combined normalization satisfies $N(x)_{\mathcal G_k}=N_k(x_{\mathcal G_k})$. Both operations preserve prefixes ending at the boundaries in $\mathcal S$. Cuts within a group generally change the retained outputs, so sub-networks can be extracted only at these boundaries. Thus, $\mathcal S$ determines the block-triangular pattern of every projection, the normalization groups, and the available sub-network widths.

\paragraph{Attention and FFN.}
Both branches (Figure~\ref{fig:block} in Appendix~\ref{app:architecture}) use the group-local normalization of Equation~\ref{eq:norm} as their pre-norm. Attention is elastic in its number of heads: sub-network $s$ uses the first $D_{\mathrm{model}}^{(s)}/d_{\mathrm{head}}=sg/d_{\mathrm{head}}$ heads, keeping the head dimension $d_{\mathrm{head}}$ fixed. We require $g$ to be a multiple of $d_{\mathrm{head}}$ so that supported widths retain complete heads. The query, key, value, gate, and output projections follow Equation~\ref{eq:grouplinear}; they constrain dependencies among hidden dimensions before and after attention. Query/key normalization and rotary position transformations~\citep{su2024rope} act within each head. Each active head $a$ performs causal self-attention:
\begin{equation}
 \operatorname{Attn}(x)=\left[\operatorname{concat}_a\!\left(\operatorname{softmax}\!\left(Q_aK_a^\top/\sqrt{d_{\mathrm{head}}}+C\right)V_a\right)\odot\operatorname{sigmoid}(xW_G^\top)\right]W_O^\top,
\label{eq:attention}
\end{equation}
where $C$ is the causal mask, $Q_a,K_a,V_a$ come from the corresponding projections of $x$ and concatenation runs over the active heads. Adding heads leaves earlier heads' attention outputs unchanged; the structured output projection then preserves the supported hidden-state prefixes.

We use a SwiGLU FFN~\citep{shazeer2020glu} with expansion $e=D_{\mathrm{ffn}}/D_{\mathrm{model}}$ and integer base FFN width $eg$, so sub-network $s$ has $D_{\mathrm{ffn}}^{(s)}=eD_{\mathrm{model}}^{(s)}$. All three maps use the same group dependency pattern. The up and gate weights have shape $D_{\mathrm{ffn}}\times D_{\mathrm{model}}$, with group boundaries $s_kg$ on the input axis and $es_kg$ on the output axis. The down weight has shape $D_{\mathrm{model}}\times D_{\mathrm{ffn}}$. The FFN computes
\begin{equation}
 \operatorname{FFN}(x)=\left[\silu(xW_1^\top)\odot(xW_3^\top)\right]W_2^\top.
\label{eq:ffn}
\end{equation}
Elementwise operations and residual additions preserve the same prefixes.

\begin{table}[t]
\vspace{-4mm}
\caption{\textbf{\muE for \Ours.} Scaling rules set by the \muT factor $m$, the \muI factor $s$ (Equation~\ref{eq:widthratio}), and the FFN expansion $e=D_{\mathrm{ffn}}/D_{\mathrm{model}}$; $s$ is the output group's supported boundary for hidden weights and the sub-network width index for readouts. Initialization gives nominal weight standard deviations, or initial values for gains and scalars.}
\label{tab:widthparam}
\centering
\begingroup
\small
\setlength{\tabcolsep}{2pt}
\renewcommand{\arraystretch}{1.12}
\begin{tabular*}{\linewidth}{@{\extracolsep{\fill}}lccccc@{}}
\toprule
\textbf{Parameter} & \textbf{Active input width} & \textbf{Initialization} & \textbf{Learning rate} & \textbf{Weight decay} & \textbf{Multiplier} \\
\midrule
Token embedding $E$ & $V$ (fixed) & $\sigma_0$ & $\eta_0$ & $\lambda_0$ & $1$ \\
Attention; FFN up/gate & $sg$ & $\sigma_0/\sqrt{ms}$ & $\eta_0/(ms)$ & $\lambda_0ms$ & $1$ \\
FFN down & $esg$ & $\sigma_0\sqrt e/\sqrt{ms}$ & $\eta_0/(ms)$ & $\lambda_0ms$ & $1/e$ \\
Readout $U_s$ & $sg$ & $\sigma_0/\sqrt{ms}$ & $\eta_0/(ms)$ & $\lambda_0ms$ & $\alpha_s$ \\
Norm gains; readout scalar $\alpha_s$ & $1$ (elementwise) & $1$ & $\eta_0$ & $0$ & $1$ \\
\bottomrule
\end{tabular*}
\endgroup
\vspace{-3mm}
\end{table}

\paragraph{Embedding and readout.}
Sub-network $s$ uses $sg$-dimensional token embeddings and computes its logits from the first $sg$ dimensions of the $L$-layer trunk output $x^{(L)}$:
\begin{equation}
 z_s=\alpha_s\,N_s^{\rm out}(x^{(L)}_{:sg})U_s^\top,\qquad s\in\mathcal S.
\label{eq:head}
\end{equation}
Here $\alpha_s$ is a learned logit scale, $N_s^{\rm out}$ normalizes over all $sg$ selected dimensions, and $U_s\in\R^{V\times sg}$ is the readout matrix for a vocabulary of size $V$. Figure~\ref{fig:full-shellelastic} in Appendix~\ref{app:architecture} shows the complete model.

\begin{proposition}[Exact reuse at supported widths]
\label{prop:prefix}
For fixed model parameters, let $E_s$ be the first $sg$ columns of the full-width embedding table, $T_s$ the $L$-layer trunk restricted to width $sg$, and $P_s$ the operator selecting the first $sg$ hidden dimensions. For any token sequence $u$, evaluating sub-network $s$ directly gives the same hidden states as extracting its prefix from sub-network $t$:
\[
 T_s(E_s[u])=P_s\!\left(T_t(E_t[u])\right),\qquad s,t\in\mathcal S,\quad s\leq t.
\]
\end{proposition}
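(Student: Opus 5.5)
The plan is an induction on layer depth, driven by one structural lemma: every primitive operation in the trunk commutes with the prefix projection $P_s$ for $s\in\mathcal S$. Concretely, define the \emph{restriction} of each operator to width $sg$ as the operator that sub-network $s$ uses. For a square projection this is the leading $sg\times sg$ block of $W$, i.e.\ the block rows $W_1,\dots,W_k$ with $s=s_k$. For normalization it is $N_1,\dots,N_k$. For attention it is the first $sg/d_{\mathrm{head}}$ heads. We then show, for each such operator $F$ with restriction $F_s$, that $F_s(P_s x)=P_s F(x)$ for every full-width (or width-$tg$) input $x$. Since restriction from $t$ to $s$ is the same as restriction from $M$ to $s$, it suffices to prove commutation for $t$ arbitrary in $\mathcal S$ with $s\le t$.

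The lemma is verified operator by operator.
\begin{enumerate}
\item \emph{Linear maps.} By Equation~\ref{eq:grouplinear}, output group $j\le k$ is $x_{:s_jg}W_j^\top$, which depends only on $x_{:s_jg}\subseteq x_{:sg}$. So the first $k$ output groups computed at width $t$ coincide with those computed at width $s$. For the FFN up and gate maps, the output boundaries are $es_jg$; the down map reads FFN prefixes ending at $es_jg$. The same argument applies, with elementwise $\silu$ and $\odot$ preserving FFN prefixes at the boundaries $es_kg$.
\item \emph{Normalization.} By Equation~\ref{eq:norm}, $N_j$ uses only statistics of $\mathcal G_j$. For $j\le k$ these groups lie inside the prefix, so $N(x)_{\mathcal G_j}=N_j(x_{\mathcal G_j})$ is unchanged.
\item \emph{Attention.} Because $g$ is a multiple of $d_{\mathrm{head}}$, the first $sg$ query, key, value and gate coordinates are exactly the first $sg/d_{\mathrm{head}}$ complete heads. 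Query/key normalization and RoPE act within a head, and the softmax is per head, so each of these heads' outputs depends only on its own $Q_a,K_a,V_a$. By step~1 these are the same at width $s$ and width $t$, for every token position; the causal mask is width-independent. The sigmoid gate is elementwise on a prefix-preserving projection. The concatenation at width $t$ therefore has the width-$s$ concatenation as its first $sg$ coordinates, and $W_O$ is handled by step~1.
\item \emph{Residual additions} are coordinatewise and commute with $P_s$.
\end{enumerate}

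With the lemma in hand, the induction is short. At the base, $E_s[u]=P_s(E_t[u])$ by definition of $E_s$ as the first $sg$ columns. Each layer is a composition $x\mapsto x+\operatorname{Attn}(N(x))$, then $x\mapsto x+\operatorname{FFN}(N(x))$. Composing commuting maps gives $T^{(\ell)}_s\circ P_s=P_s\circ T^{(\ell)}_t$ for each layer $\ell$, and induction over $\ell=1,\dots,L$ yields $T_s(E_s[u])=P_s(T_t(E_t[u]))$.

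The main obstacle is the attention step, because it mixes across token positions and passes through a nonlinear softmax. Naive prefix arguments about feature coordinates do not immediately cover it. The resolution is to note that the lemma must hold for the whole sequence matrix at once, rows being tokens, and that every cross-token interaction happens inside a single head. Once heads are shown to align with group boundaries, this reduces to the per-head independence argument in step~3. A secondary care point is confirming that the $1/\sqrt{d_{\mathrm{head}}}$ temperature and any width-dependent multipliers do not depend on the active width $s$. In Table~\ref{tab:widthparam}, hidden-weight multipliers depend only on the output group's boundary, not on the sub-network, so they are identical in both evaluations.
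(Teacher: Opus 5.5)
Your proposal is correct and follows essentially the same route as the paper: you show that each trunk operation commutes with $P_s$ (block-lower-triangular projections, including the FFN axis with boundaries $es_kg$; group-local normalization; head-aligned attention with per-head softmax; elementwise gates and residuals; and the embedding prefix as the base case), and then compose over the $L$ layers. Your extra check that the forward multipliers and the $1/\sqrt{d_{\mathrm{head}}}$ temperature do not depend on the active width is a small point the paper leaves implicit.
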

Applying sub-network $s$'s normalization and readout (Equation~\ref{eq:head}) to either result gives identical logits. Thus, one full-width trunk evaluation yields the same joint loss and parameter gradients as separate evaluations of all supported sub-networks. Appendix~\ref{app:prefixproof} proves the result by composing the prefix-preserving trunk operations across the $L$ layers.

\vspace{-2mm}
\subsection{\texorpdfstring{\muE: parameterizing elastic models}{mu-Elastic: parameterizing elastic models}}
\label{sec:param}
\vspace{-2mm}
We introduce \muE to address the two optimization requirements (Figure~\ref{fig:mue-schematic}): transferring HPs from a non-elastic proxy to an elastic family and between elastic model sizes (\muT), and maintaining maximal updates across sub-networks within each shared model (\muI).

\begin{figure}[t]
\vspace{-4mm}
\centering
\includegraphics[width=\linewidth]{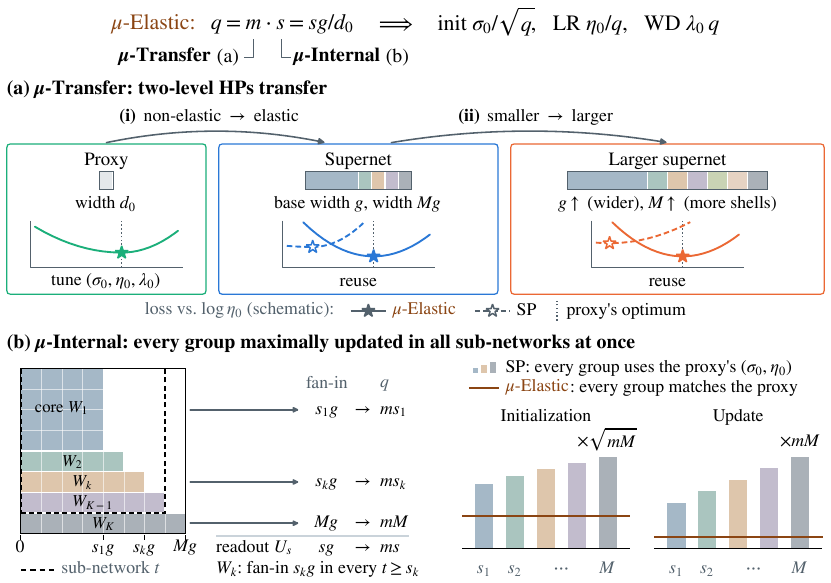}
\caption{\textbf{Overview of \muE{}.} The ratio $q=m\cdot s$ of Equation~\ref{eq:widthratio} scales the HPs of every group and readout. (a)~\muT{}: the $(\sigma_0,\eta_0,\lambda_0)$ tuned on the proxy are reused (i)~for an elastic supernet and (ii)~for a wider one with more shells (Corollary~\ref{cor:transfer}). (b)~\muI{}: block row $W_k$ has fan-in $s_kg$ in every sub-network that contains it, so $q=ms_k$ keeps it maximally updated in all of them (Theorem~\ref{thm:shell}). The bars show each group's pre-activation size at initialization and update size: with the proxy's $(\sigma_0,\eta_0)$, as under SP, all groups, even the core, exceed the proxy by $\sqrt{q}$ and $q$, so no shared value fixes them all (Proposition~\ref{prop:noniid}); under \muE{}, all match it.}
\label{fig:mue-schematic}
\vspace{-3mm}
\end{figure}

\paragraph{Composing transfer and internal scaling.}
Let a non-elastic proxy of width $d_0$ supply the base initialization scale $\sigma_0$, base learning rate $\eta_0$, and weight decay $\lambda_0$. For base width $g$, define
\begin{equation}
 m=\frac{g}{d_0},\qquad
 q(s)=\underbrace{m}_{\mu\text{-Transfer}}\cdot
 \underbrace{s}_{\mu\text{-Internal}}=\frac{sg}{d_0},\quad s\in\mathcal S.
\label{eq:widthratio}
\end{equation}
For a hidden weight, $s$ is the supported boundary of its output group, whose input width stays $sg$ ($esg$ for an FFN down projection) in every sub-network that contains it, so one rule based on $q(s)$ serves all of them. The readout of sub-network $s$ uses the same ratio.

\paragraph{Parameterization and optimization rules.}
Table~\ref{tab:widthparam} summarizes initialization, learning rate, weight decay, and forward multipliers by active input width for AdamW~\citep{kingma2015adam,loshchilov2019adamw}, following the per-layer scaling of~\citet{everett2024scaling}. Width is the only elastic axis, since all sub-networks share the depth $L$, minibatches and number of training steps; these shared quantities follow existing scaling rules for depth~\citep{dey2025completep}, batch size and token budget~\citep{mlodozeniec2025completed}. The learning rate and weight decay scalings cancel, so AdamW applies the same relative decay to every decayed parameter group. The embedding is a lookup over a fixed vocabulary of size $V$, so it uses the base $(\sigma_0,\eta_0,\lambda_0)$. For FFN down projections, the $\sqrt e$ initialization factor and $1/e$ forward multiplier account for the expanded input width $esg$. The learned scalars $\alpha_s$ in Equation~\ref{eq:head} adjust logit coefficients beyond width scaling. Norm gains and $\alpha_s$ are initialized to one and use learning rate $\eta_0$ with zero weight decay. The \emph{standard parameterization} (SP) sets $m=s=e=1$ only in the scaling rules and fixes $\alpha_s=1$, preserving the architecture.

\paragraph{Theoretical guarantees.}
The statements below are proved in Appendix~\ref{app:proofs} under the standard heuristic assumptions of \mup~\citep{yang2022tensorv}: before the scaling of Table~\ref{tab:widthparam} is applied, the pre-activations of a group with active input width $sg$ are $\Ord{\sqrt{sg}}$ at initialization and move by $\Ord{sg}$ after one step, which is why the initialization and the learning rate carry different exponents.

\begin{theorem}[\muI: simultaneous maximal updates]
\label{thm:shell}
Under the assumptions of Appendix~\ref{app:proofs} and the rule of Table~\ref{tab:widthparam}, consider a hidden group with supported boundary $s$ and any sub-network $t\in\mathcal S$ with $t\ge s$. In every layer, its pre-activations inside sub-network $t$ (i) have $\Ord{1}$ coordinates at initialization and (ii) move by $\Ord{\eta_0}$ after one AdamW step on that group, with constants independent of $s$, $t$, $M$, and $m$. The logits of every readout $U_s$ have the same two properties.
\end{theorem}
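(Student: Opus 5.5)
The plan is to reduce everything to a per-block-row calculation and show that the relevant fan-in of each block row does not depend on the sub-network $t$.

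\textbf{Step 1: prefix invariance.} By Proposition~\ref{prop:prefix}, inside any sub-network $t\ge s$ the block row $W_k$ of a group with boundary $s=s_k$ reads exactly the input prefix $x_{:sg}$. That input is the same vector it would read in sub-network $s$ itself. The same holds for $esg$ for FFN down projections, whose input is the elementwise product of prefix-preserved FFN activations.

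Two consequences follow. The pre-activation $y_{\mathcal G_k}=x_{:sg}W_k^\top$ is literally identical across all $t\ge s$. Moreover, the AdamW update $\Delta W_k$ is computed from a gradient that sums contributions over the sub-networks containing the group, but it acts on an input of fixed width $sg$. Statements (i) and (ii) therefore only need to be checked once per group, with fan-in $sg$. Uniformity in $t$ is then automatic, since $t$ never enters the computation.

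\textbf{Step 2: initialization scale.} Apply the heuristic assumption from the paragraph before the theorem: each input coordinate is $\Ord{1}$, because the group-local RMSNorm $N_k$ of Equation~\ref{eq:norm} normalizes each group using its own statistics. Hence the unscaled pre-activation has coordinates of size $\Ord{\sqrt{sg}}$ per unit weight standard deviation. With standard deviation $\sigma_0/\sqrt{ms}=\sigma_0\sqrt{d_0}/\sqrt{sg}$, each coordinate becomes $\Ord{\sigma_0\sqrt{d_0}}$. This constant is exactly the proxy's value, so it is independent of $s,t,M,m$. For the down projection, the standard deviation $\sigma_0\sqrt e/\sqrt{ms}$ against fan-in $esg$, together with the multiplier $1/e$, gives $\sqrt{esg}\cdot\sqrt e/\sqrt{ms}\cdot e^{-1}=\sqrt{d_0}$. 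The readout $U_s$ has fan-in $sg$ on $N_s^{\rm out}(x^{(L)}_{:sg})$, whose coordinates are $\Ord{1}$, so it is handled identically. The learned scale $\alpha_s$ is initialized to $1$ and is thus $\Ord{1}$.

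\textbf{Step 3: update scale.} Adam's normalized update has entries of size $\Ord{\text{lr}}$ and, by the standard \mup{} heuristic, is correlated with the input. It therefore produces an unscaled pre-activation change of $\Ord{sg}\cdot\text{lr}$. Substituting $\text{lr}=\eta_0/(ms)=\eta_0 d_0/(sg)$ gives $\Ord{\eta_0 d_0}=\Ord{\eta_0}$. Weight decay contributes $\text{lr}\cdot\lambda\cdot W=\eta_0\lambda_0 W$. Its effect on the pre-activation is therefore $\eta_0\lambda_0$ times the $\Ord{1}$ initial pre-activation, which is again $\Ord{\eta_0}$ with no width dependence.

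It remains to show that the change in the group's input, coming from updates in earlier layers, also contributes only $\Ord{\eta_0}$. I would argue this by induction over layers: by the inductive hypothesis those input coordinates move by $\Ord{\eta_0}$, and passing them through the $\Ord{1}$-scaled $W_k$ (Step 2) preserves that order. The same argument applies to the readout.

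\textbf{Main obstacle.} The hard step is justifying the correlation claim in Step 3 under sharing. The gradient of $W_k$ is a weighted sum over all sub-networks $t\ge s$ with different downstream paths and readouts. I would argue that Adam's sign-like normalization makes the entry size of $\Delta W_k$ equal to $\Ord{\text{lr}}$ regardless of how many terms are summed. The correlation with $x_{:sg}$ survives because every term is an outer product with the same input prefix $x_{:sg}$ (Step 1). Hence $\Delta W_k\,x_{:sg}^\top$ has the $\Theta(sg)$ law-of-large-numbers scaling assumed in the heuristic, with constants bounded uniformly since the loss weights are fixed and finite.
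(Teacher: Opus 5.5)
Your proposal is correct and follows essentially the same route as the paper's proof. The prefix property fixes the fan-in of block row $W_k$ at $sg$ (or $esg$), with the same input $x_{:sg}$ in every $t\ge s$. The (A2)/(A3)/(A4) fan-in computations with $\sigma_0/\sqrt{ms}$ and $\eta_0/(ms)$ then give scales $\sigma_0\sqrt{d_0}$ and $\eta_0 d_0$: the factor $e$ cancels for the down projection, and $U_s$ is treated the same way. Finally, the summed joint-training gradient is an outer product with that same input, so Adam's update stays aligned with it.

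The only difference is your layer-by-layer induction on input changes, which the paper does not do and the statement does not need. Part (ii) concerns only $\Delta W_k$ acting on the fixed input $x$, so you can drop that step. It would in any case need more than (A2), because $\Delta x$ is not independent of $W_k$.
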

Equation~\ref{eq:widthratio} is what makes this possible: a group with boundary $s$ sums over exactly $sg$ input coordinates in every sub-network containing it, so one initialization scale and one learning rate satisfy both conditions for all $t\ge s$ at once. Training several sub-networks jointly does not change this, since they all feed a group the same input and Adam is invariant to the gradient scale.

\begin{proposition}[A shared rule leaves groups mismatched]
\label{prop:noniid}
If all groups of a hidden matrix share one initialization variance, learning rate and multiplier, the groups with boundaries $s<s'$ differ by factors $\Ord{\sqrt{s'/s}}$ in pre-activation size and $\Ord{s'/s}$ in update size, whatever the shared values. For unbounded $M/s_1$, no shared rule thus keeps every group maximal in the sense of Theorem~\ref{thm:shell}.
\end{proposition}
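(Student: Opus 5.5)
The argument reuses the heuristic scaling assumptions behind Theorem~\ref{thm:shell}, but now with one shared triple (initialization standard deviation $\sigma$, learning rate $\eta$, multiplier $c$) for every group of the hidden matrix $W$. The only group-dependent quantity is then the fan-in. By Equation~\ref{eq:grouplinear}, block row $W_k$ reads $x_{:s_kg}$, whose coordinates are $\Ord{1}$ because they come from the group-local normalization of Equation~\ref{eq:norm}. Under the assumptions of Appendix~\ref{app:proofs}, the unscaled pre-activation of group $k$ is therefore $\Ord{\sqrt{s_kg}}$ at initialization. With variance $\sigma^2$ and multiplier $c$, its size becomes $c\,\sigma\sqrt{s_kg}$. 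For two groups with boundaries $s<s'$ this gives the ratio $\sqrt{s'/s}$; the shared $c\sigma$ cancels, and the constants are the same ones used in Theorem~\ref{thm:shell}.

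For the update, I would use the AdamW heuristic as in the proof of Theorem~\ref{thm:shell}. After one step, each entry of $\Delta W_k$ has size $\Ord{\eta}$ with a sign pattern aligned to the input. Hence $x_{:s_kg}\Delta W_k^\top$ sums $s_kg$ coherent terms and has size $c\,\eta\, s_kg$. Adam's invariance to the gradient scale means the number of sub-networks containing the group, and the loss weights, do not affect this. The update-size ratio is therefore $s'/s$, again independent of the shared values. This proves the first sentence of the proposition.

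For the impossibility claim, I argue by contradiction. Suppose one shared rule makes every group maximal in the sense of Theorem~\ref{thm:shell}. Then the core ($s_1$) and the last shell ($M$) must both have $\Ord{1}$ pre-activations and $\Ord{\eta_0}$ updates, with constants independent of $M$ and $s_1$. Their update sizes differ by the factor $M/s_1$, so both cannot lie in a fixed band $[C^{-1},C]\cdot\eta_0$ once $M/s_1>C^2$. The initialization sizes differ by $\sqrt{M/s_1}$, which also eventually leaves any fixed band. Since $M/s_1$ is unbounded, no choice of $(\sigma,\eta,c)$, even one depending on $M$, fixes both groups. Equivalently, if $\eta$ is tuned to make the core maximal, the last shell's update blows up; if it is tuned to make the last shell maximal, the core's update vanishes.

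The main obstacle I expect is justifying that the $\Ord{\cdot}$ bounds are tight two-sided estimates ($\Theta$), not just upper bounds, with constants uniform across groups. Ratio statements need both directions. I would get them directly from the Appendix~\ref{app:proofs} assumptions, which are phrased as $\Theta$ statements about active input width. I would also note that inputs from earlier groups have the same $\Theta(1)$ coordinate scale because normalization is applied per group.
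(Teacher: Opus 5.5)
Your proposal is correct and follows essentially the same route as the paper. The paper also absorbs the shared multiplier and lets the common $(b,\rho)$ cancel in the ratios of $\Theta(\sqrt{sg}\,b)$ and $\Theta(sg\,\rho)$ obtained from (A2)--(A4), then rules out a uniformly maximal shared pair because the core and the outermost group cannot both stay within constant factors of the target scales when their ratio grows like $\sqrt{M/s_1}$ and $M/s_1$; your fixed-band contradiction is the same argument. One minor point: the $\Theta(1)$ input coordinates come from (A1) itself rather than from the group-local norm alone, since the attention output and FFN down projections do not read a norm output (the latter also has fan-in $esg$, though $e$ cancels in the ratios).
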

Under SP, the groups of a hidden matrix share $(\sigma_0,\eta_0)$, so by Proposition~\ref{prop:noniid} re-tuning only trades the core against the outermost group, which stay $\sqrt{M/s_1}$ and $M/s_1$ apart in initialization and update; the best HPs are compromises depending on $m$, $M$ and $\mathcal S$. Figure~\ref{fig:mutransfer} and Table~\ref{tab:muinternal} show both effects: the SP optimum departs from the proxy's, and SP trails \muE{} even after re-tuning.

\begin{corollary}[\muT: two-level transfer]
\label{cor:transfer}
Theorem~\ref{thm:shell} holds for every base width $g=md_0$ and every supported set $\mathcal S\subseteq\{1,\dots,M\}$. Hence the values $(\sigma_0,\eta_0,\lambda_0)$ tuned on the width-$d_0$ proxy (i) keep every group and readout of a \Ours{} supernet maximally updated, and (ii) keep doing so when the supernet is widened (larger $m$) or given more shells (larger $M$).
\end{corollary}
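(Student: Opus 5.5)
The corollary follows from Theorem~\ref{thm:shell} once we check that neither its hypotheses nor its constants depend on $g$ or $\mathcal S$. The plan has three steps: confirm this uniformity, then derive part (i), then derive part (ii).

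\textbf{Step 1: uniformity of the theorem.} We re-read the proof of Theorem~\ref{thm:shell} and confirm it uses only two facts.
\begin{itemize}
\item The heuristic \mup{} scalings from Appendix~\ref{app:proofs}: before rescaling, pre-activations are $\Ord{\sqrt{n}}$ at initialization and move by $\Ord{n}$ after one step. Here $n=sg$ (or $esg$ for FFN down projections) is the active input width of the output group or readout.
\item The structural fact that this fan-in is the same in every sub-network $t\ge s$. This fact is guaranteed by the block lower-triangular form of Equation~\ref{eq:grouplinear} for any choice of $\mathcal S$.
\end{itemize}
With Table~\ref{tab:widthparam}, we have $q=ms=sg/d_0$. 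The initialization variance is then $\sigma_0^2/q$, so pre-activation coordinates have size $\sigma_0\sqrt{sg}/\sqrt{q}=\sigma_0\sqrt{d_0}$. The AdamW update has size $\eta_0 sg/q=\eta_0 d_0$. For FFN down projections, the factors $\sqrt e$ and $1/e$ cancel in the same way. Both quantities depend only on the proxy width $d_0$ and on $(\sigma_0,\eta_0)$, not on $g$, $s$, $t$ or $M$. The theorem's constants are therefore uniform over all $g=md_0$ and all $\mathcal S\subseteq\{1,\dots,M\}$, which is the first sentence of the corollary.

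\textbf{Step 2: part (i).} The corollary asserts that the proxy values themselves work. The key check is that the proxy is the special case $m=1$, $\mathcal S=\{1\}$ (so $M=1$), where Table~\ref{tab:widthparam} reduces to the plain \mup{} rule at width $d_0$. There, $(\sigma_0,\eta_0,\lambda_0)$ are exactly the tuned quantities. Step 1 shows that the $\Ord{1}$ and $\Ord{\eta_0}$ sizes in any \Ours{} supernet equal the proxy's sizes up to width-independent constants. So the proxy's tuned values produce the same feature and update scales for every group and readout.

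For weight decay, we use the cancellation noted after Table~\ref{tab:widthparam}: $\eta\lambda=(\eta_0/q)(\lambda_0q)=\eta_0\lambda_0$. The relative decay per step is therefore identical to the proxy's, so $\lambda_0$ also transfers.

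\textbf{Step 3: part (ii).} Widening the supernet (increasing $m$) or adding shells (increasing $M$, with a larger $\mathcal S$) only changes the values of $q(s)$. It introduces no new parameter type. New shells are hidden groups with larger boundaries $s$, and new readouts $U_s$ are covered by the readout clause of the theorem. By the uniformity established in Step 1, the same constants apply.

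\textbf{Main obstacle.} The hardest part is justifying that the heuristic assumptions of Appendix~\ref{app:proofs} hold uniformly for inputs to shell groups. Those inputs come from normalized prefixes whose coordinates are themselves $\Ord{1}$ by induction over layers and groups. I would handle this by an induction on the layer index, applied jointly to all groups. The base case is the embedding, whose scale is fixed at $\sigma_0$ and is independent of width. The inductive step applies Theorem~\ref{thm:shell}'s conclusion, together with group-local RMSNorm (Equation~\ref{eq:norm}), to show that every group's input stays $\Ord{1}$ with width-independent constants. Because $L$ is fixed, a finite product of such constants remains independent of $m$ and $M$.
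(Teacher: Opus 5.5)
Your proposal is correct and follows the paper's proof. Both arguments note that the constants of Theorem~\ref{thm:shell} depend only on $d_0$ and the base values, because the rule of Table~\ref{tab:widthparam} cancels every active input width. These constants are the pre-activation scale $\Ord{\sigma_0\sqrt{d_0}}$, the update scale $\Ord{\eta_0 d_0}$ and the decay factor $1-\eta_0\lambda_0$. Both then take the proxy as the special case $M=1$, $g=d_0$ of the same computation, which yields (i) and (ii).

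The layer-wise induction you single out as the main obstacle is extra work the paper does not do: it assumes $\Ord{1}$, width-independent inputs as (A1) rather than proving them.
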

As with \mup{}, the corollary fixes the scales of the initialization and the updates, not the optimal HP values; Figure~\ref{fig:mutransfer} verifies that the proxy's tuned HP values remain near-optimal at both levels.

\paragraph{\muE for other elastic architectures.}
\looseness=-1
\muE{} carries over to other elastic architectures, but $m$ and $s$ are defined per architecture. MatFormer~\citep{devvrit2024matformer} keeps the model width fixed and nests only the FFN hidden width, so only its FFN down projection has a fan-in that varies with the sub-network. It uses $m=D_{\mathrm{model}}/d_0$, with $s$ the sub-network's FFN hidden width in units of a base FFN width; the down projection takes the factor $ms$, and the other hidden matrices and the readout take $m$ (Table~\ref{tab:param-ffn}). Fully elastic models, a transformer adaptation of slimmable networks~\citep{yu2019slimmable}, slice shared dense matrices along the residual, attention and FFN widths, so every hidden matrix has a varying fan-in; their $m$ and $s$ follow Equation~\ref{eq:widthratio}, and every hidden matrix takes the factor $ms$ (Table~\ref{tab:param-full}). StairFormer~\citep{trost2026fullynested}, with equal-width groups, takes Table~\ref{tab:widthparam} with $\mathcal S=\{1,\dots,M\}$ in its hidden matrices. Theorem~\ref{thm:shell} and Corollary~\ref{cor:transfer} extend to all three (Appendix~\ref{app:proof-other}), and Figure~\ref{fig:architecture-comparison} compares MatFormer, fully elastic models and \Ours{}.
\vspace{-2mm}
\subsection{Training and inference}
\label{sec:training}
\vspace{-2mm}
\paragraph{Recipe.}
(1) Tune $(\sigma_0,\eta_0,\lambda_0)$ on a non-elastic proxy of width $d_0$.
(2) Choose the base width $g$ and the supported set $\mathcal S$.
(3) Compute $m=g/d_0$ and instantiate the parameterization in Table~\ref{tab:widthparam}.
(4) Train the supernet on the joint objective below.

\paragraph{Joint objective.}
At step $\tau$, with selected widths $\mathcal A_\tau\subseteq\mathcal S$ and loss weights $\omega_s\ge0$,
\begin{equation}
 \mathcal{L}_\tau=\frac{\sum_{s\in\mathcal A_\tau}\omega_s\,\ell_s}{\sum_{s\in\mathcal A_\tau}\omega_s},
 \qquad
 \ell_s=\CE(z_s,y)+\lambda_z\,\E\!\left[\Big(\log\textstyle\sum_{v}e^{z_{s,v}}\Big)^{2}\right],
\label{eq:loss}
\end{equation}
where $z_s$ are the logits of Equation~\ref{eq:head}, $\lambda_z$ is the $z$-loss coefficient, the expectation runs over the non-padded target positions, and every $\ell_s$ backpropagates into the shared trunk.
The narrower widths may be selected by any rule, but every $\mathcal A_\tau$ contains the full width $M$, so the trunk always runs at width $D_{\mathrm{model}}$ and every trunk parameter receives a gradient at every step.

\paragraph{Cost and scalability.}
Let $C_{\mathrm{trunk}}(s)$ and $C_{\mathrm{out}}(s)$ be the forward costs of the trunk and of one terminal norm and readout at width $sg$. Training the family jointly and training each sub-network separately cost
\begin{equation}
 C_{\mathrm{joint}}=C_{\mathrm{trunk}}(M)+\sum_{s\in\mathcal S}C_{\mathrm{out}}(s),
 \qquad
 C_{\mathrm{separate}}=\sum_{s\in\mathcal S}\Big[C_{\mathrm{trunk}}(s)+C_{\mathrm{out}}(s)\Big].
\label{eq:cost}
\end{equation}
By Proposition~\ref{prop:prefix}, one trunk term serves every width, whereas $C_{\mathrm{separate}}$ pays one per width, so the gap grows with $|\mathcal S|$ and $C_{\mathrm{joint}}$ stays comparable to training the widest sub-network alone.
The same holds for memory: all readout gradients combine in one trunk backward pass, so the trunk's activations, parameters and optimizer state are those of the widest one.
Training all sub-networks at every step works best (Appendix~\ref{app:ablation-studies}) and gives every step the same computation graph, parameter access pattern and workload.
In MatFormer and fully elastic models, hidden states are not nested, so each width needs its own trunk pass: sampling widths per step~\citep{devvrit2024matformer,yu2019universally} varies the workload, while training all widths at every step~\citep{yu2019slimmable} pays $C_{\mathrm{separate}}$.

\paragraph{Inference.}
For a chosen width $s\in\mathcal S$, we retain the embedding prefix, the core and shells up to $s$, and the matching terminal norm and readout, keeping the forward multipliers of Table~\ref{tab:widthparam}. By Proposition~\ref{prop:prefix}, the extracted model reproduces its supernet logits exactly and runs as a standalone transformer of width $sg$ without further training.

\vspace{-2mm}
\section{Experiments}
\label{sec:experiments}
\vspace{-2mm}
After describing the setup (Section~\ref{sec:setup}), we evaluate the sub-networks of one \Ours{} run against parameter-matched MatFormer, fully elastic and separately trained models, and \muE{} against SP on every elastic family (Section~\ref{sec:main}).
We then compare \Ours{} with StairFormer and run an architecture search at equal parameters and width count (Section~\ref{sec:arch-compare}); finally, we ablate the training and design choices to derive practical guidance for elastic pretraining (Section~\ref{sec:ablation}).
\vspace{-1mm}
\subsection{Experimental details}
\label{sec:setup}
\vspace{-1mm}
\paragraph{Models.}
Our primary model is a \Ours{} supernet with base width $g=128$ and supported set $\mathcal S=\{4,5,6,7,8\}$: a $512$-dimensional core followed by four $128$-dimensional shells, giving five sub-networks of widths $512$ to $1024$.
We report main results at three scales, with sub-networks of about $119$M--$285$M, $375$M--$906$M and $675$M--$1.69$B parameters (\Ours{} widths $512$--$1024$, $1024$--$2048$ and $1024$--$2048$, respectively). All models are $32$-layer transformers with gated attention and SwiGLU FFNs.
Tables~\ref{tab:models} and~\ref{tab:arch} in Appendix~\ref{app:hparams} list all configurations.

\paragraph{Datasets.}
All models are trained on DCLM-Baseline~\citep{li2024dclm}, a filtered Common Crawl corpus, with the GPT-NeoX-20B tokenizer~\citep{black2022gptneox} and a sequence length of $2{,}048$.
By default, a run takes $25{,}000$ steps of $128$ sequences ($6.55$B tokens).
This budget gives every sub-network more than $22$ tokens per parameter, above the compute-optimal ratio of about $20$~\citep{hoffmann2022chinchilla}.
We monitor the training and validation losses (per-token cross-entropy in nats) of every sub-network, the latter on held-out C4 text~\citep{raffel2020t5,dodge2021c4}.
Following OLMoE~\citep{muennighoff2025olmoe} and DCLM, we also probe 5-shot accuracy on ARC-Easy~\citep{clark2018arc}, PIQA~\citep{bisk2020piqa} and HellaSwag~\citep{zellers2019hellaswag} during training and evaluate the final sub-networks on $13$ datasets spanning world knowledge, symbolic problem solving, commonsense reasoning, language understanding and reading comprehension (Appendix~\ref{app:eval}).

\paragraph{Baselines.}
We compare \Ours{} against five baselines, each providing networks of five sizes: Low, Medium, High, Extra and Max.
(i)~\emph{MatFormer}~\citep{devvrit2024matformer} nests only the FFN hidden width.
(ii)~\emph{Fully elastic}, a transformer adaptation of slimmable networks~\citep{yu2019slimmable}, nests the residual, attention and FFN widths by slicing shared dense matrices.
(iii)~\emph{StairFormer}~\citep{trost2026fullynested} shares the prefix structure of \Ours{} but uses equal-width groups and normalizes over cumulative prefixes.
(iv)~\emph{Standalone} trains five separate models, each on the full budget, and thus takes $5\times$ the steps of one elastic run.
(v)~\emph{Sampled standalone}, the standalone baseline of MatFormer, trains the five \emph{Standalone} models within the steps of one elastic run.
MatFormer, the fully elastic model and sampled standalone sample one sub-network uniformly at random per step, whereas StairFormer and \Ours{} train all sub-networks in one full-width pass at every step.
\textbf{In the main results, all baselines match the \Ours{} sub-networks in parameter count to within $3\%$ (Table~\ref{tab:main-params}).}
Appendix~\ref{app:hparams} details the baselines' architecture and training, and Appendix~\ref{app:other-param} gives their parameterization.

\paragraph{Hyperparameters.}
All models are trained with AdamW. Unless otherwise specified, they use the base learning rate $\eta_0=2\times10^{-3}$, initialization scale $\sigma_0=3.3\times10^{-3}$ and weight decay $\lambda_0=0.01$, with \muE{} for the elastic models and \mup{} for the non-elastic ones, both with $d_0=128$.
\Ours{} has $m=1$ and trains all five widths at every step with loss weights $(1,1,1,1,4)$ (Equation~\ref{eq:loss}).
Main results at the $119$M--$285$M and $375$M--$906$M scales are averaged over three runs  with the same seed; compute limits the $675$M--$1.69$B scale to one run.
Ablations vary only the setting under study; Table~\ref{tab:hparams} in Appendix~\ref{app:hparams} lists the remaining HPs and the hardware.
\vspace{-2mm}
\subsection{Main results}
\label{sec:main}
\vspace{-2mm}
\begin{table}[t]
\vspace{-4mm}
\caption{Final validation loss and mean 5-shot accuracy over $13$ downstream tasks at three scales, labeled by sub-network size. Entries at the $119$M--$285$M and $375$M--$906$M scales average three runs; those at $675$M--$1.69$B are single runs. $^\dagger$Five separately trained models, together taking $5\times$ the steps of any other row. Bold marks the best of the four methods that train all five sizes in one run.}
\label{tab:main-scales}
\centering
\small
\setlength{\tabcolsep}{2pt}
\begin{tabular}{@{}c@{\hspace{3pt}}lcccccc@{\hspace{9pt}}cccccc@{}}
\toprule
 & & \multicolumn{6}{c}{Val.\ loss $\downarrow$} & \multicolumn{6}{c}{5-shot acc.\ (\%) $\uparrow$} \\
\cmidrule(r{4pt}){3-8} \cmidrule(l{0pt}){9-14}
 & Method & Low & Medium & High & Extra & Max & Avg. & Low & Medium & High & Extra & Max & Avg. \\
\midrule
\multirow{5.2}{*}[1.1pt]{\rotatebox[origin=c]{90}{\strut $119$M--$285$M}} & Standalone$^\dagger$ & 3.369 & 3.308 & 3.264 & 3.219 & 3.191 & 3.270 & 35.6 & 35.8 & 36.6 & 37.1 & 37.1 & 36.5 \\ \addlinespace[2pt]
 & Sampled standalone & 3.667 & 3.609 & 3.583 & 3.532 & 3.524 & 3.583 & 33.8 & 33.4 & 34.0 & 34.2 & 34.4 & 34.0 \\
 & MatFormer & 3.425 & 3.357 & 3.325 & 3.312 & 3.314 & 3.347 & 35.1 & 35.2 & 35.8 & 35.9 & 35.9 & 35.6 \\
 & Fully elastic & 3.429 & 3.354 & 3.315 & 3.295 & 3.290 & 3.336 & 34.9 & 35.5 & 35.7 & 35.9 & 36.0 & 35.6 \\
 & \textbf{\Ours} & \textbf{3.409} & \textbf{3.338} & \textbf{3.294} & \textbf{3.264} & \textbf{3.240} & \textbf{3.309} & \textbf{35.5} & \textbf{36.6} & \textbf{36.6} & \textbf{36.9} & \textbf{36.9} & \textbf{36.5} \\
\midrule
\multirow{5.2}{*}[1.1pt]{\rotatebox[origin=c]{90}{\strut $375$M--$906$M}} & Standalone$^\dagger$ & 3.007 & 2.963 & 2.926 & 2.897 & 2.870 & 2.933 & 40.4 & 41.8 & 42.5 & 43.2 & 43.1 & 42.2 \\ \addlinespace[2pt]
 & Sampled standalone & 3.208 & 3.175 & 3.141 & 3.115 & 3.093 & 3.147 & 37.7 & 37.8 & 38.0 & 38.7 & 39.0 & 38.2 \\
 & MatFormer & 3.072 & 3.005 & 2.975 & 2.963 & 2.963 & 2.996 & 39.7 & 40.3 & 40.4 & 41.2 & 41.3 & 40.6 \\
 & Fully elastic & \textbf{3.052} & \textbf{2.995} & \textbf{2.966} & 2.952 & 2.948 & 2.983 & 40.3 & 41.0 & 41.6 & 42.0 & 42.0 & 41.4 \\
 & \textbf{\Ours} & 3.055 & 3.000 & 2.967 & \textbf{2.943} & \textbf{2.925} & \textbf{2.978} & \textbf{40.8} & \textbf{41.4} & \textbf{42.0} & \textbf{42.6} & \textbf{43.0} & \textbf{41.9} \\
\midrule
\multirow{5.2}{*}[1.1pt]{\rotatebox[origin=c]{90}{\strut $675$M--$1.69$B}} & Standalone$^\dagger$ & 2.912 & 2.859 & 2.821 & 2.793 & 2.767 & 2.830 & 43.4 & 43.9 & 44.7 & 46.2 & 46.1 & 44.8 \\ \addlinespace[2pt]
 & Sampled standalone & 3.037 & 2.998 & 2.970 & 2.956 & 2.924 & 2.977 & 40.2 & 40.8 & 41.6 & 41.0 & 43.2 & 41.4 \\
 & MatFormer & \textbf{2.914} & 2.872 & 2.849 & 2.839 & 2.838 & 2.862 & 42.8 & 43.3 & 43.4 & 44.0 & 43.6 & 43.4 \\
 & Fully elastic & 2.940 & 2.884 & 2.854 & 2.838 & 2.833 & 2.870 & 42.1 & 43.4 & \textbf{43.9} & 44.2 & \textbf{44.6} & 43.6 \\
 & \textbf{\Ours} & \textbf{2.914} & \textbf{2.866} & \textbf{2.835} & \textbf{2.813} & \textbf{2.796} & \textbf{2.845} & \textbf{43.7} & \textbf{43.7} & \textbf{43.9} & \textbf{44.3} & 44.5 & \textbf{44.0} \\
\bottomrule
\end{tabular}
\vspace{-3mm}
\end{table}

\paragraph{Comparison with baselines across scales.}
Table~\ref{tab:main-scales} compares \Ours{} with MatFormer, the fully elastic model and the two standalone baselines at three scales (curves and per-task results in Appendix~\ref{app:main-full}); all three elastic models use \muE{} (Appendix~\ref{app:other-param}).
\Ours{} is the supernet of Section~\ref{sec:setup} at $119$M--$285$M, doubles its base width to $g=256$ at $375$M--$906$M, and also raises the FFN expansion from $e=1$ to $4$ at $675$M--$1.69$B; these two scales train for $50{,}000$ and $100{,}000$ steps of $256$ sequences (Appendix~\ref{app:hparams}).
Among the methods that train all five sizes in one run, \Ours{} has the lowest mean validation loss ($0.028$, $0.005$ and $0.018$ nats below the best baseline) and the highest mean 5-shot accuracy at every scale, and is best or tied-best at $12$ of the $15$ tiers in validation loss and $14$ in 5-shot accuracy.
Its lead grows with size, from at most $0.016$ nats at Low to $0.023$--$0.050$ at Max, likely because the Max-only parameters of the elastic baselines receive a gradient on only one step in five on average, whereas every parameter of \Ours{} receives one at every step.
At a fixed step budget, weight sharing also helps: the fully elastic model beats sampled standalone, its unshared counterpart, by $0.11$ to $0.25$ nats on average.
With a fifth of their optimizer steps, \Ours{} comes within $0.039$, $0.045$ and $0.014$ nats of the standalone models in mean validation loss and within $0.8$ points in mean 5-shot accuracy.
Appendix~\ref{app:cost} analyzes the training and inference cost of all methods at $119$M--$285$M.

\begin{figure}[t]
\vspace{-4mm}
\centering
\includegraphics[width=\linewidth]{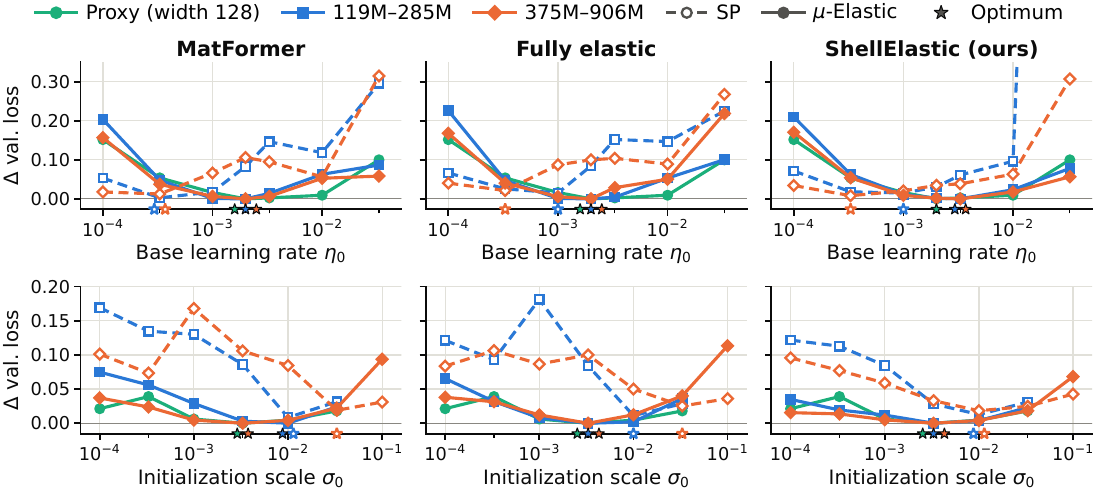}
\vspace{-6mm}
\caption{Sweeps of the learning rate $\eta_0$ and initialization scale $\sigma_0$, the other at its base value, for the proxy and each elastic model at both scales under \muE{} and SP. $\Delta$ val.\ loss is the mean validation loss of the five sub-networks minus the minimum of the \muE{} line of the same model and scale (for the proxy, its own line). Appendix~\ref{app:mutransfer-tiers} shows the sweeps for each sub-network.}
\label{fig:mutransfer}
\vspace{-2mm}
\end{figure}

\begin{table}[t]
\vspace{-3mm}
\caption{Effect of \muE{} on each elastic model at the $119$M--$285$M and $375$M--$906$M scales. For each model, the first row is the best of the $12$ ($13$ at $375$M--$906$M) SP configurations of Figure~\ref{fig:mutransfer} at that scale (lowest mean validation loss), and the w/ \muE{} row is its result in Table~\ref{tab:main-scales}, with the base values of Section~\ref{sec:setup} and no re-tuning. Table~\ref{tab:muinternal-full} gives the values with four decimals.}
\label{tab:muinternal}
\centering
\small
\setlength{\tabcolsep}{2pt}
\begin{tabular}{@{}c@{\hspace{3pt}}lcccccc@{\hspace{9pt}}cccccc@{}}
\toprule
 & & \multicolumn{6}{c}{Train loss $\downarrow$} & \multicolumn{6}{c}{Val.\ loss $\downarrow$} \\
\cmidrule(r{4pt}){3-8} \cmidrule(l{0pt}){9-14}
 & Method & Low & Medium & High & Extra & Max & Avg. & Low & Medium & High & Extra & Max & Avg. \\
\midrule
\multirow{6.4}{*}[1.1pt]{\rotatebox[origin=c]{90}{\strut $119$M--$285$M}} & MatFormer & 3.261 & 3.184 & 3.157 & \textbf{3.140} & \textbf{3.139} & 3.176 & 3.434 & 3.360 & 3.327 & 3.314 & \textbf{3.311} & 3.349 \\
 & \quad w/ \muE{} & \textbf{3.254} & \textbf{3.182} & \textbf{3.156} & 3.141 & 3.144 & \textbf{3.175} & \textbf{3.425} & \textbf{3.357} & \textbf{3.325} & \textbf{3.312} & 3.314 & \textbf{3.347} \\ \addlinespace[2pt]
 & Fully elastic & 3.272 & 3.192 & 3.154 & 3.130 & 3.126 & 3.175 & 3.445 & 3.368 & 3.325 & 3.303 & 3.297 & 3.348 \\
 & \quad w/ \muE{} & \textbf{3.257} & \textbf{3.178} & \textbf{3.145} & \textbf{3.123} & \textbf{3.120} & \textbf{3.165} & \textbf{3.429} & \textbf{3.354} & \textbf{3.315} & \textbf{3.295} & \textbf{3.290} & \textbf{3.336} \\ \addlinespace[2pt]
 & \textbf{\Ours} & 3.247 & 3.180 & 3.136 & 3.104 & 3.079 & 3.149 & 3.423 & 3.354 & 3.308 & 3.275 & 3.249 & 3.322 \\
 & \quad \textbf{w/ \muE{}} & \textbf{3.230} & \textbf{3.164} & \textbf{3.123} & \textbf{3.094} & \textbf{3.071} & \textbf{3.137} & \textbf{3.409} & \textbf{3.338} & \textbf{3.294} & \textbf{3.264} & \textbf{3.240} & \textbf{3.309} \\
\midrule
\multirow{6.4}{*}[1.1pt]{\rotatebox[origin=c]{90}{\strut $375$M--$906$M}} & MatFormer & 2.914 & 2.841 & 2.815 & 2.796 & 2.793 & 2.832 & 3.089 & 3.019 & 2.988 & 2.974 & 2.970 & 3.008 \\
 & \quad w/ \muE{} & \textbf{2.899} & \textbf{2.828} & \textbf{2.803} & \textbf{2.786} & \textbf{2.787} & \textbf{2.821} & \textbf{3.072} & \textbf{3.005} & \textbf{2.975} & \textbf{2.963} & \textbf{2.963} & \textbf{2.996} \\ \addlinespace[2pt]
 & Fully elastic & 2.906 & 2.842 & 2.813 & 2.790 & 2.786 & 2.827 & 3.081 & 3.020 & 2.987 & 2.969 & 2.963 & 3.004 \\
 & \quad w/ \muE{} & \textbf{2.879} & \textbf{2.817} & \textbf{2.793} & \textbf{2.773} & \textbf{2.771} & \textbf{2.807} & \textbf{3.052} & \textbf{2.995} & \textbf{2.966} & \textbf{2.952} & \textbf{2.948} & \textbf{2.983} \\ \addlinespace[2pt]
 & \textbf{\Ours} & 2.888 & 2.830 & 2.796 & 2.770 & \textbf{2.748} & 2.806 & 3.070 & 3.009 & 2.973 & 2.946 & \textbf{2.924} & 2.984 \\
 & \quad \textbf{w/ \muE{}} & \textbf{2.875} & \textbf{2.822} & \textbf{2.790} & \textbf{2.767} & 2.749 & \textbf{2.801} & \textbf{3.055} & \textbf{3.000} & \textbf{2.967} & \textbf{2.943} & 2.925 & \textbf{2.978} \\
\bottomrule
\end{tabular}
\vspace{-3mm}
\end{table}

\paragraph{Transferability of \muE{}.}
Under \muE{}, the proxy's optimum, $\eta_0=2\times10^{-3}$ and $\sigma_0=3.3\times10^{-3}$ (the base values of Section~\ref{sec:setup}), transfers to all three elastic models at both levels of \muT{} (Figure~\ref{fig:mutransfer}, Corollary~\ref{cor:transfer}).
At the $119$M--$285$M scale, both values are optimal or adjacent to the optimum ($\eta_0$: exact for MatFormer and the fully elastic model, adjacent for \Ours{}; $\sigma_0$: exact for the fully elastic model and \Ours{}, adjacent for MatFormer).
At the $375$M--$906$M scale, which enlarges $m$, $M$ or both and doubles the batch size and steps (Appendix~\ref{app:hparams}), the same holds, and $\sigma_0$ becomes exact for MatFormer too.
The base values are thus within $0.003$ nats of the best point of every \muE{} sweep and within $0.005$ nats for every sub-network (Appendix~\ref{app:mutransfer-tiers}).
Under SP, the optimum departs from the proxy's and shifts with scale, as Section~\ref{sec:param} anticipates, and at the base values SP is $0.03$--$0.11$ nats worse.
Weight decay transfers as well (Figure~\ref{fig:mutransfer-wd} in Appendix~\ref{app:transfer-extra}): under \muE{}, the mean loss of \Ours{} is flat over the same range as the proxy's, from $\lambda_0=10^{-3}$ to $10^{-1}$ (within $0.003$ nats), whereas SP is higher at every swept value.

\paragraph{Simultaneous maximal updates of \muE{}.}
Without width-dependent scaling, re-tuning one global value only trades one width against another (Proposition~\ref{prop:noniid}).
Table~\ref{tab:muinternal} tests this limitation at the $119$M--$285$M and $375$M--$906$M scales, giving SP the best of its $12$ ($13$ at $375$M--$906$M) configurations in Figure~\ref{fig:mutransfer} and \muE{} only the base values.
At the $119$M--$285$M scale, for the fully elastic model and \Ours{}, \muE{} still gives lower training and validation losses at every tier, by $0.011$ and $0.013$ nats in mean validation loss, as it keeps all widths maximally updated at once (Theorem~\ref{thm:shell}).
At the $375$M--$906$M scale, tuned SP remains \mbox{$0.006$--$0.021$} nats worse than \muE{} for all three models (Appendix~\ref{app:mutransfer-tiers}).
Under tuned SP, \Ours{} still beats both baselines by $0.026$--$0.027$ and $0.020$--$0.024$ nats at the two scales, so its lead is not due to \muE{} alone.
\vspace{-2mm}
\subsection{Architecture comparison and search}
\label{sec:arch-compare}
\vspace{-2mm}

\begin{figure}[t]
\vspace{-4mm}
\centering
\includegraphics[width=\linewidth]{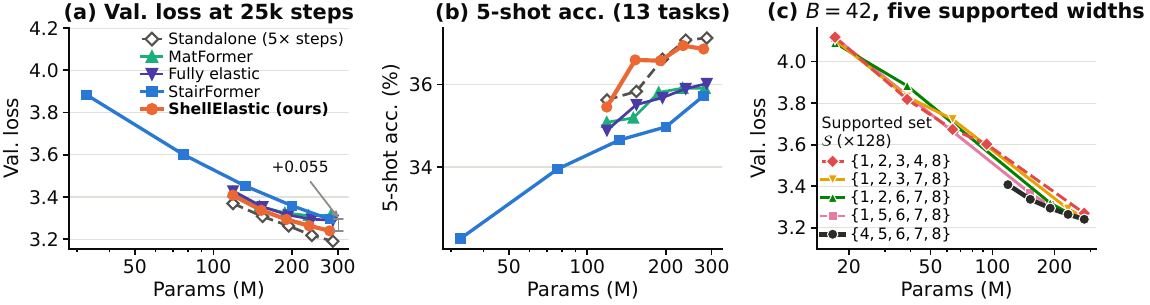}
\caption{Architecture comparison and search against parameters at step $25{,}000$. (a,\,b)~Comparison with StairFormer: validation loss and average 5-shot accuracy (\%) over $13$ tasks. (c)~Architecture search: validation loss of each supported width. Full results are in Appendices~\ref{app:stairformer} and~\ref{app:shape-all}.}
\label{fig:stairformer}
\label{fig:arch-placement}
\vspace{-3mm}
\end{figure}

\paragraph{Comparison with StairFormer.}
StairFormer~\citep{trost2026fullynested} nests sub-networks as prefixes of one forward pass; it uses equal-width groups, cumulative prefix normalization and one shared readout.
Here it has $g=208$ and \mbox{$\mathcal S=\{1,2,3,4,5\}$} in our notation, whereas \Ours{} has a $512$-wide core and four $128$-wide shells (\mbox{$\mathcal S=\{4,5,6,7,8\}$}), group-local normalization and per-width readouts.
Like \Ours{}, it trains all sub-networks at every step; we give it the same \muE{} and HPs (Section~\ref{sec:setup}).
Its narrower sub-networks are much smaller ($32.6$M to $200.4$M); only its Max matches \Ours{} in size.
At the parameter count of each \Ours{} sub-network, its validation loss is $0.055$ to $0.083$ nats higher (Figure~\ref{fig:stairformer}a; curves in Figure~\ref{fig:stairformer-curves}), and even its Max, with $1.5\times$ the parameters, is $0.001$ nats above the High of \Ours{}.
At Max, \Ours{} also leads in average 5-shot accuracy, $36.9$ against $35.7$ (Figure~\ref{fig:stairformer}b and Table~\ref{tab:stairformer-tasks}).

\paragraph{Architecture search at equal parameters and width count.}
Since $\mathcal S$ fixes both the extractable widths and the parameter count (a hidden matrix keeps $B(\mathcal S)=\sum_k(s_k-s_{k-1})s_k$ of its $M^2$ blocks of size $g\times g$), we ask where extra width should go when $B$ and the number of supported widths are fixed.
At $B=42$ with five supported widths, each candidate set has four groups of width $g$ and one of width $4g$ and differs only in where this wide group sits (block masks in Figure~\ref{fig:arch-masks}); at $B=39$ with six, the wide group has width $3g$ (Appendix~\ref{app:shape-all}).
We train every supported width at every step and compare the sets at each width they share with equal parameter counts (Figure~\ref{fig:arch-placement}c).
The core is the best place for the wide group: the core placement has the lowest loss, or is within $0.006$ nats of it, at every shared width, whereas a shell placement makes the first width containing the wide group $0.025$ to $0.052$ nats worse.
This motivates the wide core and thin shells of \Ours{}.

\vspace{-2mm}
\subsection{Ablation studies}
\label{sec:ablation}
\vspace{-2mm}

\begin{table}[t]
\vspace{-4mm}
\caption{Ablations at the $119$M--$285$M scale: final validation loss and mean 5-shot accuracy over $13$ downstream tasks and the five tiers. Each block ends with the default, and bold marks the best value of each column within a block. Appendix~\ref{app:ablation-studies} gives all ablations.}
\label{tab:ablations-main}
\centering
\small
\setlength{\tabcolsep}{3pt}
\begin{tabular}{@{}lcccccc@{\hspace{8pt}}c@{}}
\toprule
 & \multicolumn{6}{c}{Val.\ loss $\downarrow$} & 5-shot acc.\ $\uparrow$ \\
\cmidrule(r{4pt}){2-7} \cmidrule(l{0pt}){8-8}
Setting & Low & Medium & High & Extra & Max & Avg. & Avg. \\
\midrule
\multicolumn{8}{@{}l}{\emph{Selected sub-networks per step $\mathcal A_\tau$}} \\
Sandwich: Low, Max and one other & \textbf{3.4019} & 3.3617 & 3.3183 & 3.2816 & \textbf{3.2395} & 3.3206 & 36.1 \\
Random: one of the five sub-networks & 3.4311 & 3.3786 & 3.3542 & 3.3410 & 3.3356 & 3.3681 & 34.8 \\
Max + random: Max and one other & 3.4944 & 3.4079 & 3.3447 & 3.2956 & 3.2397 & 3.3564 & 35.5 \\
All: all five sub-networks (default) & 3.4086 & \textbf{3.3375} & \textbf{3.2944} & \textbf{3.2641} & 3.2403 & \textbf{3.3090} & \textbf{36.5} \\
\midrule
\multicolumn{8}{@{}l}{\emph{In-place distillation from Max}} \\
Forward KL, weight $0.1$ & 3.4047 & 3.3355 & 3.2937 & 3.2641 & 3.2410 & 3.3078 & 36.4 \\
Forward KL, weight $0.5$ & 3.3920 & 3.3287 & 3.2931 & 3.2675 & 3.2482 & 3.3059 & \textbf{36.6} \\
Forward KL, weight $1$ & \textbf{3.3748} & \textbf{3.3184} & \textbf{3.2873} & 3.2653 & 3.2483 & \textbf{3.2988} & 36.1 \\
Reverse KL, weight $0.5$ & 3.3868 & 3.3261 & 3.2887 & 3.2632 & 3.2437 & 3.3017 & 35.9 \\
Symmetric KL, weight $0.5$ & 3.3875 & 3.3256 & 3.2901 & \textbf{3.2629} & 3.2435 & 3.3019 & 36.0 \\
None (default) & 3.4086 & 3.3375 & 3.2944 & 3.2641 & \textbf{3.2403} & 3.3090 & 36.5 \\
\midrule
\multicolumn{8}{@{}l}{\emph{Ensembles of the five sub-networks}} \\
Learned per-token width router & 3.7735 & 3.5611 & 3.4137 & 3.3055 & \textbf{3.2269} & 3.4562 & 35.1 \\
Distillation from loss-weighted ensemble & \textbf{3.3902} & \textbf{3.3275} & \textbf{3.2926} & 3.2673 & 3.2471 & \textbf{3.3049} & \textbf{37.0} \\
Distillation from soft oracle & 3.3982 & 3.3350 & 3.2980 & 3.2709 & 3.2501 & 3.3105 & 36.3 \\
None (default) & 3.4086 & 3.3375 & 3.2944 & \textbf{3.2641} & 3.2403 & 3.3090 & 36.5 \\
\midrule
\multicolumn{8}{@{}l}{\emph{Stage-residual gradient scaling}} \\
$\gamma=0$ & 3.4053 & 3.3740 & 3.3530 & 3.3348 & 3.3115 & 3.3557 & 36.1 \\
$\gamma=0.3$ & \textbf{3.3806} & \textbf{3.3317} & 3.2991 & 3.2762 & 3.2485 & \textbf{3.3072} & 36.6 \\
$\gamma=0.5$ & 3.3939 & 3.3360 & 3.2967 & 3.2700 & 3.2447 & 3.3083 & 36.0 \\
$\gamma=0.8$ & 3.4030 & 3.3365 & 3.2946 & 3.2658 & 3.2419 & 3.3084 & \textbf{37.0} \\
$\gamma=1$ (default) & 3.4086 & 3.3375 & \textbf{3.2944} & \textbf{3.2641} & \textbf{3.2403} & 3.3090 & 36.5 \\
\bottomrule
\end{tabular}
\vspace{-3mm}
\end{table}

All ablations use the $119$M--$285$M supernet of Section~\ref{sec:setup} and change one setting at a time; the default results are those of Table~\ref{tab:main-scales}.
Of the settings we ablate, sub-network selection, in-place distillation, ensembles of the sub-networks, stage-residual gradient scaling and the trunk gradient of the narrower widths can change the mean validation loss by $0.010$ to $0.20$ nats, whereas the loss weights, normalization, readouts, base width and head dimension, and optimization HPs change it by at most $0.007$ nats.
We discuss the first four below (Table~\ref{tab:ablations-main}) and the others in Appendix~\ref{app:ablation-studies}.

\paragraph{Selected sub-networks per step.}
Besides training all five sub-networks at every step (All, the default), we test three rules for $\mathcal A_\tau$ in Equation~\ref{eq:loss}: \emph{Sandwich} (Low and Max plus one other), \emph{Max + random} (Max plus one other) and \emph{Random} (one sub-network, as in MatFormer), drawing the random ones uniformly at random.
The loss of each sub-network follows how often it is trained: Sandwich and Max + random match All at Max but are up to $0.024$ and $0.086$ nats worse at those they select less often, and Random is worse at every sub-network, by $0.095$ nats at Max.
All gives the lowest mean loss, $0.012$ to $0.059$ nats below the others, and the highest mean 5-shot accuracy.

\paragraph{In-place distillation.}
Elastic recipes often distill the narrower sub-networks from the widest one~\citep{yu2019universally}.
We add to the losses of Low to Extra a KL divergence toward the output distribution of Max (forward KL with weights $0.1$ to $1$; reverse and symmetric KL with weight $0.5$).
This lowers the loss of Low by up to $0.034$ nats and the mean loss by up to $0.010$ nats, the largest gain of any ablation, but raises that of Max by up to $0.008$ nats, improves the mean 5-shot accuracy by at most $0.1$ points and makes a step $2.2$ to $2.6$ times as long, so we do not use it.

\begin{figure}[t]
\vspace{-4mm}
\centering
\includegraphics[width=\linewidth]{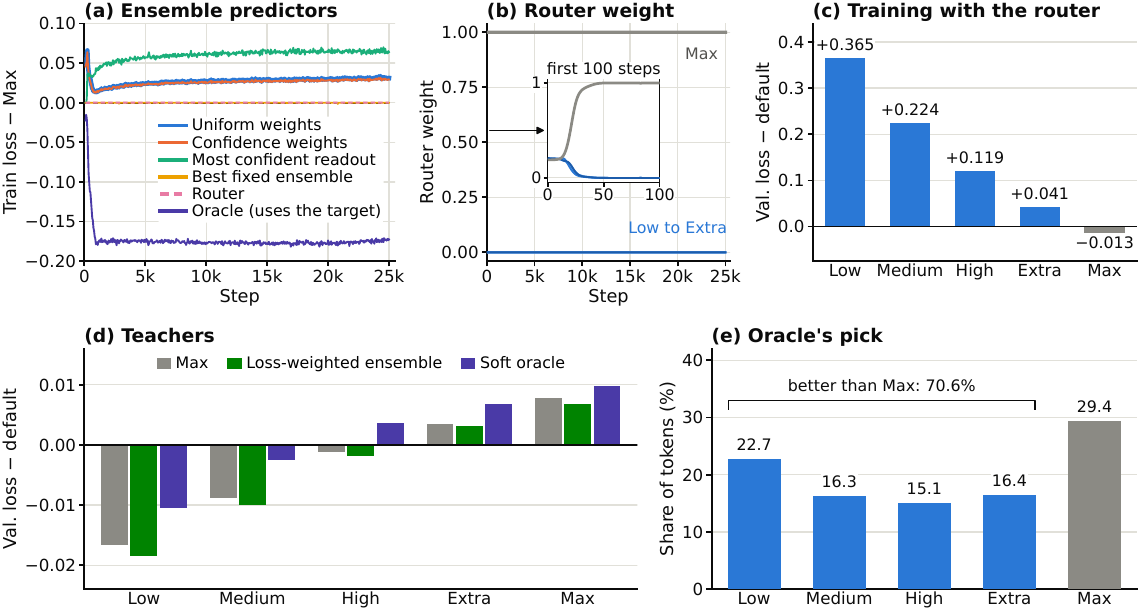}
\caption{Ensembles of the sub-networks. (a)~Train loss minus that of Max during training for the five predictors and the oracle, which uses the target. (b)~Router weight on each tier. (c)~Validation loss of each tier minus the default when training with the router. (d)~Validation loss of each tier minus the default when distilling toward Max, the loss-weighted ensemble or the soft oracle. (e)~Share of held-out tokens on which each tier assigns the target the highest probability. The default is the mean of the three \Ours{} runs of Table~\ref{tab:main-scales}.}
\label{fig:ensemble}
\vspace{-3mm}
\end{figure}

\paragraph{Ensembles of the sub-networks.}
One trunk pass yields the output distributions $p_s$ of all five sub-networks, which may err on different tokens, so a per-token ensemble $\sum_s\pi_sp_s$ could beat Max as a predictor at little extra cost, or replace it as a stronger teacher (details in Appendix~\ref{app:ablation-studies}).
\begin{itemize}[leftmargin=*,topsep=2pt,itemsep=1pt]
\item \emph{Ensemble as predictors.} We evaluate four ensembles of a trained supernet: uniform weights, confidence weights ($\pi_s\propto\max_vp_s(v)$), the most confident readout per token, and the best fixed ensemble, whose weights minimize the loss.
We also train a router, an MLP that maps the last-layer hidden state of Max to per-token weights $\pi_s$, together with its own supernet on the $\pi$-weighted sum of the five losses.
None of the ensemble methods beat Max alone (Figure~\ref{fig:ensemble}, Table~\ref{tab:ensemble-mixtures}): the best fixed ensemble only matches it, the other three are $0.03$ to $0.065$ nats worse in train loss, and the router puts all weight on Max within the first $50$ steps (Figure~\ref{fig:ensemble}b), so Max improves by $0.013$ nats while the other sub-networks, now barely trained, lose up to $0.365$ nats (Figure~\ref{fig:ensemble}c).
\item \emph{Oracle.} An oracle that picks per token the readout with the highest target probability is $0.19$ nats below Max (Table~\ref{tab:ensemble-oracle}), as a narrower sub-network is better on $71\%$ of tokens (Figure~\ref{fig:ensemble}e).
Two supernets trained separately with the same settings differ far more: averaging their Max lowers the loss by $0.056$ nats at twice the cost, and their oracle gains $0.23$ nats against $0.19$ for the five sub-networks on held-out text.
Our ensembles show that finding the better sub-network per token without the target is hard, but we believe a better ensemble could close part of this gap, a promising way to substantially improve \Ours{} at almost no extra cost.
\item \emph{Ensemble as teachers.} Every sub-network, including Max, distills (forward KL, weight $0.5$) toward the loss-weighted ensemble ($\pi_s\propto\omega_s$), which predicts $0.012$ nats worse than Max, or the soft oracle ($\pi_s\propto p_s(y)^\beta$), a smoothed oracle that predicts $0.14$ nats better (Table~\ref{tab:ensemble-mixtures}).
Yet neither teacher helps Max, not even the soft oracle: like in-place distillation with the same KL (Figure~\ref{fig:ensemble}d), they improve Low by up to $0.018$ nats and worsen Max by up to $0.010$ nats.
\end{itemize}

\paragraph{Stage-residual gradient scaling.}
Since the readout is linear (Equation~\ref{eq:head}), the logits of width $s_k$ are a sum of $k$ terms, one from the last-layer coordinates of each of its groups, so the supernet can be viewed as a sequence of stages: the core gives a base prediction and each shell adds a correction, as in residual learning~\citep{he2016deep}.
To make each shell learn this correction, we keep the forward pass but, in the backward pass, scale by $\gamma$ the gradient that each sub-network's loss sends to the trunk through the narrower sub-network it contains; the gradient through its own shell is unchanged (details in Appendix~\ref{app:ablation-studies}).
Thus $\gamma=1$ is the usual joint training of Equation~\ref{eq:loss}, while with $\gamma=0$ each loss reaches the trunk only through its own shell, which therefore learns only to correct the prediction of the narrower sub-network.
A $\gamma$ of $0.3$ to $0.8$ lowers the loss of Low by up to $0.028$ nats but raises those of Extra and Max by up to $0.012$ nats, leaving the mean loss within $0.002$ nats, whereas $\gamma=0$ raises the mean loss by $0.047$ nats, as the last-layer features of the narrower groups no longer adapt to the wider losses; we keep $\gamma=1$.
The stage view still points to a property that MatFormer and fully elastic models lack: since no group reads a later one and every group is normalized on its own, a new shell can be appended to a trained supernet and trained as a new stage with all trained weights frozen, so every trained sub-network keeps its exact outputs.

\vspace{-2mm}
\section{Conclusion}
\label{sec:conclusion}
\vspace{-2mm}
In \Ours{}, one full-width pass gives the exact hidden states of every sub-network, so joint training costs about as much as the widest one alone.
With \muE{} and proxy-tuned HPs, one run yields maximally updated sub-networks that beat MatFormer and fully elastic models at matched sizes and approach standalone models that together take five times the steps.
Our experiments favor training all sub-networks at every step and a wide core, and suggest two directions: better sub-network ensembles, whose oracle is $0.19$ nats below Max, and appending shells to a trained supernet.

\subsection*{AI use statement}
In this work, we used generative AI tools to provide critical ingredients for proving our mathematical claims, to assist in writing the proofs (Appendices~\ref{app:prefixproof} and~\ref{app:proofs}), and to implement parts of our method.
Additionally, we used generative AI tools to draft parts of the paper, edit it for readability, generate and check the code for the result figures and tables, and cross-check the statements of the paper against the training code.
We have reviewed all AI-assisted work. We take responsibility for the final content of this work, including text, claims, or artifacts produced with the aid of generative AI.

\subsection*{Ethics statement}
This work studies training methods for language models on publicly available web text and does not involve human subjects.
Elastic models reduce the cost of serving several model sizes, which lowers the energy cost of deployment; they inherit the biases of their training data like any language model.

\subsection*{Reproducibility statement}
Section~\ref{sec:setup} and Appendices~\ref{app:hparams} and~\ref{app:eval} list every architectural, optimization and evaluation setting; Table~\ref{tab:widthparam} and Appendix~\ref{app:other-param} give the complete parameterization rules; Appendices~\ref{app:prefixproof} and~\ref{app:proofs} contain the proofs, and Appendix~\ref{app:prefix-check} verifies Proposition~\ref{prop:prefix} numerically; Appendices~\ref{app:main-full}, \ref{app:transfer-extra}, \ref{app:arch-compare} and~\ref{app:ablation-studies} give the complete main, transfer, architecture-search and ablation results.
The training code, configurations and run logs used for all tables will be released.

\bibliography{references}

\begin{thebibliography}{45}
\providecommand{\natexlab}[1]{#1}
\providecommand{\url}[1]{\texttt{#1}}
\expandafter\ifx\csname urlstyle\endcsname\relax
  \providecommand{\doi}[1]{doi: #1}\else
  \providecommand{\doi}{doi: \begingroup \urlstyle{rm}\Url}\fi

\bibitem[Bisk et~al.(2020)Bisk, Zellers, Gao, Choi, et~al.]{bisk2020piqa}
Yonatan Bisk, Rowan Zellers, Jianfeng Gao, Yejin Choi, et~al.
\newblock Piqa: Reasoning about physical commonsense in natural language.
\newblock In \emph{Proceedings of the AAAI conference on artificial
  intelligence}, volume~34, pp.\  7432--7439, 2020.

\bibitem[Black et~al.(2022)Black, Biderman, Hallahan, Anthony, Gao, Golding,
  He, Leahy, McDonell, Phang, et~al.]{black2022gptneox}
Sidney Black, Stella Biderman, Eric Hallahan, Quentin Anthony, Leo Gao,
  Laurence Golding, Horace He, Connor Leahy, Kyle McDonell, Jason Phang, et~al.
\newblock Gpt-neox-20b: An open-source autoregressive language model.
\newblock In \emph{Proceedings of BigScience Episode\# 5--Workshop on
  Challenges \& Perspectives in Creating Large Language Models}, pp.\  95--136,
  2022.

\bibitem[Bordelon et~al.(2024)Bordelon, Noci, Li, Hanin, and
  Pehlevan]{bordelon2024depthwise}
Blake Bordelon, Lorenzo Noci, Mufan Li, Boris Hanin, and Cengiz Pehlevan.
\newblock Depthwise hyperparameter transfer in residual networks: Dynamics and
  scaling limit.
\newblock In \emph{International Conference on Learning Representations},
  volume 2024, pp.\  22088--22127, 2024.

\bibitem[Cai et~al.(2020)Cai, Gan, Wang, Zhang, and Han]{cai2020onceforall}
Han Cai, Chuang Gan, Tianzhe Wang, Zhekai Zhang, and Song Han.
\newblock Once-for-all: Train one network and specialize it for efficient
  deployment.
\newblock In \emph{International Conference on Learning Representations}, 2020.
\newblock URL \url{https://openreview.net/forum?id=HylxE1HKwS}.

\bibitem[Cai et~al.(2024)Cai, Muralidharan, Heinrich, Yin, Wang, Kautz, and
  Molchanov]{cai2024flextron}
Ruisi Cai, Saurav Muralidharan, Greg Heinrich, Hongxu Yin, Zhangyang Wang, Jan
  Kautz, and Pavlo Molchanov.
\newblock Flextron: Many-in-one flexible large language model.
\newblock In Ruslan Salakhutdinov, Zico Kolter, Katherine Heller, Adrian
  Weller, Nuria Oliver, Jonathan Scarlett, and Felix Berkenkamp (eds.),
  \emph{Proceedings of the 41st International Conference on Machine Learning},
  volume 235 of \emph{Proceedings of Machine Learning Research}, pp.\
  5298--5311. PMLR, 21--27 Jul 2024.
\newblock URL \url{https://proceedings.mlr.press/v235/cai24e.html}.

\bibitem[Clark et~al.(2019)Clark, Lee, Chang, Kwiatkowski, Collins, and
  Toutanova]{clark2019boolq}
Christopher Clark, Kenton Lee, Ming-Wei Chang, Tom Kwiatkowski, Michael
  Collins, and Kristina Toutanova.
\newblock Boolq: Exploring the surprising difficulty of natural yes/no
  questions.
\newblock In \emph{Proceedings of the 2019 Conference of the North American
  Chapter of the Association for Computational Linguistics: Human Language
  Technologies, Volume 1 (Long and Short Papers)}, pp.\  2924--2936, 2019.

\bibitem[Clark et~al.(2018)Clark, Cowhey, Etzioni, Khot, Sabharwal, Schoenick,
  and Tafjord]{clark2018arc}
Peter Clark, Isaac Cowhey, Oren Etzioni, Tushar Khot, Ashish Sabharwal, Carissa
  Schoenick, and Oyvind Tafjord.
\newblock Think you have solved question answering? try arc, the ai2 reasoning
  challenge.
\newblock \emph{arXiv preprint arXiv:1803.05457}, 2018.

\bibitem[Devlin et~al.(2019)Devlin, Chang, Lee, and Toutanova]{devlin2019bert}
Jacob Devlin, Ming-Wei Chang, Kenton Lee, and Kristina Toutanova.
\newblock Bert: Pre-training of deep bidirectional transformers for language
  understanding.
\newblock In \emph{Proceedings of the 2019 conference of the North American
  chapter of the association for computational linguistics: human language
  technologies, volume 1 (long and short papers)}, pp.\  4171--4186, 2019.

\bibitem[Devvrit et~al.(2024)Devvrit, Kudugunta, Kusupati, Dettmers, Chen,
  Dhillon, Tsvetkov, Hajishirzi, Kakade, Farhadi, et~al.]{devvrit2024matformer}
Fnu Devvrit, Sneha Kudugunta, Aditya Kusupati, Tim Dettmers, Kaifeng Chen,
  Inderjit~S Dhillon, Yulia Tsvetkov, Hannaneh Hajishirzi, Sham~M Kakade, Ali
  Farhadi, et~al.
\newblock Matformer: Nested transformer for elastic inference.
\newblock In \emph{The Thirty-eighth Annual Conference on Neural Information
  Processing Systems}, 2024.

\bibitem[Dey et~al.(2026)Dey, Zhang, Noci, Li, Bordelon, Bergsma, Pehlevan,
  Hanin, and Hestness]{dey2025completep}
Nolan Dey, Bin Zhang, Lorenzo Noci, Mufan Li, Blake Bordelon, Shane Bergsma,
  Cengiz Pehlevan, Boris Hanin, and Joel Hestness.
\newblock Don't be lazy: Completep enables compute-efficient deep transformers.
\newblock \emph{Advances in Neural Information Processing Systems},
  38:\penalty0 137707--137739, 2026.

\bibitem[Dodge et~al.(2021)Dodge, Sap, Marasovi{\'c}, Agnew, Ilharco,
  Groeneveld, Mitchell, and Gardner]{dodge2021c4}
Jesse Dodge, Maarten Sap, Ana Marasovi{\'c}, William Agnew, Gabriel Ilharco,
  Dirk Groeneveld, Margaret Mitchell, and Matt Gardner.
\newblock Documenting large webtext corpora: A case study on the colossal clean
  crawled corpus.
\newblock In \emph{Proceedings of the 2021 conference on empirical methods in
  natural language processing}, pp.\  1286--1305, 2021.

\bibitem[Everett et~al.(2024)Everett, Xiao, Wortsman, Alemi, Novak, Liu, Gur,
  Sohl-Dickstein, Kaelbling, Lee, and Pennington]{everett2024scaling}
Katie~E Everett, Lechao Xiao, Mitchell Wortsman, Alexander~A Alemi, Roman
  Novak, Peter~J Liu, Izzeddin Gur, Jascha Sohl-Dickstein, Leslie~Pack
  Kaelbling, Jaehoon Lee, and Jeffrey Pennington.
\newblock Scaling exponents across parameterizations and optimizers.
\newblock In Ruslan Salakhutdinov, Zico Kolter, Katherine Heller, Adrian
  Weller, Nuria Oliver, Jonathan Scarlett, and Felix Berkenkamp (eds.),
  \emph{Proceedings of the 41st International Conference on Machine Learning},
  volume 235 of \emph{Proceedings of Machine Learning Research}, pp.\
  12666--12700. PMLR, 21--27 Jul 2024.
\newblock URL \url{https://proceedings.mlr.press/v235/everett24a.html}.

\bibitem[{Gemma Team}(2025)]{gemmateam2025gemma3n}
{Gemma Team}.
\newblock Gemma 3n.
\newblock \url{https://ai.google.dev/gemma/docs/gemma-3n}, 2025.

\bibitem[Gordon et~al.(2012)Gordon, Kozareva, and Roemmele]{gordon2012copa}
Andrew Gordon, Zornitsa Kozareva, and Melissa Roemmele.
\newblock Semeval-2012 task 7: Choice of plausible alternatives: An evaluation
  of commonsense causal reasoning.
\newblock In \emph{* SEM 2012: The First Joint Conference on Lexical and
  Computational Semantics--Volume 1: Proceedings of the main conference and the
  shared task, and Volume 2: Proceedings of the Sixth International Workshop on
  Semantic Evaluation (SemEval 2012)}, pp.\  394--398, 2012.

\bibitem[He et~al.(2016)He, Zhang, Ren, and Sun]{he2016deep}
Kaiming He, Xiangyu Zhang, Shaoqing Ren, and Jian Sun.
\newblock Deep residual learning for image recognition.
\newblock In \emph{Proceedings of the IEEE conference on computer vision and
  pattern recognition}, pp.\  770--778, 2016.

\bibitem[Hendrycks et~al.(2020)Hendrycks, Burns, Basart, Zou, Mazeika, Song,
  and Steinhardt]{hendrycks2021mmlu}
Dan Hendrycks, Collin Burns, Steven Basart, Andy Zou, Mantas Mazeika, Dawn
  Song, and Jacob Steinhardt.
\newblock Measuring massive multitask language understanding.
\newblock \emph{arXiv preprint arXiv:2009.03300}, 2020.

\bibitem[Hinton et~al.(2015)Hinton, Vinyals, and Dean]{hinton2015distilling}
Geoffrey Hinton, Oriol Vinyals, and Jeff Dean.
\newblock Distilling the knowledge in a neural network.
\newblock \emph{arXiv preprint arXiv:1503.02531}, 2015.

\bibitem[Hoffmann et~al.(2022)Hoffmann, Borgeaud, Mensch, Buchatskaya, Cai,
  Rutherford, de~Las~Casas, Hendricks, Welbl, Clark, Hennigan, Noland,
  Millican, van~den Driessche, Damoc, Guy, Osindero, Simonyan, Elsen, Vinyals,
  Rae, and Sifre]{hoffmann2022chinchilla}
Jordan Hoffmann, Sebastian Borgeaud, Arthur Mensch, Elena Buchatskaya, Trevor
  Cai, Eliza Rutherford, Diego de~Las~Casas, Lisa~Anne Hendricks, Johannes
  Welbl, Aidan Clark, Tom Hennigan, Eric Noland, Katie Millican, George van~den
  Driessche, Bogdan Damoc, Aurelia Guy, Simon Osindero, Karen Simonyan, Erich
  Elsen, Oriol Vinyals, Jack~W. Rae, and Laurent Sifre.
\newblock Training compute-optimal large language models.
\newblock In \emph{Proceedings of the 36th International Conference on Neural
  Information Processing Systems}, NIPS '22, Red Hook, NY, USA, 2022. Curran
  Associates Inc.
\newblock ISBN 9781713871088.

\bibitem[Hou et~al.(2020)Hou, Huang, Shang, Jiang, Chen, and
  Liu]{hou2020dynabert}
Lu~Hou, Zhiqi Huang, Lifeng Shang, Xin Jiang, Xiao Chen, and Qun Liu.
\newblock Dynabert: Dynamic bert with adaptive width and depth.
\newblock \emph{Advances in Neural Information Processing Systems},
  33:\penalty0 9782--9793, 2020.

\bibitem[Jacot et~al.(2018)Jacot, Gabriel, and Hongler]{jacot2018ntk}
Arthur Jacot, Franck Gabriel, and Cl{\'e}ment Hongler.
\newblock Neural tangent kernel: Convergence and generalization in neural
  networks.
\newblock \emph{Advances in neural information processing systems}, 31, 2018.

\bibitem[Kingma \& Ba(2014)Kingma and Ba]{kingma2015adam}
Diederik~P Kingma and Jimmy Ba.
\newblock Adam: A method for stochastic optimization.
\newblock \emph{arXiv preprint arXiv:1412.6980}, 2014.

\bibitem[Kusupati et~al.(2022)Kusupati, Bhatt, Rege, Wallingford, Sinha,
  Ramanujan, Howard-Snyder, Chen, Kakade, Jain, et~al.]{kusupati2022mrl}
Aditya Kusupati, Gantavya Bhatt, Aniket Rege, Matthew Wallingford, Aditya
  Sinha, Vivek Ramanujan, William Howard-Snyder, Kaifeng Chen, Sham Kakade,
  Prateek Jain, et~al.
\newblock Matryoshka representation learning.
\newblock \emph{Advances in Neural Information Processing Systems},
  35:\penalty0 30233--30249, 2022.

\bibitem[Li et~al.(2024)Li, Fang, Smyrnis, Ivgi, Jordan, Gadre, Bansal, Guha,
  Keh, Arora, et~al.]{li2024dclm}
Jeffrey Li, Alex Fang, Georgios Smyrnis, Maor Ivgi, Matt Jordan, Samir Gadre,
  Hritik Bansal, Etash Guha, Sedrick Keh, Kushal Arora, et~al.
\newblock Datacomp-lm: In search of the next generation of training sets for
  language models.
\newblock \emph{Advances in Neural Information Processing Systems},
  37:\penalty0 14200--14282, 2024.

\bibitem[Loshchilov \& Hutter(2019)Loshchilov and Hutter]{loshchilov2019adamw}
Ilya Loshchilov and Frank Hutter.
\newblock Decoupled weight decay regularization.
\newblock In \emph{International Conference on Learning Representations}, 2019.
\newblock URL \url{https://openreview.net/forum?id=Bkg6RiCqY7}.

\bibitem[Mei et~al.(2018)Mei, Montanari, and Nguyen]{mei2018meanfield}
Song Mei, Andrea Montanari, and Phan-Minh Nguyen.
\newblock A mean field view of the landscape of two-layer neural networks.
\newblock \emph{Proceedings of the National Academy of Sciences}, 115\penalty0
  (33):\penalty0 E7665--E7671, 2018.

\bibitem[Mlodozeniec et~al.(2026)Mlodozeniec, Ablin, B{\'e}thune, Busbridge,
  Klein, Ramapuram, et~al.]{mlodozeniec2025completed}
Bruno Mlodozeniec, Pierre Ablin, Louis B{\'e}thune, Dan Busbridge, Michal
  Klein, Jason Ramapuram, et~al.
\newblock Completed hyperparameter transfer across modules, width, depth, batch
  and duration.
\newblock In \emph{International Conference on Learning Representations},
  volume 2026, pp.\  45300--45325, 2026.

\bibitem[Muennighoff et~al.(2025)Muennighoff, Soldaini, Groeneveld, Lo,
  Morrison, Min, Shi, Walsh, Tafjord, Lambert, et~al.]{muennighoff2025olmoe}
Niklas Muennighoff, Luca Soldaini, Dirk Groeneveld, Kyle Lo, Jacob Morrison,
  Sewon Min, Weijia Shi, Pete Walsh, Oyvind Tafjord, Nathan Lambert, et~al.
\newblock Olmoe: Open mixture-of-experts language models.
\newblock In \emph{International Conference on Learning Representations},
  volume 2025, pp.\  62061--62121, 2025.

\bibitem[Paperno et~al.(2016)Paperno, Kruszewski, Lazaridou, Pham, Bernardi,
  Pezzelle, Baroni, Boleda, and Fern{\'a}ndez]{paperno2016lambada}
Denis Paperno, Germ{\'a}n Kruszewski, Angeliki Lazaridou, Ngoc-Quan Pham,
  Raffaella Bernardi, Sandro Pezzelle, Marco Baroni, Gemma Boleda, and Raquel
  Fern{\'a}ndez.
\newblock The lambada dataset: Word prediction requiring a broad discourse
  context.
\newblock In \emph{Proceedings of the 54th annual meeting of the association
  for computational linguistics (volume 1: Long papers)}, pp.\  1525--1534,
  2016.

\bibitem[Patel et~al.(2021)Patel, Bhattamishra, and Goyal]{patel2021svamp}
Arkil Patel, Satwik Bhattamishra, and Navin Goyal.
\newblock Are nlp models really able to solve simple math word problems?
\newblock In \emph{Proceedings of the 2021 conference of the North American
  chapter of the association for computational linguistics: human language
  technologies}, pp.\  2080--2094, 2021.

\bibitem[Raffel et~al.(2020)Raffel, Shazeer, Roberts, Lee, Narang, Matena,
  Zhou, Li, and Liu]{raffel2020t5}
Colin Raffel, Noam Shazeer, Adam Roberts, Katherine Lee, Sharan Narang, Michael
  Matena, Yanqi Zhou, Wei Li, and Peter~J Liu.
\newblock Exploring the limits of transfer learning with a unified text-to-text
  transformer.
\newblock \emph{Journal of machine learning research}, 21\penalty0
  (140):\penalty0 1--67, 2020.

\bibitem[Sakaguchi et~al.(2021)Sakaguchi, Bras, Bhagavatula, and
  Choi]{sakaguchi2021winogrande}
Keisuke Sakaguchi, Ronan~Le Bras, Chandra Bhagavatula, and Yejin Choi.
\newblock Winogrande: An adversarial winograd schema challenge at scale.
\newblock \emph{Communications of the ACM}, 64\penalty0 (9):\penalty0 99--106,
  2021.

\bibitem[Shazeer(2020)]{shazeer2020glu}
Noam Shazeer.
\newblock Glu variants improve transformer.
\newblock \emph{arXiv preprint arXiv:2002.05202}, 2020.

\bibitem[Sohl-Dickstein et~al.(2020)Sohl-Dickstein, Novak, Schoenholz, and
  Lee]{sohldickstein2020standard}
Jascha Sohl-Dickstein, Roman Novak, Samuel~S Schoenholz, and Jaehoon Lee.
\newblock On the infinite width limit of neural networks with a standard
  parameterization.
\newblock \emph{arXiv preprint arXiv:2001.07301}, 2020.

\bibitem[Su et~al.(2024)Su, Ahmed, Lu, Pan, Bo, and Liu]{su2024rope}
Jianlin Su, Murtadha Ahmed, Yu~Lu, Shengfeng Pan, Wen Bo, and Yunfeng Liu.
\newblock Roformer: Enhanced transformer with rotary position embedding.
\newblock \emph{Neurocomputing}, 568:\penalty0 127063, 2024.

\bibitem[Taghibakhshi et~al.(2025)Taghibakhshi, Sreenivas, Muralidharan, Cai,
  Chochowski, Mahabaleshwarkar, Suhara, Olabiyi, Korzekwa, Patwary,
  et~al.]{taghibakhshi2025nemotronelastic}
Ali Taghibakhshi, Sharath~Turuvekere Sreenivas, Saurav Muralidharan, Ruisi Cai,
  Marcin Chochowski, Ameya~Sunil Mahabaleshwarkar, Yoshi Suhara, Oluwatobi
  Olabiyi, Daniel Korzekwa, Mostofa Patwary, et~al.
\newblock Nemotron elastic: Towards efficient many-in-one reasoning llms.
\newblock \emph{arXiv preprint arXiv:2511.16664}, 2025.

\bibitem[Trost et~al.(2026)Trost, Yun, Cooper, Orlanski, and
  Sala]{trost2026fullynested}
Avi Trost, Alexander Yun, John Cooper, Gabriel Orlanski, and Frederic Sala.
\newblock Fully nested transformers.
\newblock In \emph{AdaptFM: Resource-Adaptive Foundation Model Inference},
  2026.

\bibitem[Vaswani et~al.(2017)Vaswani, Shazeer, Parmar, Uszkoreit, Jones, Gomez,
  Kaiser, and Polosukhin]{vaswani2017attention}
Ashish Vaswani, Noam Shazeer, Niki Parmar, Jakob Uszkoreit, Llion Jones,
  Aidan~N Gomez, {\L}ukasz Kaiser, and Illia Polosukhin.
\newblock Attention is all you need.
\newblock \emph{Advances in neural information processing systems}, 30, 2017.

\bibitem[Yang et~al.(2021)Yang, Hu, Babuschkin, Sidor, Liu, Farhi, Ryder,
  Pachocki, Chen, and Gao]{yang2022tensorv}
Ge~Yang, Edward Hu, Igor Babuschkin, Szymon Sidor, Xiaodong Liu, David Farhi,
  Nick Ryder, Jakub Pachocki, Weizhu Chen, and Jianfeng Gao.
\newblock Tuning large neural networks via zero-shot hyperparameter transfer.
\newblock \emph{Advances in Neural Information Processing Systems},
  34:\penalty0 17084--17097, 2021.

\bibitem[Yang \& Hu(2021)Yang and Hu]{yang2021featurelearning}
Greg Yang and Edward~J Hu.
\newblock Tensor programs iv: Feature learning in infinite-width neural
  networks.
\newblock In \emph{International Conference on Machine Learning}, pp.\
  11727--11737. PMLR, 2021.

\bibitem[Yang et~al.(2024)Yang, Yu, Zhu, and Hayou]{yang2024depthmup}
Greg Yang, Dingli Yu, Chen Zhu, and Soufiane Hayou.
\newblock Tensor programs vi: Feature learning in infinite depth neural
  networks.
\newblock In \emph{International Conference on Learning Representations},
  volume 2024, pp.\  55099--55150, 2024.

\bibitem[Yu \& Huang(2019)Yu and Huang]{yu2019universally}
Jiahui Yu and Thomas~S Huang.
\newblock Universally slimmable networks and improved training techniques.
\newblock In \emph{Proceedings of the IEEE/CVF international conference on
  computer vision}, pp.\  1803--1811, 2019.

\bibitem[Yu et~al.(2018)Yu, Yang, Xu, Yang, and Huang]{yu2019slimmable}
Jiahui Yu, Linjie Yang, Ning Xu, Jianchao Yang, and Thomas Huang.
\newblock Slimmable neural networks.
\newblock \emph{arXiv preprint arXiv:1812.08928}, 2018.

\bibitem[Zellers et~al.(2019)Zellers, Holtzman, Bisk, Farhadi, and
  Choi]{zellers2019hellaswag}
Rowan Zellers, Ari Holtzman, Yonatan Bisk, Ali Farhadi, and Yejin Choi.
\newblock Hellaswag: Can a machine really finish your sentence?
\newblock In \emph{Proceedings of the 57th annual meeting of the association
  for computational linguistics}, pp.\  4791--4800, 2019.

\bibitem[Zhang \& Sennrich(2019)Zhang and Sennrich]{zhang2019rmsnorm}
Biao Zhang and Rico Sennrich.
\newblock Root mean square layer normalization.
\newblock \emph{Advances in neural information processing systems}, 32, 2019.

\bibitem[Zhong et~al.(2024)Zhong, Cui, Guo, Liang, Lu, Wang, Saied, Chen, and
  Duan]{zhong2024agieval}
Wanjun Zhong, Ruixiang Cui, Yiduo Guo, Yaobo Liang, Shuai Lu, Yanlin Wang, Amin
  Saied, Weizhu Chen, and Nan Duan.
\newblock Agieval: A human-centric benchmark for evaluating foundation models.
\newblock In \emph{Findings of the association for computational linguistics:
  NAACL 2024}, pp.\  2299--2314, 2024.

\end{thebibliography}
\bibliographystyle{iclr2027_conference}

\appendix
\clearpage
\etocdepthtag.toc{mtappendix}
\etocsettagdepth{mtchapter}{none}
\etocsettagdepth{mtappendix}{subsection}
\renewcommand{\contentsname}{Appendix Contents}
{\hypersetup{hidelinks}\tableofcontents}
\clearpage

\section{Architecture details}
\label{app:architecture}

Figure~\ref{fig:full-shellelastic} shows the complete \Ours{} model, Figure~\ref{fig:block} details its attention and FFN sublayers, and Figure~\ref{fig:architecture-comparison} places the three elastic families side by side.
Weight matrices are drawn with output rows and input columns and colored only where a connection is allowed; white lines mark the supported widths, and black lines delimit the slices of a dense matrix that the narrower sub-networks use.
All three families are drawn with $\mathcal S=\{4,5,6,7,8\}$, the default supported set of \Ours{}, so that their matrices are directly comparable; Table~\ref{tab:models} lists the dimensions of the trained models.
The forward multipliers of Table~\ref{tab:widthparam} are omitted throughout, and the per-head query/key normalization and rotary position transformations appear only in Figure~\ref{fig:block}.

\paragraph{Comparison with other elastic architectures.}
MatFormer keeps the model width, attention, normalization, embedding and readout at full width and nests only the FFN hidden width.
A fully elastic model slices dense matrices along both axes and normalizes over the active width.
In both cases, channel mixing is dense, so the hidden states of a narrow sub-network are not contained in those of a wide one, and each width needs its own trunk pass.
\Ours{} replaces the dense matrices with block-lower-triangular ones and the full-width normalization with group-local normalization, which is exactly what lets the supported prefixes share a single trunk pass (Proposition~\ref{prop:prefix}).

\begin{figure}[t]
\centering
\includegraphics[width=\linewidth]{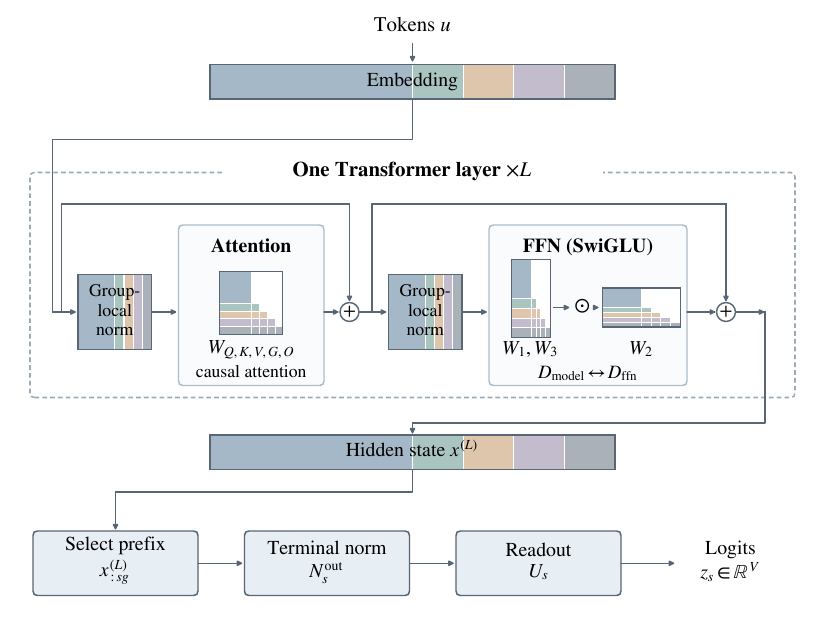}
\caption{\textbf{The complete \Ours{} model.} Embedding, one attention--FFN layer repeated $L$ times, and the terminal norm and readout of the selected width. Group-local branch normalization and structured projections preserve every supported prefix through the trunk.}
\label{fig:full-shellelastic}
\end{figure}

\begin{figure}[t]
\centering
\includegraphics[width=\linewidth]{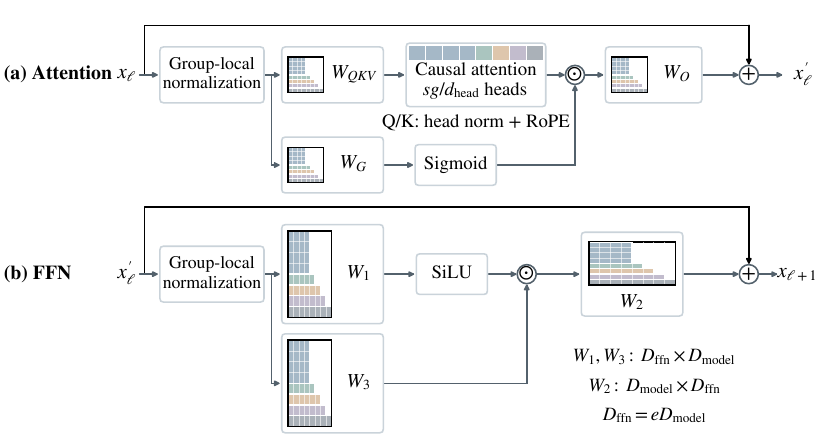}
\caption{\textbf{Prefix-preserving attention and FFN.} Sub-network $s$ uses the first $sg/d_{\mathrm{head}}$ attention heads. The FFN's up, gate, and down projections expand and contract the same groups.}
\label{fig:block}
\end{figure}

\begin{figure}[p]
\centering
\includegraphics[width=0.95\linewidth]{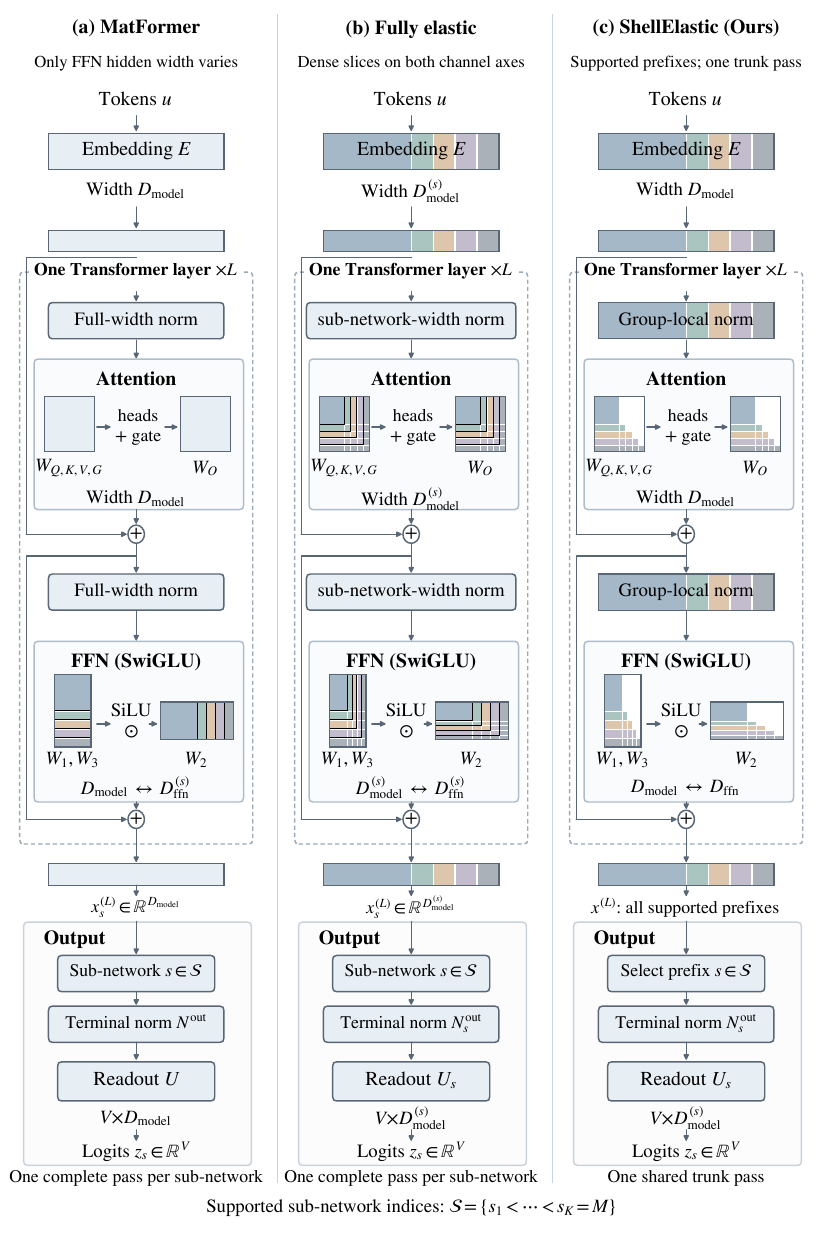}
\caption{\textbf{Complete architecture comparison.} MatFormer varies the FFN hidden dimension; fully elastic models vary the model dimension and the FFN hidden dimension together by slicing dense matrices; \Ours{} also varies both, but through block-lower-triangular projections in which each output group reads only a prefix of the input. Each layer frame repeats $L$ times. Proportions are illustrative.}
\label{fig:architecture-comparison}
\end{figure}

\FloatBarrier
\section{Exact computation reuse}
\label{app:prefixproof}

This appendix proves Proposition~\ref{prop:prefix}.
Appendix~\ref{app:why-diag} defines prefix-preserving maps and shows that the structured projections are prefix-preserving at the supported widths, Appendix~\ref{app:prefix-main} composes the operations of the trunk, and Appendix~\ref{app:prefix-check} checks the result numerically and delimits what it does and does not claim.

\subsection{Prefix-preserving maps}
\label{app:why-diag}

Throughout this appendix, $s\le t$ are supported widths in $\mathcal S$, and $P_s$ keeps the first $sg$ coordinates of every token's hidden state (the first $esg$ coordinates on the FFN hidden axis), as in Proposition~\ref{prop:prefix}.
For a map $F$ that the trunk applies at every supported width, $F_s$ denotes its restriction to width $sg$, that is, the map that sub-network $s$ applies.

\begin{definition}[Prefix-preserving map]
\label{def:prefix}
A map $F$ is \emph{prefix-preserving} if
\begin{equation}
 P_s\circ F_t=F_s\circ P_s\qquad\text{for all } s,t\in\mathcal S,\ s\le t.
\label{eq:prefixpreserving}
\end{equation}
\end{definition}

We first show that the structured projections of Section~\ref{sec:arch} have this property.
The lemma allows different base granularities on the two axes, since the output groups of the FFN up and gate projections are $e$ times as wide as their input groups, and those of the down projection $e$ times as narrow.

\begin{lemma}[Structured linear maps]
\label{lem:linear}
Consider a linear map with base input and output granularities $g_{\rm in}$ and $g_{\rm out}$, in which output group $k$ has $(s_k-s_{k-1})g_{\rm out}$ rows and reads the first $s_kg_{\rm in}$ input coordinates.
Let $P^{\rm in}_s$ and $P^{\rm out}_s$ keep the first $sg_{\rm in}$ inputs and the first $sg_{\rm out}$ outputs, and let $W_s$ be the top-left $sg_{\rm out}\times sg_{\rm in}$ block of the weight matrix. Then, with the row-vector convention,
\begin{equation}
 P^{\rm out}_s\!\left(xW_t^\top\right)=P^{\rm in}_s(x)\,W_s^\top,\qquad s,t\in\mathcal S,\quad s\le t.
\label{eq:linearcommute}
\end{equation}
\end{lemma}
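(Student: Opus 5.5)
The plan is to prove Lemma~\ref{lem:linear} by a direct block computation, one output group at a time, using only the block lower triangular structure of Equation~\ref{eq:grouplinear}. Write $W_t$ for the top-left $tg_{\rm out}\times tg_{\rm in}$ block of the full weight matrix. By construction, output group $k$ occupies rows $s_{k-1}g_{\rm out}+1,\dots,s_kg_{\rm out}$. Its block row is $W_k\in\R^{(s_k-s_{k-1})g_{\rm out}\times s_kg_{\rm in}}$, and every entry in columns beyond $s_kg_{\rm in}$ is zero. Since $s\le t$ are both in $\mathcal S$, we have $s=s_j$ and $t=s_{j'}$ for some $j\le j'$. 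The output prefix kept by $P^{\rm out}_s$ is therefore exactly the union of groups $1,\dots,j$. This alignment is the only place where the supported boundaries matter.

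\textbf{Step 1 (coordinate form).} For each group $k\le j$, compute the coordinates of $xW_t^\top$ in group $k$. They equal $x_{:tg_{\rm in}}$ times the transpose of the $k$-th block row of $W_t$. Because that block row vanishes outside the first $s_kg_{\rm in}$ columns, this product equals $x_{:s_kg_{\rm in}}W_k^\top$. This is Equation~\ref{eq:grouplinear} with general granularities.

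\textbf{Step 2 (apply the same to the right-hand side).} The $k$-th block row of $W_s$ is the same $W_k$, restricted to the first $s g_{\rm in}$ columns. Since $s_k\le s_j=s$, this restriction drops only zero columns. So group $k$ of $P^{\rm in}_s(x)W_s^\top$ is also $x_{:s_kg_{\rm in}}W_k^\top$, because $P^{\rm in}_s(x)$ and $x$ share their first $s_kg_{\rm in}$ coordinates.

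\textbf{Step 3 (concatenate).} Concatenating over $k=1,\dots,j$ gives Equation~\ref{eq:linearcommute}. The identity then holds row by row, and hence for a whole sequence of tokens.

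The only real subtlety is the bookkeeping of the two granularities. The statement has to cover the up and gate projections ($g_{\rm in}=g$, $g_{\rm out}=eg$) and the down projection ($g_{\rm in}=eg$, $g_{\rm out}=g$). I would therefore check explicitly that the input cutoff $s_kg_{\rm in}$ and the output group edges $s_{k-1}g_{\rm out}$, $s_kg_{\rm out}$ are indexed by the same $s_k$. I would also verify that $s\in\mathcal S$ is essential. For a cut $s$ strictly inside a group, the kept output rows would belong to a group whose block row reads inputs beyond $sg_{\rm in}$, and the identity fails. This gives the converse remark already made in Section~\ref{sec:arch}.
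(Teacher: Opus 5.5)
Your proposal is correct and follows the paper's proof essentially verbatim: the paper also writes $s=s_j$ and notes that the first $sg_{\rm out}$ outputs are exactly those of the groups $k\le j$. It then observes that each such output coordinate is $\sum_{c\le s_kg_{\rm in}}(W_t)_{ic}x_c$ with $s_k\le s$, so it uses only inputs kept by $P^{\rm in}_s$ and weights inside $W_s$. Your group-by-group computation is the same argument phrased with block rows instead of single coordinates, and your remark on cuts inside a group matches the paper's ``Unsupported widths'' paragraph, where the identity fails in general rather than for every weight matrix.
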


\begin{proof}
Let $s=s_j$. The first $sg_{\rm out}$ outputs are those of the groups $k\le j$, and output coordinate $i$ of group $k$ is
\begin{equation}
 \big(xW_t^\top\big)_i=\sum_{c=1}^{s_kg_{\rm in}}(W_t)_{ic}\,x_c .
\label{eq:grouprow}
\end{equation}
Since $k\le j$ gives $s_k\le s_j=s$, this sum reads only the first $sg_{\rm in}$ inputs, and all of its weights lie in the top-left block $W_s$. Hence it equals $\big(P^{\rm in}_s(x)\,W_s^\top\big)_i$.
\end{proof}

\paragraph{Unsupported widths.}
At an index $u\notin\mathcal S$, there is a group with $s_{k-1}<u<s_k$.
A cut at $u$ keeps part of that group but removes inputs it is allowed to read (Equation~\ref{eq:grouprow}), so Equation~\ref{eq:linearcommute} does not hold in general, and the architecture guarantees no reusable sub-network there.
For $\mathcal S=\{4,5,6,7,8\}$, the indices $1,2,3$ lie inside the dense core and are not supported.

\subsection{Proof of Proposition~\ref{prop:prefix}}
\label{app:prefix-main}

\begin{proof}
We show that every operation of a layer is prefix-preserving (Steps~1--4) and then compose the operations over the $L$ layers (Step~5).

\textbf{Step 1: projections.}
The query, key, value, gate and output projections of attention and the up, gate and down projections of the FFN are structured linear maps. Lemma~\ref{lem:linear} therefore applies to each of them, with $g_{\rm in}=g_{\rm out}=g$ for the attention projections, $(g_{\rm in},g_{\rm out})=(g,eg)$ for the FFN up and gate projections, and $(eg,g)$ for the FFN down projection.

\textbf{Step 2: group-local normalization.}
By Equation~\ref{eq:norm}, $N(x)_{\mathcal G_k}=N_k(x_{\mathcal G_k})$ uses only the coordinates and gains of $\mathcal G_k$, so the first $sg$ outputs depend only on the first $sg$ inputs, and $P_s\circ N_t=N_s\circ P_s$.
The same holds for any finer partition and for a cumulative prefix norm, whose statistics for a coordinate use all coordinates up to the end of its group or of its block of $g$ coordinates (Equation~\ref{eq:normvariants}); for any of the first $sg$ coordinates, that end is at most $sg$.
A norm over the full active width does not preserve prefixes, because adding a group changes the statistics of the earlier coordinates.

\textbf{Step 3: attention.}
Since $g$ is a multiple of $d_{\mathrm{head}}$, every head lies inside one group, and sub-network $s$ uses the first $sg/d_{\mathrm{head}}$ heads.
By Step~1, their queries, keys and values depend only on the first $sg$ inputs.
Query--key normalization, the rotary transformation and the softmax act within a head, so the heads that sub-network $t$ adds leave the outputs of these heads unchanged.
The sigmoid gate and the output projection of Equation~\ref{eq:attention} are covered by Steps~1 and~4.

\textbf{Step 4: elementwise operations, residuals and embedding.}
The SwiGLU product and the other elementwise operations act coordinatewise, so they commute with $P_s$.
A residual branch $x\mapsto x+F(x)$ with a prefix-preserving $F$ is prefix-preserving, since $P_s\big(x+F_t(x)\big)=P_s(x)+F_s\big(P_s(x)\big)$.
The embedding satisfies $P_s(E_t[u])=E_s[u]$ by construction.

\textbf{Step 5: composition over layers.}
If $F$ and $G$ are prefix-preserving, then so is $G\circ F$:
\begin{equation}
 P_s\circ G_t\circ F_t=G_s\circ P_s\circ F_t=G_s\circ F_s\circ P_s .
\label{eq:compose}
\end{equation}
Each layer is a composition of the operations of Steps~1--4 and is therefore prefix-preserving, and by induction over the $L$ layers, as in Proposition~2.3 of StairFormer~\citep{trost2026fullynested}, so is the trunk: $P_s\circ T_t=T_s\circ P_s$. Applying this identity to the embedded sequence and using Step~4,
\begin{equation}
 P_s\big(T_t(E_t[u])\big)=T_s\big(P_s(E_t[u])\big)=T_s\big(E_s[u]\big),
\label{eq:prefixfinal}
\end{equation}
which is the claim.
\end{proof}

\paragraph{Consequence for the joint objective.}
The terminal norm $N^{\rm out}_s$ and the readout $U_s$ need not commute with $P_s$, since $N^{\rm out}_s$ normalizes over all $sg$ selected coordinates.
They are applied \emph{after} the prefix has been selected (Equation~\ref{eq:head}), so each sub-network's terminal map receives exactly the input it would receive in a separate evaluation.
Hence each $\ell_s$, and the gradient of the joint objective of Equation~\ref{eq:loss}, is the same function of the shared parameters as in separate evaluations, up to floating-point summation order.

\subsection{Numerical verification and scope}
\label{app:prefix-check}

\paragraph{Numerical verification.}
We instantiated the full trained configuration on the meta device to check module selection and shapes, and ran a reduced two-layer model on CPU with the same architecture switches, comparing one shared trunk pass against five separate evaluations of the same nested weights.
The logits and the weighted cross-entropy agreed exactly; the largest absolute difference between the two sets of parameter gradients was $1.68\times10^{-8}$, and every trainable parameter received a gradient.
The check replaces the fused SwiGLU and cross-entropy kernels of the GPU path with reference expressions, so it supports the implementation of Proposition~\ref{prop:prefix}, not its throughput.

\paragraph{Scope of the statement.}
The proposition assumes the dense-FFN graph of Section~\ref{sec:arch}, a fixed supported set $\mathcal S$ and prefix-preserving normalization. Beyond these assumptions:
\begin{itemize}
\item dropout needs matched masks for an equality test;
\item data-dependent routing or width-specific trunk modules need a separate argument;
\item Equation~\ref{eq:grouplinear} fixes the logical dependencies and not a storage layout or a kernel: every implementation of that map has the prefix property.
\end{itemize}

\section{Proofs of the \texorpdfstring{\muE{}}{mu-Elastic} results}
\label{app:proofs}

This appendix proves Theorem~\ref{thm:shell}, Proposition~\ref{prop:noniid} and Corollary~\ref{cor:transfer}.
Appendix~\ref{app:assumptions} states the assumptions and fixes the notation, Appendices~\ref{app:proof-thm}--\ref{app:proof-cor} give the three proofs, and Appendix~\ref{app:proof-other} extends Theorem~\ref{thm:shell} and Corollary~\ref{cor:transfer} to the other elastic architectures of Section~\ref{sec:param}.

\subsection{Assumptions and notation}
\label{app:assumptions}

\begin{assumption}[Heuristic scaling]
\label{ass:mup}
We use the heuristic framework of Appendix~J of the $\mu$Transfer paper~\citep{yang2022tensorv}:
\begin{enumerate}[label=(A\arabic*),leftmargin=2.8em,itemsep=2pt,topsep=3pt]
\item \emph{Inputs.} The input to every hidden matrix has $\Ord{1}$ coordinates with bounded second moments and is independent of that matrix at initialization.
\item \emph{Central limit.} If $w_1,\dots,w_n$ are i.i.d.\ with mean zero and standard deviation $b$ and independent of $x_1,\dots,x_n=\Ord{1}$, then $\sum_{c=1}^{n}w_cx_c=\Ord{\sqrt n\,b}$.
\item \emph{Law of large numbers.} After a gradient step, the update $\Delta W$ of a hidden matrix is aligned with the activations it multiplies, so $\sum_{c=1}^{n}\Delta W_{jc}\,x_c=\Ord{n\,\delta}$ when $|\Delta W_{jc}|=\Ord{\delta}$.
\item \emph{Adam.} With learning rate $\rho$ and a non-degenerate second-moment estimate, every coordinate of an Adam update has magnitude $\Ord{\rho}$, independently of the gradient scale.
\end{enumerate}
\end{assumption}

\paragraph{Notation.}
Consider a hidden matrix and the block row $W_k\in\R^{|\mathcal G_k|\times sg}$ of Equation~\ref{eq:grouplinear} that produces output group $\mathcal G_k$, whose supported boundary is $s=s_k$.
Inside a sub-network $t\in\mathcal S$ with $t\ge s$, let $x$ be the input of the matrix, and let
\begin{equation}
 y_j=\sum_{c=1}^{sg}(W_k)_{jc}\,x_c,\qquad j\in\mathcal G_k,
\label{eq:preact}
\end{equation}
be the pre-activation of coordinate $j$.
After one optimizer step, the update $\Delta W_k$ moves it by $\Delta y_j=\sum_{c=1}^{sg}(\Delta W_k)_{jc}\,x_c$.
Read widths are measured in base widths: the block row of $\mathcal G_k$ reads $s_k=s$ of them, and the readout of sub-network $s$ reads $s$.
The proxy width $d_0$ and the base values $(\sigma_0,\eta_0,\lambda_0)$ are fixed, and the constants hidden in $\Ord{\cdot}$ do not depend on $g$, $s$, $t$, $M$, $m$ or $e$.

\subsection{Proof of Theorem~\ref{thm:shell}}
\label{app:proof-thm}

\begin{proof}
\textbf{Step 1: the fan-in does not depend on the sub-network.}
The sum in Equation~\ref{eq:preact} has exactly $sg$ terms in every sub-network $t\ge s$: the block row reads no coordinate beyond $sg$ (Equation~\ref{eq:grouplinear}), and sub-network $t$ contains all of them because $tg\ge sg$.
One initialization scale and one learning rate, set by $q(s)=ms=sg/d_0$ (Equation~\ref{eq:widthratio}), therefore serve every sub-network that contains the group.

\textbf{Step 2: initialization.}
Table~\ref{tab:widthparam} draws the entries of $W_k$ i.i.d.\ with standard deviation $\sigma_0/\sqrt{ms}=\sigma_0\sqrt{d_0/(sg)}$. By (A1) and (A2),
\begin{equation}
 \E\big[y_j^2\big]=sg\cdot\frac{\sigma_0^2d_0}{sg}\cdot\Ord{1}=\Ord{\sigma_0^2d_0}.
\label{eq:proof-init}
\end{equation}

\textbf{Step 3: one AdamW step.}
The learning rate of $W_k$ is $\eta_0/(ms)=\eta_0d_0/(sg)$, so by (A4) every coordinate of its update satisfies $|(\Delta W_k)_{jc}|=\Ord{\eta_0d_0/(sg)}$, and by (A3)
\begin{equation}
 \Delta y_j=\sum_{c=1}^{sg}(\Delta W_k)_{jc}\,x_c=\Ord{sg\cdot\frac{\eta_0d_0}{sg}}=\Ord{\eta_0d_0}.
\label{eq:proof-update}
\end{equation}
The decoupled weight decay of AdamW, which is scaled by the learning rate, multiplies each coordinate by
\begin{equation}
 1-\frac{\eta_0}{ms}\cdot\lambda_0ms=1-\eta_0\lambda_0
\label{eq:proof-wd}
\end{equation}
per step, the same factor for every decayed parameter group and every width.

\textbf{Step 4: FFN down projection.}
In the FFN down projection, the block row of $\mathcal G_k$ reads $esg$ coordinates, and its output carries the forward multiplier $1/e$ (Table~\ref{tab:widthparam}).
With initialization $\sigma_0\sqrt e/\sqrt{ms}$ and learning rate $\eta_0/(ms)$, the computations of Steps~2 and~3 become
\begin{align}
 \E\big[y_j^2\big]&=e^{-2}\cdot esg\cdot\frac{e\,\sigma_0^2d_0}{sg}\cdot\Ord{1}=\Ord{\sigma_0^2d_0},\\
 \Delta y_j&=\Ord{e^{-1}\cdot esg\cdot\frac{\eta_0d_0}{sg}}=\Ord{\eta_0d_0},
\end{align}
so both scales are also free of $e$.

\textbf{Step 5: readouts and embedding.}
The readout $U_s$ reads the $sg$ coordinates of $N^{\rm out}_s(x^{(L)}_{:sg})$ (Equation~\ref{eq:head}), which is exactly the fan-in of the group with boundary $s$, and Table~\ref{tab:widthparam} gives it the same initialization and learning rate, following the per-layer scaling of~\citet{everett2024scaling}.
Steps~2 and~3 therefore give the same two bounds for its logits at initialization and for their change under the update of $U_s$, and the learned scalar $\alpha_s$ rescales them only by $\Ord{1}$.
The embedding is a lookup over a fixed vocabulary of size $V$, which does not change with width, and reading its first $sg$ columns does not change the scale of any coordinate.

\textbf{Step 6: joint training.}
Under the joint objective of Equation~\ref{eq:loss}, the gradient of $W_k$ sums the contributions of the selected sub-networks $t\ge s$.
By Proposition~\ref{prop:prefix}, all of them feed $W_k$ the same input $x$, so every contribution is an outer product with $x$, and the summed gradient, whatever its scale, keeps each row aligned with $x$; (A3) therefore applies, and by (A4) the Adam update has coordinates of size $\Ord{\eta_0/(ms)}$.
The group receives a gradient whenever some selected $t\ge s$ has $\omega_t>0$, which holds for $t=M$ when $\omega_M>0$, and the readout $U_s$ whenever $s\in\mathcal A_\tau$ and $\omega_s>0$.
Steps~2--5 therefore hold for the jointly trained supernet.

The constants $\Ord{\sigma_0^2d_0}$ and $\Ord{\eta_0d_0}$ of Equations~\ref{eq:proof-init} and~\ref{eq:proof-update} depend on neither $s$, $t$, $M$ nor $m$, which proves (i) and (ii) for every hidden group and every readout.
\end{proof}

\subsection{Proof of Proposition~\ref{prop:noniid}}
\label{app:proof-prop}

\begin{proof}
Consider a hidden matrix whose input satisfies (A1), such as a projection that reads a group-local norm, and let all of its groups share the initialization standard deviation $b$, the learning rate $\rho$ and the forward multiplier, which we absorb into $b$ and $\rho$.

\textbf{Fixed ratios.}
The group with boundary $s$ sums over $sg$ coordinates (Step~1 of Appendix~\ref{app:proof-thm}), so by (A2)--(A4)
\begin{equation}
 y_j=\Ord{\sqrt{sg}\,b},\qquad \Delta y_j=\Ord{sg\,\rho}.
\label{eq:shared}
\end{equation}
For boundaries $s<s'$, the pre-activations and updates of the two groups therefore differ by the factors $\Ord{\sqrt{s'/s}}$ and $\Ord{s'/s}$, which do not depend on $(b,\rho)$: changing the shared pair rescales all groups together and leaves these factors unchanged.

\textbf{Calibrations.}
Calibrating the shared pair for the group with boundary $r\in\mathcal S$ means taking the values of Table~\ref{tab:widthparam} for that group, $b=\sigma_0\sqrt{d_0/(rg)}$ and $\rho=\eta_0d_0/(rg)$. By Equation~\ref{eq:shared}, the group with boundary $s$ then has
\begin{equation}
 y_j=\Ord{\sigma_0\sqrt{d_0}\cdot\sqrt{s/r}},\qquad \Delta y_j=\Ord{\eta_0d_0\cdot s/r},
\label{eq:shared-ratio}
\end{equation}
so it deviates from the scales $\Ord{\sigma_0\sqrt{d_0}}$ and $\Ord{\eta_0d_0}$ of Theorem~\ref{thm:shell} by the factors $\sqrt{s/r}$ and $s/r$.
For $r=M$, the group with boundary $s$ has pre-activations of size $\Ord{\sqrt{s/M}}$ and updates of size $\Ord{\eta_0\,s/M}$, up to the constants $\sigma_0\sqrt{d_0}$ and $d_0$. For $r=s_1$, it is inflated by $\sqrt{s/s_1}$ and $s/s_1$, and the outermost group by $\sqrt{M/s_1}$ and $M/s_1$.

\textbf{No shared pair is uniformly maximal.}
Let the core and the outermost group deviate from the scales of Theorem~\ref{thm:shell} by the factors $a_1$ and $a_K$ at initialization. By the fixed ratios, $a_K/a_1=\Ord{\sqrt{M/s_1}}$, so $\max(a_1,a_1^{-1},a_K,a_K^{-1})$ grows at least as $(M/s_1)^{1/4}$; the same argument for the updates gives $(M/s_1)^{1/2}$.
Over supernets with unbounded $M/s_1$, no shared pair keeps both factors within constants independent of $M$, so none is maximal in the sense of Theorem~\ref{thm:shell} for every group.
When $M/s_1$ is bounded, the mismatch is a bounded factor: for $\mathcal S=\{4,5,6,7,8\}$, $M/s_1=2$, and any shared pair leaves the core and the outermost group $\sqrt2$ apart at initialization and $2$ apart in the update.
\end{proof}

\subsection{Proof of Corollary~\ref{cor:transfer}}
\label{app:proof-cor}

\begin{proof}
The proof of Theorem~\ref{thm:shell} holds for every base width $g=md_0$ and every supported set $\mathcal S\subseteq\{1,\dots,M\}$.
Its constants, $\Ord{\sigma_0^2d_0}$ in Equation~\ref{eq:proof-init}, $\Ord{\eta_0d_0}$ in Equation~\ref{eq:proof-update} and the decay factor $1-\eta_0\lambda_0$ in Equation~\ref{eq:proof-wd}, depend only on the proxy width $d_0$ and on the tuned values: $m$ and $M$ enter only through the active input widths, which the rule of Table~\ref{tab:widthparam} cancels.
The proxy is the special case $M=1$, $g=d_0$ (so $m=s=1$) of the same computation, so the values tuned on it set the same constants in every group and readout of every target.
This gives (i) for any \Ours{} supernet and (ii) under any increase of $m$ or $M$.
\end{proof}

\subsection{Extension to other elastic architectures}
\label{app:proof-other}

The proofs above use the block structure of \Ours{} only in Step~1 of Appendix~\ref{app:proof-thm}, to show that a group has the same fan-in and the same input in every sub-network that contains it.
For the other elastic architectures of Section~\ref{sec:param}, under their rules in Appendix~\ref{app:other-param}, Theorem~\ref{thm:shell} holds with sub-network $s$ in place of the group with boundary $s$: the hidden matrices of every sub-network satisfy (i), and (ii) at each step that trains that sub-network. The readouts are treated per architecture below.

\textbf{StairFormer.}
Its hidden matrices have the connectivity of \Ours{} with $\mathcal S=\{1,\dots,M\}$ and follow Table~\ref{tab:widthparam}, and all sub-networks are trained at every step, so the proof of Theorem~\ref{thm:shell} applies verbatim.
Its single readout, of which sub-network $s$ reads the first $sg$ columns, takes the split of Equation~\ref{eq:split}, which gives these columns the initialization $\sigma_0/\sqrt{ms}$ and the learning rate $\eta_0/(ms)$ of $U_s$ in Table~\ref{tab:widthparam}; since every sub-network feeds them the prefix $x^{(L)}_{:sg}$ up to its terminal norm, Steps~5 and~6 apply.

\textbf{Fully elastic models.}
Sub-network $s$ reads the top-left slice $W_s$ of every hidden matrix, whose fan-in $sg$ ($esg$ for the FFN down projection) varies with $s$.
By Equation~\ref{eq:split}, $W_s$ has the initialization standard deviation $\sigma_0/\sqrt{ms}$ ($\sigma_0\sqrt e/\sqrt{ms}$ with the output multiplier $1/e$ for the FFN down projection), and since $\Delta$ moves by $\Ord{\eta_0/m}$ per coordinate under (A4), $W_s$ moves by $\Ord{\eta_0/(ms)}$ per coordinate.
These are the values of Table~\ref{tab:widthparam} at fan-in $sg$, so Steps~2--4 give the pre-activations of sub-network $s$ the scales $\Ord{\sigma_0\sqrt{d_0}}$ at initialization and $\Ord{\eta_0d_0}$ after one step.
At a step that trains sub-network $s$, the update of $\Delta$ is computed from the input of sub-network $s$, so (A3) applies to the update of $W_s$; (ii) does not need the steps that train other sub-networks.
The shared readout takes the same split with the multiplier $\alpha_s$ (Appendix~\ref{app:other-param}), so the columns that sub-network $s$ reads have the initialization $\sigma_0/\sqrt{ms}$ and the learning rate $\eta_0/(ms)$ of $U_s$ in Table~\ref{tab:widthparam}, and Step~5 gives its logits the same two scales.

\textbf{MatFormer.}
Only the FFN down projection has a fan-in that varies with the sub-network, $sh_0=esD_{\mathrm{model}}=esmd_0$.
With the split and the output multiplier $1/e$, its slice has the effective initialization $\sigma_0/\sqrt{ems}$ and the effective learning rate $\eta_0/(ems)$, so Step~4 gives the same constants with $sD_{\mathrm{model}}$ in place of $sg$.
Every other hidden matrix and the shared readout have the fixed fan-in $D_{\mathrm{model}}=md_0$ and take the factor $m$, which is the rule of Table~\ref{tab:widthparam} with $s=1$ and gives the same constants by Steps~2, 3 and~5.
As for fully elastic models, (A3) applies at each step to the sub-network trained at that step.

\textbf{Transfer.}
In all three architectures, $m$ and $s$ enter only through fan-ins that the rules cancel, so the constants $\Ord{\sigma_0\sqrt{d_0}}$ and $\Ord{\eta_0d_0}$ and the decay factor $1-\eta_0\lambda_0$, which $\Delta$ obtains from its learning rate $\eta_0/m$ and weight decay $\lambda_0m$, depend only on $d_0$ and the base values.
The proof of Corollary~\ref{cor:transfer} therefore carries over to every $m$ and every supported set.

\section{\texorpdfstring{\muE{}}{mu-Elastic} for other elastic architectures}
\label{app:other-param}

All three elastic baselines use the proxy width $d_0=128$ and the base values $(\sigma_0,\eta_0,\lambda_0)$ of \Ours{}, and their $m$ and $s$ follow Section~\ref{sec:param}.
Tables~\ref{tab:param-full} and~\ref{tab:param-ffn} list the complete rules of the fully elastic model and MatFormer.

\paragraph{Fully elastic models.}
A fully elastic model stores every hidden matrix as one dense tensor at the full width, and sub-network $s$ reads its top-left slice, whose fan-in $sg$ ($esg$ for the FFN down projection) varies with $s$.
\muE{} gives this slice the factor $ms$: initialization $\sigma_0/\sqrt{ms}$, learning rate $\eta_0/(ms)$ and weight decay $\lambda_0ms$.
One stored tensor cannot follow this rule for every $s$, because the initialization and the update scale with different powers of $s$.
We therefore split each hidden tensor into a frozen initialization $W^{\mathrm{init}}$, drawn at $\sigma_0/\sqrt m$, and a trainable delta $\Delta$, initialized to zero and trained with learning rate $\eta_0/m$ and weight decay $\lambda_0m$. Sub-network $s$ uses
\begin{equation}
 W_s=s^{-1/2}\,W^{\mathrm{init}}[\text{slice}]+s^{-1}\,\Delta[\text{slice}].
\label{eq:split}
\end{equation}
At initialization, $W_s$ has standard deviation $s^{-1/2}\sigma_0/\sqrt m=\sigma_0/\sqrt{ms}$; under Adam, $\Delta$ moves by $\Ord{\eta_0/m}$ per coordinate, so $W_s$ moves by $\Ord{\eta_0/(ms)}$.
Both match Table~\ref{tab:widthparam} for every $s$, while weight decay, which acts on $\Delta$ only, pulls $W_s$ toward its initialization.
The main results use one shared sliced readout, of which sub-network $s$ reads the first $sg$ columns; it takes the same split with the multiplier $\alpha_s$ (Table~\ref{tab:param-full}), which gives these columns the initialization $\sigma_0/\sqrt{ms}$ and the learning rate $\eta_0/(ms)$ of the per-width readout $U_s$ of Table~\ref{tab:widthparam}.
Our baseline has $g=64$ ($m=1/2$), $\mathcal S=\{6,7,8,9,10\}$ and $e=4$; attention nests by heads at $d_{\mathrm{head}}=64$, the norms act over the active width, the embedding is sliced, and one sub-network is sampled uniformly at random per step.

\paragraph{MatFormer.}
MatFormer keeps $D_{\mathrm{model}}$ fixed, so $m=D_{\mathrm{model}}/d_0$, and nests only the FFN hidden width, $D^{(s)}_{\mathrm{ffn}}=s\,h_0$ with a base FFN width $h_0$; here $e=h_0/D_{\mathrm{model}}$ takes the place of the FFN expansion, so that the down projection of sub-network $s$ reads $esD_{\mathrm{model}}$ coordinates.
Only the FFN down projection has a fan-in that varies across sub-networks; it takes the factor $ms$ through the split of Equation~\ref{eq:split}, and the other hidden matrices and the readout, whose fan-in $D_{\mathrm{model}}$ is fixed, take the non-elastic \mup{} factor $m$, the attention and FFN up/gate matrices through the same split with multiplier $1$; the embedding and gains keep the base values (Table~\ref{tab:param-ffn}).
Our baseline has $D_{\mathrm{model}}=512$ ($m=4$), $h_0=128$ ($e=1/4$) and $\mathcal S=\{4,9,15,22,30\}$; attention, normalization, the embedding and a single readout are shared at full width, and one sub-network is sampled uniformly at random per step, as in the original MatFormer recipe.

\paragraph{StairFormer.}
With equal-width nested blocks, StairFormer has the connectivity of \Ours{} with $\mathcal S=\{1,\dots,M\}$, so its hidden matrices follow Table~\ref{tab:widthparam} without a split.
Its single readout, of which sub-network $s$ reads the first $sg$ columns, has a fan-in that varies with $s$ and takes the factor $ms$ through the split of Equation~\ref{eq:split}.
Our StairFormer baseline has five $208$-wide groups with $d_{\mathrm{head}}=104$, so that each group holds two complete heads.

\begin{table}[t]
\caption{\textbf{\muE{} for the fully elastic baseline.} $s$ is the sub-network width in units of $g$; multipliers act on the sliced tensor in the forward pass (Equation~\ref{eq:split}), and the FFN down projection multiplies its output, from both terms, by $1/e$.}
\label{tab:param-full}
\centering
\footnotesize
\setlength{\tabcolsep}{4pt}
\begin{tabular}{@{}lcccc@{}}
\toprule
Parameter & Initialization & Learning rate & Weight decay & Forward multiplier \\
\midrule
Token embedding (first $s\dbase$ columns) & $\sigma_0$ & $\eta_0$ & $\lambda_0$ & $1$ \\
$W^{\mathrm{init}}$ of attention and FFN up/gate & $\sigma_0/\sqrt{m}$ & frozen & -- & $s^{-1/2}$ \\
$W^{\mathrm{init}}$ of FFN down & $\sigma_0\sqrt{e}/\sqrt{m}$ & frozen & -- & $s^{-1/2}$ \\
$\Delta$ of every hidden matrix & $0$ & $\eta_0/m$ & $\lambda_0m$ & $s^{-1}$ \\
$W^{\mathrm{init}}$ of the readout (first $s\dbase$ columns) & $\sigma_0/\sqrt{m}$ & frozen & -- & $\alpha_ss^{-1/2}$ \\
$\Delta$ of the readout (first $s\dbase$ columns) & $0$ & $\eta_0/m$ & $\lambda_0m$ & $\alpha_ss^{-1}$ \\
Norm gains, readout scalars $\alpha_s$ & $1$ & $\eta_0$ & $0$ & -- \\
\bottomrule
\end{tabular}
\end{table}

\begin{table}[t]
\caption{\textbf{\muE{} for MatFormer.} $s=D^{(s)}_{\mathrm{ffn}}/h_0$ is the FFN hidden width in units of the base FFN width $h_0$; the hidden matrices other than the FFN down projection and the readout have a fixed fan-in and take the non-elastic \mup{} factor $m=D_{\mathrm{model}}/d_0$, and the embedding and gains keep the base values. The FFN down projection multiplies its output, from both terms, by $1/e$.}
\label{tab:param-ffn}
\centering
\footnotesize
\setlength{\tabcolsep}{4pt}
\begin{tabular}{@{}lcccc@{}}
\toprule
Parameter & Initialization & Learning rate & Weight decay & Forward multiplier \\
\midrule
Token embedding (full width) & $\sigma_0$ & $\eta_0$ & $\lambda_0$ & $1$ \\
$W^{\mathrm{init}}$ of attention and FFN up/gate & $\sigma_0/\sqrt{m}$ & frozen & -- & $1$ \\
$\Delta$ of attention and FFN up/gate & $0$ & $\eta_0/m$ & $\lambda_0m$ & $1$ \\
$W^{\mathrm{init}}$ of FFN down (first $D^{(s)}_{\mathrm{ffn}}$ columns) & $\sigma_0\sqrt{e}/\sqrt{m}$ & frozen & -- & $s^{-1/2}$ \\
$\Delta$ of FFN down & $0$ & $\eta_0/m$ & $\lambda_0m$ & $s^{-1}$ \\
Shared readout (fan-in $D_{\mathrm{model}}$) & $\sigma_0/\sqrt{m}$ & $\eta_0/m$ & $\lambda_0m$ & $\alpha$ (one scalar) \\
Norm gains, readout scalar $\alpha$ & $1$ & $\eta_0$ & $0$ & -- \\
\bottomrule
\end{tabular}
\end{table}

\section{Training details}
\label{app:hparams}
\newcolumntype{L}[1]{>{\raggedright\arraybackslash}p{#1}}

This appendix gives the model configurations of the three scales (Appendix~\ref{app:configs}), the optimization settings (Appendix~\ref{app:optim}), the parameter counts (Appendix~\ref{app:params}), and the training and inference cost (Appendix~\ref{app:cost}).

\subsection{Model configurations}
\label{app:configs}

Table~\ref{tab:models} lists the base width, supported set and size of every model, and Table~\ref{tab:arch} the remaining architecture settings; Figure~\ref{fig:block} shows one layer of \Ours{} and Figure~\ref{fig:full-shellelastic} the complete model.
Ablations vary only the setting under study relative to the defaults of the $119$M--$285$M scale.

\paragraph{$119$M--$285$M scale.}
The main results at this scale use the models of the first block of Table~\ref{tab:models}, trained on $6.55$B DCLM-Baseline tokens with the HPs of Table~\ref{tab:hparams}.
\Ours{} has $g=128$ with one head per base width ($d_{\mathrm{head}}=128$), $M=8$ and $\mathcal S=\{4,5,6,7,8\}$, so each hidden matrix keeps $42$ of its $64$ blocks of size $128\times128$, and the FFN expansion is $e=1$.
The two branch norms of every layer are group-local over $\mathcal G_1,\mathcal G_2,\mathcal G_3,\mathcal G_4,\mathcal G_5$ of widths $(512,128,128,128,128)$, queries and keys are normalized per head, and each supported width has its own terminal norm $N^{\rm out}_s$ over all $sg$ coordinates, readout $U_s$ and learned scalar $\alpha_s$; all five widths are selected at every step with loss weights $(1,1,1,1,4)$.
The proxy is the non-elastic special case $M=1$ at width $d_0=128$, and each standalone model is a separately trained model of width $384$ to $640$ with FFN expansion $4$ and $d_{\mathrm{head}}=64$; the sampled standalone baseline trains these five models in one run of the default length, updating one model sampled uniformly at random per step. Since each of these models takes one step in five on average, the sampled standalone baseline rescales their optimizer HPs to this five times shorter horizon with the token-budget rule of \citet{mlodozeniec2025completed}, consistent with the scaling rules of Section~\ref{sec:param}.

\paragraph{$375$M--$906$M scale.}
The main results at this scale use the models of the second block of Table~\ref{tab:models}.
\Ours{} has $g=256$ with two heads per base width ($d_{\mathrm{head}}=128$, $m=2$) and otherwise the settings above: $M=8$, $\mathcal S=\{4,5,6,7,8\}$, $e=1$, groups $\mathcal G_1,\mathcal G_2,\mathcal G_3,\mathcal G_4,\mathcal G_5$ of widths $(1024,256,256,256,256)$ and loss weights $(1,1,1,1,4)$.
MatFormer has $D_{\mathrm{model}}=1024$ ($m=8$), $h_0=128$ ($e=1/8$) and $\mathcal S=\{8,17,27,38,51\}$, and the fully elastic model has $g=64$ ($m=1/2$), $\mathcal S=\{13,15,17,19,21\}$ and $e=2.75$.
The standalone models have widths $832$ to $1344$ ($g=64$, $M=13,15,17,19,21$) with FFN expansion $2.75$ and $d_{\mathrm{head}}=64$, and the sampled standalone baseline trains these five models in one run.
Every run at this scale takes $50{,}000$ steps of $256$ sequences ($26.2$B tokens), decays the learning rate linearly from step $40{,}000$ to $0$ at step $50{,}000$ and reports the validation loss at step $50{,}000$; all other settings are those of the $119$M--$285$M scale.

\paragraph{$675$M--$1.69$B scale.}
The main results at this scale use the models of the third block of Table~\ref{tab:models}.
\Ours{} is the supernet of the $375$M--$906$M scale with FFN expansion $e=4$.
MatFormer has $D_{\mathrm{model}}=1024$ ($m=8$), $h_0=256$ ($e=1/4$) and $\mathcal S=\{16,24,34,45,57\}$, and the fully elastic model and the standalone models keep the widths of the $375$M--$906$M scale with FFN expansion $7.25$.
Every run at this scale takes $100{,}000$ steps of $256$ sequences ($52.4$B tokens), decays the learning rate linearly from step $85{,}000$ to $0$ at step $100{,}000$ and reports the validation loss at step $100{,}000$.
For the doubled horizon, the optimizer HPs are rescaled to $\eta_0=1.41\times10^{-3}$, $\lambda_0=7.07\times10^{-3}$, $\beta_1=\beta_2=0.975$ and $\epsilon=1.41\times10^{-14}$, and the sampled standalone baseline rescales its own values in the same way; each model is fully sharded over the $8$ GPUs, and all other settings are those of the $375$M--$906$M scale.

\begin{table}[t]
\caption{\textbf{Model configurations.} Each model is defined by its base width $g$ and supported set $\mathcal S$ (Section~\ref{sec:arch}): its sub-networks have widths $sg$ for $s\in\mathcal S$, and a non-elastic model has $\mathcal S=\{M\}$. MatFormer nests only the FFN hidden width, so its $g$ and $\mathcal S$ refer to that width at a fixed $D_{\mathrm{model}}$ ($512$, and $1024$ at the two larger scales). All models have $32$ layers and a $50{,}432$-token vocabulary; parameters count the non-zero weights of each sub-network (Appendix~\ref{app:params}).}
\label{tab:models}
\centering
\small
\setlength{\tabcolsep}{1.8pt}
\begin{tabular}{@{}lcccccc@{}}
\toprule
Model & $\dbase$ & $\mathcal S$ & Sub-network widths & $d_{\mathrm{head}}$ & $D_{\mathrm{ffn}}$ & Parameters (M) \\
\midrule
Proxy & $128$ & $\{1\}$ & $128$ & $128$ & $128$ & $17.1$ \\
\midrule
\multicolumn{7}{@{}l}{\textit{$119$M--$285$M scale}} \\
Standalone & $64$ & $\{M\}$, $M=6,7,8,9,10$ & $384$--$640$ & $64$ & $4D_{\mathrm{model}}$ & $119.0$--$287.5$ \\
MatFormer & $128$ & $\{4,9,15,22,30\}$ & FFN $512$--$3840$ & $128$ & $3840$ & $118.8$--$282.4$ \\
Fully elastic & $64$ & $\{6,7,8,9,10\}$ & $384$--$640$ & $64$ & $2560$ & $119.0$--$287.5$ \\
\Ours{} & $128$ & $\{4,5,6,7,8\}$ & $512$--$1024$ & $128$ & $1024$ & $118.8$--$279.6$ \\
\midrule
\multicolumn{7}{@{}l}{\textit{$375$M--$906$M scale}} \\
Standalone & $64$ & $\{M\}$, $M=13,15,17,19,21$ & $832$--$1344$ & $64$ & $2.75D_{\mathrm{model}}$ & $377.5$--$901.6$ \\
MatFormer & $128$ & $\{8,17,27,38,51\}$ & FFN $1024$--$6528$ & $128$ & $6528$ & $371.9$--$912.9$ \\
Fully elastic & $64$ & $\{13,15,17,19,21\}$ & $832$--$1344$ & $64$ & $3696$ & $377.5$--$901.6$ \\
\Ours{} & $256$ & $\{4,5,6,7,8\}$ & $1024$--$2048$ & $128$ & $2048$ & $371.9$--$911.5$ \\
\midrule
\multicolumn{7}{@{}l}{\textit{$675$M--$1.69$B scale}} \\
Standalone & $64$ & $\{M\}$, $M=13,15,17,19,21$ & $832$--$1344$ & $64$ & $7.25D_{\mathrm{model}}$ & $676.6$--$1682.0$ \\
MatFormer & $256$ & $\{16,24,34,45,57\}$ & FFN $4096$--$14592$ & $128$ & $14592$ & $673.8$--$1705.6$ \\
Fully elastic & $64$ & $\{13,15,17,19,21\}$ & $832$--$1344$ & $64$ & $9744$ & $676.6$--$1682.0$ \\
\Ours{} & $256$ & $\{4,5,6,7,8\}$ & $1024$--$2048$ & $128$ & $8192$ & $673.8$--$1704.2$ \\
\bottomrule
\end{tabular}
\end{table}

\begin{table}[t]
\caption{\textbf{\boldmath Architecture settings at the $119$M--$285$M scale.} The upper block is shared by all models, and the lower block lists the settings that differ between them (base widths, supported sets and sizes of all three scales in Table~\ref{tab:models}).}
\label{tab:arch}
\centering
\footnotesize
\setlength{\tabcolsep}{3pt}
\begin{tabular}{@{}L{0.205\linewidth}lL{0.12\linewidth}L{0.145\linewidth}L{0.17\linewidth}L{0.18\linewidth}@{}}
\toprule
\multicolumn{6}{@{}l}{\emph{Shared by all models}} \\
Layers & \multicolumn{5}{l@{}}{$32$ pre-norm layers with residual-branch multiplier $1$} \\
Normalization & \multicolumn{5}{l@{}}{RMSNorm with learnable gains and $\epsilon_N=10^{-5}$; query--key RMSNorm per head} \\
Attention & \multicolumn{5}{l@{}}{rotary embeddings with base $10{,}000$, logit scale $d_{\mathrm{head}}^{-1/2}$, elementwise sigmoid gate} \\
FFN and biases & \multicolumn{5}{l@{}}{SwiGLU; no biases} \\
Embeddings & \multicolumn{5}{l@{}}{untied, over the $50{,}432$-token GPT-NeoX vocabulary} \\
Initialization & \multicolumn{5}{l@{}}{truncated normal at $\pm3$ standard deviations} \\
\midrule
Setting & Proxy & Standalone & MatFormer & Fully elastic & \Ours{} \\
\midrule
Elastic dimensions & -- & -- & FFN hidden width & residual, heads, FFN hidden width, embedding, readout & residual, heads, FFN hidden width, embedding, terminal norm, readout \\ \cmidrule{1-6}
Trunk normalization & RMSNorm & RMSNorm & RMSNorm & over the active width & group-local over $\mathcal G_1,\mathcal G_2,\mathcal G_3,\mathcal G_4,\mathcal G_5$ \\ \cmidrule{1-6}
Terminal norm and readout & single & single & shared, one learned $\alpha$ & shared and sliced, learned $\alpha_s$ & per width: $N^{\rm out}_s$, $U_s$, learned $\alpha_s$ \\ \cmidrule{1-6}
Sub-networks per step & -- & -- & one, uniformly at random & one, uniformly at random & all five, $\omega_s=(1,1,1,1,4)$ \\ \cmidrule{1-6}
Parameterization & \mup{}, $m=1$ & \mup{}, $m=\text{width}/128$ & \muE{}, $m=4$ (Table~\ref{tab:param-ffn}) & \muE{}, $m=1/2$ (Table~\ref{tab:param-full}) & \muE{}, $m=1$ (Table~\ref{tab:widthparam}) \\ \cmidrule{1-6}
Parameters (M) & $17.1$ & $119.0$--$287.5$ & $118.8$--$282.4$ & $119.0$--$287.5$ & $118.8$--$279.6$ \\
\bottomrule
\end{tabular}
\end{table}

\subsection{Optimization}
\label{app:optim}

Table~\ref{tab:hparams} lists the optimization settings used unless otherwise specified: the proxy sweeps, the elastic models, the baselines and the standalone models differ only in their architecture and in the per-group multipliers of their parameterization.
The training objective includes the $z$-loss of Equation~\ref{eq:loss} with $\lambda_z=10^{-4}$; all reported losses, both training and validation, are the cross-entropy alone, averaged over non-padded tokens.

\begin{table}[t]
\caption{\textbf{Training HPs} used unless otherwise specified (Appendix~\ref{app:configs} lists the changes at the two larger scales). The base values $\eta_0$, $\sigma_0$ and $\lambda_0$ are scaled per parameter group by \muE{} (Table~\ref{tab:widthparam} and Appendix~\ref{app:other-param}) or by \mup{}; SP runs use them verbatim.}
\label{tab:hparams}
\centering
\small
\setlength{\tabcolsep}{4pt}
\begin{tabular}{@{}ll@{}}
\toprule
HP & Value \\
\midrule
Sequence length & $2{,}048$ \\
Batch size & $128$ sequences ($262{,}144$ tokens) \\
Training steps & $25{,}000$ ($6.55$B tokens) \\
Learning-rate schedule & warmup-stable-decay \\
Learning-rate warmup & linear, first $1{,}000$ steps \\
Learning-rate decay & linear to $0$, from step $20{,}000$ to step $25{,}000$ \\
Base learning rate $\eta_0$ & $2\times10^{-3}$ \\
Base initialization $\sigma_0$ & $3.3\times10^{-3}$, truncated normal at $\pm3$ standard deviations \\
Optimizer & AdamW with decoupled weight decay \\
Base weight decay $\lambda_0$ & $0.01$; none on norm gains and $\alpha_s$ \\
$\beta_1$, $\beta_2$ & $0.95$, $0.95$ \\
AdamW $\epsilon$ & $10^{-14}$ \\
Gradient clipping & global norm $4$ \\
Step skipping & after step $5{,}000$, a step is skipped if its gradient norm exceeds $20\times$ the clip value \\
$z$-loss coefficient $\lambda_z$ & $10^{-4}$ \\
Precision & bf16 mixed precision with fp32 master weights and fp32 gradient reduction \\
Parallelism & hybrid-sharded data parallelism (4-GPU shards, two replicas) \\
Hardware & one node of 8 A100, H100 or H200 GPUs \\
Seeds & $95$ (model), $1$ (data) \\
Validation & C4 validation loss every $2{,}500$ steps, reported at step $25{,}000$ \\
\bottomrule
\end{tabular}
\end{table}

\subsection{Parameter counts}
\label{app:params}

The parameters of a sub-network count the non-zero weights it reads, its embedding columns, and its own terminal norm and readout; the two tensors of a frozen-plus-delta split (Equation~\ref{eq:split}) count once.
The baselines were sized to these counts (Table~\ref{tab:arch}), and Table~\ref{tab:main-params} lists them for every sub-network of the main results at the three scales.

\begin{table}[t]
\caption{\textbf{Parameters (M) of the sub-networks of Table~\ref{tab:main-scales}} at each scale and tier, counted as in Appendix~\ref{app:params}; Avg.\ averages the five tiers. Standalone, sampled standalone and the fully elastic model share the same five dense architectures.}
\label{tab:main-params}
\centering
\begin{tabular}{@{}c@{\hspace{3pt}}lcccccc@{}}
\toprule
 & Method & Low & Medium & High & Extra & Max & Avg. \\
\midrule
\multirow{5}{*}{\rotatebox[origin=c]{90}{\strut $119$M--$285$M}} & Standalone & 119.0 & 154.4 & 194.3 & 238.7 & 287.5 & 198.8 \\
 & Sampled standalone & 119.0 & 154.4 & 194.3 & 238.7 & 287.5 & 198.8 \\
 & MatFormer & 118.8 & 150.3 & 188.0 & 232.1 & 282.4 & 194.3 \\
 & Fully elastic & 119.0 & 154.4 & 194.3 & 238.7 & 287.5 & 198.8 \\
 & \textbf{\Ours} & 118.8 & 152.7 & 190.8 & 233.1 & 279.6 & 195.0 \\
\midrule
\multirow{5}{*}{\rotatebox[origin=c]{90}{\strut $375$M--$906$M}} & Standalone & 377.5 & 487.7 & 611.8 & 749.8 & 901.6 & 625.7 \\
 & Sampled standalone & 377.5 & 487.7 & 611.8 & 749.8 & 901.6 & 625.7 \\
 & MatFormer & 371.9 & 485.1 & 610.9 & 749.3 & 912.9 & 626.0 \\
 & Fully elastic & 377.5 & 487.7 & 611.8 & 749.8 & 901.6 & 625.7 \\
 & \textbf{\Ours} & 371.9 & 481.6 & 608.1 & 751.4 & 911.5 & 624.9 \\
\midrule
\multirow{5}{*}{\rotatebox[origin=c]{90}{\strut $675$M--$1.69$B}} & Standalone & 676.6 & 885.8 & 1123.2 & 1388.5 & 1682.0 & 1151.2 \\
 & Sampled standalone & 676.6 & 885.8 & 1123.2 & 1388.5 & 1682.0 & 1151.2 \\
 & MatFormer & 673.8 & 875.2 & 1126.8 & 1403.7 & 1705.6 & 1157.0 \\
 & Fully elastic & 676.6 & 885.8 & 1123.2 & 1388.5 & 1682.0 & 1151.2 \\
 & \textbf{\Ours} & 673.8 & 878.0 & 1117.7 & 1393.1 & 1704.2 & 1153.4 \\
\bottomrule
\end{tabular}
\end{table}

\subsection{Training and inference cost}
\label{app:cost}

\paragraph{Compute.}
By Equation~\ref{eq:cost}, a step of \Ours{} runs one trunk pass at the full width and adds only one terminal norm and readout per supported width.
Updating every width once takes one trunk pass per width with MatFormer, the fully elastic model and the standalone models, and sampling one width per step instead makes the cost vary from step to step.

\paragraph{Measurement protocol.}
For Table~\ref{tab:main-scale1-cost}, each tier runs as its own sub-network on A100-40GB GPUs, with the model sharded as in training.
Latency is the mean of three passes after two warm-up passes (forward and backward for training, forward only for inference), and peak training memory is PyTorch's peak allocated memory per GPU during a training step, which for \Ours{} trains all five tiers.

\paragraph{Activation checkpointing.}
The runs of Table~\ref{tab:main-scale1-cost} train with activation checkpointing on every other transformer block ($16$ of $32$), whose forward pass is recomputed during the backward pass.
Training latency therefore includes this recomputation (on H100 GPUs, the Standalone models train $14$--$19\%$ faster without it), and peak training memory keeps only the inputs of the checkpointed blocks (without checkpointing, Standalone needs $1.9\times$ the memory); inference latency is a plain forward pass and is unaffected.

\paragraph{Sampled standalone.}
Sampled standalone trains the same five models as Standalone, one per step, so its per-tier latency and memory are those of Standalone: in its training runs, the logged latencies agree within $2\%$, and the $2.2$--$2.6$\,GB of extra memory it logged comes from our implementation keeping the four inactive models' weights and optimizer states on the GPU, which offloading them would remove.

\paragraph{Observations.}
Two observations emerge:
\begin{itemize}
\item \textbf{The shared trunk is the saving of the architecture.} One training step of \Ours{} updates all five tiers, whereas updating every tier once takes five steps with MatFormer and the fully elastic model (one per tier) and one step of each model with Standalone.
\item \textbf{Fused kernels lower the per-tier cost.} At equal parameters, all methods need about the same effective FLOPs per tier, so the lower per-tier latency and memory of \Ours{} in Table~\ref{tab:main-scale1-cost} come from our fused implementation: Triton kernels compute the RMSNorms of every transformer block, including the query and key normalizations, and read the queries and keys directly from the output of the QKV projection without copying them.
\end{itemize}

\begin{table}[t]
\caption{\textbf{\boldmath Latency and peak training memory at the $119$M--$285$M scale} on A100-40GB GPUs, for one GPU's batch of $16$ sequences of $2{,}048$ tokens, with each tier run as its own sub-network (protocol in Appendix~\ref{app:cost}). Entries are means over runs, with the standard deviation as a subscript; Avg.\ averages the tiers. $^\dagger$Five separately trained models.}
\label{tab:main-scale1-cost}
\centering
\setlength{\tabcolsep}{4pt}
\begin{tabular}{@{}lcccccc@{}}
\toprule
\multirow{2}{*}{Method} & Tier~1 & Tier~2 & Tier~3 & Tier~4 & Tier~5 & \multirow{2}{*}{Avg.} \\
 & Low & Medium & High & Extra & Max & \\
\midrule
\multicolumn{7}{@{}l}{\textit{Train latency (ms, forward + backward)}} \\
Standalone$^\dagger$ & 570\textsubscript{$\pm$2} & 644\textsubscript{$\pm$3} & 723\textsubscript{$\pm$6} & 834\textsubscript{$\pm$5} & 917\textsubscript{$\pm$10} & 737\textsubscript{$\pm$3} \\ \addlinespace[2pt]
Sampled standalone & \multicolumn{6}{c}{\textit{as Standalone}} \\
MatFormer & 641\textsubscript{$\pm$1} & 677\textsubscript{$\pm$2} & 722\textsubscript{$\pm$4} & 776\textsubscript{$\pm$5} & 834\textsubscript{$\pm$6} & 730\textsubscript{$\pm$4} \\
Fully elastic & 597\textsubscript{$\pm$4} & 671\textsubscript{$\pm$3} & 748\textsubscript{$\pm$5} & 858\textsubscript{$\pm$7} & 933\textsubscript{$\pm$9} & 761\textsubscript{$\pm$6} \\
\textbf{\Ours} & 437\textsubscript{$\pm$11} & 527\textsubscript{$\pm$13} & 603\textsubscript{$\pm$8} & 690\textsubscript{$\pm$2} & 747\textsubscript{$\pm$3} & 601\textsubscript{$\pm$6} \\
\midrule
\multicolumn{7}{@{}l}{\textit{Inference latency (ms, forward)}} \\
Standalone$^\dagger$ & 151\textsubscript{$\pm$2} & 175\textsubscript{$\pm$1} & 199\textsubscript{$\pm$2} & 228\textsubscript{$\pm$2} & 254\textsubscript{$\pm$2} & 201\textsubscript{$\pm$1} \\ \addlinespace[2pt]
Sampled standalone & \multicolumn{6}{c}{\textit{as Standalone}} \\
MatFormer & 175\textsubscript{$\pm$1} & 186\textsubscript{$\pm$1} & 200\textsubscript{$\pm$1} & 216\textsubscript{$\pm$2} & 235\textsubscript{$\pm$2} & 202\textsubscript{$\pm$1} \\
Fully elastic & 160\textsubscript{$\pm$2} & 183\textsubscript{$\pm$1} & 207\textsubscript{$\pm$1} & 236\textsubscript{$\pm$1} & 260\textsubscript{$\pm$1} & 209\textsubscript{$\pm$1} \\
\textbf{\Ours} & 141\textsubscript{$\pm$10} & 167\textsubscript{$\pm$15} & 178\textsubscript{$\pm$12} & 205\textsubscript{$\pm$1} & 218\textsubscript{$\pm$2} & 182\textsubscript{$\pm$8} \\
\midrule
\multicolumn{7}{@{}l}{\textit{Peak training memory (GB per GPU)}} \\
Standalone$^\dagger$ & 15.0\textsubscript{$\pm$.0} & 17.5\textsubscript{$\pm$.0} & 20.1\textsubscript{$\pm$.0} & 22.6\textsubscript{$\pm$.0} & 25.2\textsubscript{$\pm$.0} & 20.1\textsubscript{$\pm$.0} \\ \addlinespace[2pt]
Sampled standalone & \multicolumn{6}{c}{\textit{as Standalone}} \\
MatFormer & 15.8\textsubscript{$\pm$.0} & 17.9\textsubscript{$\pm$.0} & 20.3\textsubscript{$\pm$.0} & 23.2\textsubscript{$\pm$.0} & 26.5\textsubscript{$\pm$.0} & 20.7\textsubscript{$\pm$.0} \\
Fully elastic & 16.0\textsubscript{$\pm$.0} & 18.5\textsubscript{$\pm$.0} & 20.9\textsubscript{$\pm$.0} & 23.4\textsubscript{$\pm$.0} & 25.8\textsubscript{$\pm$.0} & 20.9\textsubscript{$\pm$.0} \\
\textbf{\Ours} & \multicolumn{5}{c}{21.6\textsubscript{$\pm$.0} (a step that trains all five tiers)} & -- \\
\bottomrule
\end{tabular}
\end{table}

\section{Evaluation details}
\label{app:eval}

\paragraph{Validation loss.}
The validation loss is the next-token cross-entropy on held-out C4 text, averaged over non-padded tokens, computed every $2{,}500$ steps and reported at the final step.
Each supported sub-network is extracted from the supernet as in Section~\ref{sec:training} and evaluated on its own, on the same validation batches as every other model, so losses are comparable across widths, families and the standalone models.

\paragraph{Downstream tasks.}
During pretraining, we also probe 5-shot accuracy on ARC-Easy, PIQA and HellaSwag at the same interval.
After pretraining, we evaluate every sub-network on $13$ datasets spanning five skill areas, following the protocol of DCLM and OLMoE, in a 5-shot setting on the standard community splits (Table~\ref{tab:eval-datasets}).
Multiple-choice tasks are scored by accuracy, with a random baseline of $25\%$ for four-choice and $50\%$ for two-choice items; WinoGrande uses its cloze formulation, LAMBADA is scored by exact match of the final word, and SVAMP by exact match of the generated numerical answer.
For elastic models, we report the accuracy of every sub-network and their mean.
The datasets of each skill area are described below.

\begin{table}[htp]
\caption{\textbf{Downstream evaluation suite.} All tasks are evaluated 5-shot; ARC-Easy, PIQA and HellaSwag are also probed during pretraining. MC: multiple-choice accuracy; EM: exact match.}
\label{tab:eval-datasets}
\centering
\footnotesize
\setlength{\tabcolsep}{4pt}
\begin{tabular}{@{}lllrlc@{}}
\toprule
Field & Dataset & Split & \#Items & Metric & Random \\
\midrule
\multirow{3}{*}{World knowledge} & MMLU & test & $14{,}042$ & MC accuracy & $25\%$ \\
 & ARC-Easy & test & $2{,}376$ & MC accuracy & $25\%$ \\
 & ARC-Challenge & test & $1{,}172$ & MC accuracy & $25\%$ \\
\midrule
Symbolic problem solving & SVAMP & test & $300$ & EM & $0\%$ \\
\midrule
\multirow{2}{*}{Commonsense reasoning} & COPA & val & $100$ & MC accuracy & $50\%$ \\
 & PIQA & val & $1{,}838$ & MC accuracy & $50\%$ \\
\midrule
\multirow{3}{*}{Language understanding} & HellaSwag & val & $10{,}042$ & MC accuracy & $25\%$ \\
 & WinoGrande & val & $1{,}267$ & cloze accuracy & $50\%$ \\
 & LAMBADA & test & $5{,}153$ & final-word EM & $0\%$ \\
\midrule
\multirow{4}{*}{Reading comprehension} & BoolQ & val & $3{,}270$ & MC accuracy & $50\%$ \\
 & AGIEval LSAT-RC & test & $268$ & MC accuracy & $25\%$ \\
 & AGIEval LSAT-LR & test & $510$ & MC accuracy & $25\%$ \\
 & AGIEval SAT-EN & test & $206$ & MC accuracy & $25\%$ \\
\bottomrule
\end{tabular}
\end{table}

\paragraph{World knowledge.}
Exam-style and science questions that probe academic knowledge.
\begin{itemize}
\item \textbf{MMLU}~\citep{hendrycks2021mmlu} ($14{,}042$ items, four-choice): exam-style questions from $57$ subjects across the sciences, humanities and social sciences.
\item \textbf{ARC-Easy} and \textbf{ARC-Challenge}~\citep{clark2018arc} ($2{,}376$ and $1{,}172$ items, four-choice): grade-school science questions; the Challenge subset holds the questions that retrieval and co-occurrence methods answer incorrectly.
\end{itemize}

\paragraph{Symbolic problem solving.}
Arithmetic reasoning with a generated answer.
\begin{itemize}
\item \textbf{SVAMP}~\citep{patel2021svamp} ($300$ problems, free-response): short arithmetic word problems with numerical answers; the model generates a derivation followed by the answer, which is scored by exact match.
\end{itemize}

\paragraph{Commonsense reasoning.}
Everyday causal and physical knowledge.
\begin{itemize}
\item \textbf{COPA}~\citep{gordon2012copa} ($100$ items, two-choice): choose the more plausible cause or effect of a premise.
\item \textbf{PIQA}~\citep{bisk2020piqa} ($1{,}838$ items, two-choice): choose the more sensible solution to a physical task.
\end{itemize}

\paragraph{Language understanding.}
Narrative coherence, coreference and the use of broad context.
\begin{itemize}
\item \textbf{HellaSwag}~\citep{zellers2019hellaswag} ($10{,}042$ items, four-choice): choose the most plausible continuation of a partial narrative.
\item \textbf{WinoGrande}~\citep{sakaguchi2021winogrande} ($1{,}267$ items, two-choice): fill a blank with one of two candidates so that the sentence is coherent.
\item \textbf{LAMBADA}~\citep{paperno2016lambada} ($5{,}153$ passages): predict the final word of a passage that requires broad discourse context.
\end{itemize}

\paragraph{Reading comprehension.}
Questions answered from a given passage or argument.
\begin{itemize}
\item \textbf{BoolQ}~\citep{clark2019boolq} ($3{,}270$ items, yes/no): answer a binary question about a short passage.
\item \textbf{AGIEval LSAT-RC, LSAT-LR and SAT-EN}~\citep{zhong2024agieval} ($268$, $510$ and $206$ items, four-choice): LSAT reading-comprehension and logical-reasoning questions and SAT English questions.
\end{itemize}

\section{Complete main results}
\label{app:main-full}

Tables~\ref{tab:main-scale1-tier1}--\ref{tab:main-scale3-tier5} give every metric of every sub-network in Table~\ref{tab:main-scales}, one table per scale and tier, and Figures~\ref{fig:main-scale1}--\ref{fig:main-scale3} show the training curves of each scale:
\begin{itemize}
\item \textbf{\boldmath $119$M--$285$M scale} ($25{,}000$ steps, three runs): Figure~\ref{fig:main-scale1} and Tables~\ref{tab:main-scale1-tier1}--\ref{tab:main-scale1-tier5};
\item \textbf{\boldmath $375$M--$906$M scale} ($50{,}000$ steps, three runs): Figure~\ref{fig:main-scale2} and Tables~\ref{tab:main-scale2-tier1}--\ref{tab:main-scale2-tier5};
\item \textbf{\boldmath $675$M--$1.69$B scale} ($100{,}000$ steps, single runs): Figure~\ref{fig:main-scale3} and Tables~\ref{tab:main-scale3-tier1}--\ref{tab:main-scale3-tier5}.
\end{itemize}

\paragraph{Tables.}
Each table gives the parameter count, the train loss (cross-entropy without the $z$-loss, mean over the last $500$ steps), the validation loss at the last step, and the 5-shot accuracy (\%) on each of the $13$ downstream tasks of Appendix~\ref{app:eval}, grouped by field as there (LSAT-RC, LSAT-LR and SAT-EN are from AGIEval), with their average (the accuracy of Table~\ref{tab:main-scales}).
At the $119$M--$285$M and $375$M--$906$M scales, entries are the mean over three runs of the same configuration and seed, with the standard deviation as a subscript; at the $675$M--$1.69$B scale, every entry is a single run.
Standalone$^\dagger$ is five separately trained models, each trained for the full number of steps, which together take $5\times$ the optimizer steps of any other column.
Bold marks the best entry among the four methods that train all five sizes within one run.

\paragraph{Curves.}
Each figure has one column per tier and plots, against the global training step, the train loss (the logged means over $50$-step intervals), the validation loss, and 5-shot accuracy on ARC-Easy, HellaSwag and PIQA, the three benchmarks evaluated during training.
Sampled standalone, MatFormer and the fully elastic model sample one sub-network uniformly at random per step, \Ours{} updates all sizes at every step, and Standalone trains each size separately for the full number of steps ($5\times$ the steps).
Values are plotted as logged, without smoothing or clipping, and y-axes are shared within each row.

\begin{figure}[p]
\centering
\includegraphics[width=\linewidth]{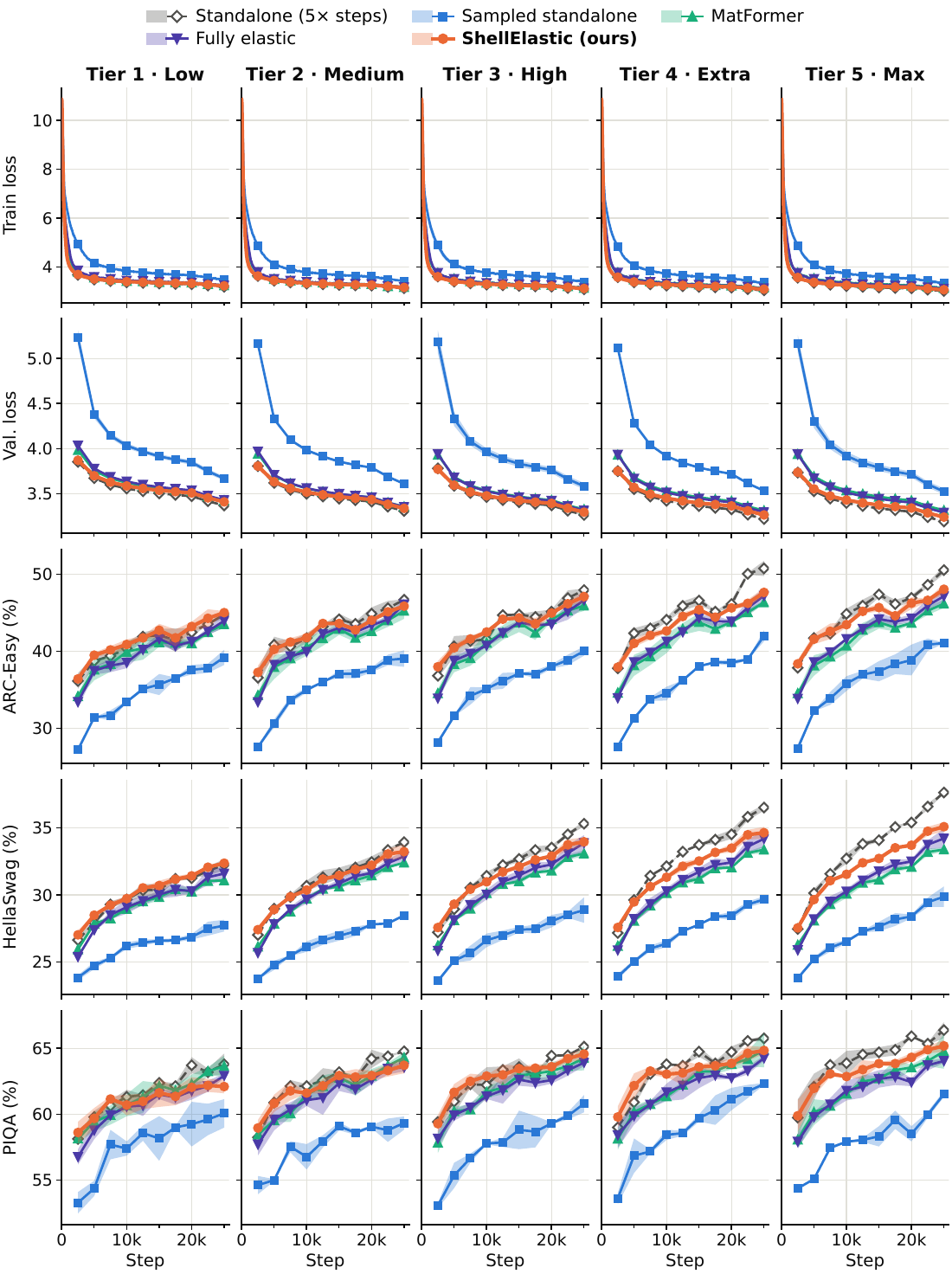}
\caption{\textbf{\boldmath Training curves at the $119$M--$285$M scale} (Table~\ref{tab:main-scales}). Columns are the five tiers, from Tier~1 (Low, about $119$M parameters) to Tier~5 (Max, about $285$M); rows are the train loss, the validation loss and 5-shot accuracy on ARC-Easy, HellaSwag and PIQA. Lines are means over three runs, with shaded bands of $\pm1$ standard deviation; conventions in Appendix~\ref{app:main-full}.}
\label{fig:main-scale1}
\end{figure}

\begin{table}[tbp]
\caption{\textbf{\boldmath Per-task results at the $119$M--$285$M scale, Tier~1 (Low, about $119$M parameters).} Sub-network width $384$ for Standalone, sampled standalone and the fully elastic model, FFN hidden width $512$ at $D_{\mathrm{model}}=512$ for MatFormer, and $512$ for \Ours{}. Mean over three runs, with the standard deviation as a subscript.}
\label{tab:main-scale1-tier1}
\centering
\setlength{\tabcolsep}{3pt}
\begin{tabular}{@{}lccccc@{}}
\toprule
\multicolumn{1}{@{}l@{}}{\textbf{Tier~1\,$\cdot$\,Low}} & Standalone$^\dagger$ & Sampled standalone & MatFormer & Fully elastic & \textbf{\Ours} \\
\midrule
Params (M) & 119.0 & 119.0 & 118.8 & 119.0 & 118.8 \\
Train loss $\downarrow$ & 3.1951\textsubscript{$\pm$.0035} & 3.4911\textsubscript{$\pm$.0235} & 3.2536\textsubscript{$\pm$.0032} & 3.2572\textsubscript{$\pm$.0091} & \textbf{3.2305}\textsubscript{$\pm$.0015} \\
Val.\ loss $\downarrow$ & 3.3695\textsubscript{$\pm$.0034} & 3.6674\textsubscript{$\pm$.0242} & 3.4248\textsubscript{$\pm$.0036} & 3.4291\textsubscript{$\pm$.0097} & \textbf{3.4086}\textsubscript{$\pm$.0009} \\
\midrule
\multicolumn{6}{@{}l}{\emph{5-shot accuracy (\%) $\uparrow$}} \\
MMLU & 24.5\textsubscript{$\pm$.3} & 24.4\textsubscript{$\pm$.2} & \textbf{24.8}\textsubscript{$\pm$.5} & 24.7\textsubscript{$\pm$.7} & 24.7\textsubscript{$\pm$.6} \\
ARC-Easy & 44.8\textsubscript{$\pm$.5} & 39.2\textsubscript{$\pm$1.0} & 43.5\textsubscript{$\pm$.5} & 44.0\textsubscript{$\pm$.9} & \textbf{45.0}\textsubscript{$\pm$.2} \\
ARC-Challenge & 23.3\textsubscript{$\pm$1.7} & 22.7\textsubscript{$\pm$.5} & \textbf{23.9}\textsubscript{$\pm$.7} & 23.3\textsubscript{$\pm$1.2} & 23.3\textsubscript{$\pm$.5} \\ \addlinespace[3pt]
SVAMP & 5.7\textsubscript{$\pm$.5} & 6.3\textsubscript{$\pm$.9} & \textbf{6.8}\textsubscript{$\pm$1.5} & 6.4\textsubscript{$\pm$.8} & 5.7\textsubscript{$\pm$.3} \\ \addlinespace[3pt]
COPA & 61.0\textsubscript{$\pm$.0} & 54.0\textsubscript{$\pm$5.7} & 57.7\textsubscript{$\pm$.6} & 58.7\textsubscript{$\pm$1.5} & \textbf{59.7}\textsubscript{$\pm$.6} \\
PIQA & 63.8\textsubscript{$\pm$.8} & 60.1\textsubscript{$\pm$1.1} & \textbf{63.6}\textsubscript{$\pm$.8} & 62.9\textsubscript{$\pm$.2} & 62.1\textsubscript{$\pm$.5} \\ \addlinespace[3pt]
HellaSwag & 32.2\textsubscript{$\pm$.4} & 27.7\textsubscript{$\pm$.4} & 31.1\textsubscript{$\pm$.1} & 31.6\textsubscript{$\pm$.5} & \textbf{32.4}\textsubscript{$\pm$.2} \\
WinoGrande & 52.5\textsubscript{$\pm$.1} & \textbf{52.1}\textsubscript{$\pm$.2} & 50.7\textsubscript{$\pm$.8} & 49.0\textsubscript{$\pm$1.3} & 50.7\textsubscript{$\pm$.8} \\
LAMBADA & 25.7\textsubscript{$\pm$1.1} & 17.4\textsubscript{$\pm$1.7} & 24.7\textsubscript{$\pm$.4} & 23.6\textsubscript{$\pm$1.0} & \textbf{25.9}\textsubscript{$\pm$.7} \\ \addlinespace[3pt]
BoolQ & 51.1\textsubscript{$\pm$7.2} & 54.9\textsubscript{$\pm$7.6} & 50.4\textsubscript{$\pm$5.8} & 53.0\textsubscript{$\pm$.6} & \textbf{55.1}\textsubscript{$\pm$7.2} \\
LSAT-RC & 25.2\textsubscript{$\pm$1.3} & \textbf{25.2}\textsubscript{$\pm$.3} & 24.3\textsubscript{$\pm$1.7} & 25.1\textsubscript{$\pm$1.8} & 24.8\textsubscript{$\pm$1.8} \\
LSAT-LR & 26.9\textsubscript{$\pm$.0} & \textbf{28.8}\textsubscript{$\pm$.3} & 27.9\textsubscript{$\pm$.6} & 25.4\textsubscript{$\pm$1.3} & 26.2\textsubscript{$\pm$1.0} \\
SAT-EN & 26.5\textsubscript{$\pm$1.7} & \textbf{26.9}\textsubscript{$\pm$.3} & 26.7\textsubscript{$\pm$1.3} & 26.1\textsubscript{$\pm$1.7} & 25.6\textsubscript{$\pm$.3} \\
\midrule
Average & 35.6\textsubscript{$\pm$.9} & 33.8\textsubscript{$\pm$.2} & 35.1\textsubscript{$\pm$.3} & 34.9\textsubscript{$\pm$.2} & \textbf{35.5}\textsubscript{$\pm$.4} \\
\bottomrule
\end{tabular}
\end{table}

\begin{table}[tbp]
\caption{\textbf{\boldmath Per-task results at the $119$M--$285$M scale, Tier~2 (Medium, about $153$M parameters).} Sub-network width $448$ for Standalone, sampled standalone and the fully elastic model, FFN hidden width $1152$ at $D_{\mathrm{model}}=512$ for MatFormer, and $640$ for \Ours{}. Mean over three runs, with the standard deviation as a subscript.}
\label{tab:main-scale1-tier2}
\centering
\setlength{\tabcolsep}{3pt}
\begin{tabular}{@{}lccccc@{}}
\toprule
\multicolumn{1}{@{}l@{}}{\textbf{Tier~2\,$\cdot$\,Medium}} & Standalone$^\dagger$ & Sampled standalone & MatFormer & Fully elastic & \textbf{\Ours} \\
\midrule
Params (M) & 154.4 & 154.4 & 150.3 & 154.4 & 152.7 \\
Train loss $\downarrow$ & 3.1344\textsubscript{$\pm$.0019} & 3.4301\textsubscript{$\pm$.0095} & 3.1821\textsubscript{$\pm$.0034} & 3.1783\textsubscript{$\pm$.0083} & \textbf{3.1641}\textsubscript{$\pm$.0031} \\
Val.\ loss $\downarrow$ & 3.3081\textsubscript{$\pm$.0016} & 3.6091\textsubscript{$\pm$.0104} & 3.3566\textsubscript{$\pm$.0037} & 3.3536\textsubscript{$\pm$.0083} & \textbf{3.3375}\textsubscript{$\pm$.0027} \\
\midrule
\multicolumn{6}{@{}l}{\emph{5-shot accuracy (\%) $\uparrow$}} \\
MMLU & 25.0\textsubscript{$\pm$.4} & \textbf{24.9}\textsubscript{$\pm$.3} & \textbf{24.9}\textsubscript{$\pm$.2} & 24.7\textsubscript{$\pm$.3} & 23.8\textsubscript{$\pm$.7} \\
ARC-Easy & 46.7\textsubscript{$\pm$.1} & 39.1\textsubscript{$\pm$1.1} & 45.2\textsubscript{$\pm$1.0} & \textbf{46.1}\textsubscript{$\pm$.8} & 45.8\textsubscript{$\pm$.1} \\
ARC-Challenge & 24.4\textsubscript{$\pm$.4} & 23.0\textsubscript{$\pm$.7} & 23.6\textsubscript{$\pm$.0} & \textbf{24.0}\textsubscript{$\pm$.4} & 23.5\textsubscript{$\pm$.4} \\ \addlinespace[3pt]
SVAMP & 6.8\textsubscript{$\pm$.2} & 5.8\textsubscript{$\pm$.7} & 6.7\textsubscript{$\pm$1.5} & \textbf{7.2}\textsubscript{$\pm$1.4} & 6.3\textsubscript{$\pm$.9} \\ \addlinespace[3pt]
COPA & 59.0\textsubscript{$\pm$1.4} & 58.0\textsubscript{$\pm$1.4} & 60.0\textsubscript{$\pm$1.7} & 60.3\textsubscript{$\pm$1.5} & \textbf{61.7}\textsubscript{$\pm$1.2} \\
PIQA & 64.8\textsubscript{$\pm$.0} & 59.3\textsubscript{$\pm$.5} & \textbf{64.3}\textsubscript{$\pm$.4} & 63.5\textsubscript{$\pm$.3} & 63.7\textsubscript{$\pm$.6} \\ \addlinespace[3pt]
HellaSwag & 33.9\textsubscript{$\pm$.1} & 28.5\textsubscript{$\pm$.1} & 32.4\textsubscript{$\pm$.1} & 32.9\textsubscript{$\pm$.3} & \textbf{33.2}\textsubscript{$\pm$.5} \\
WinoGrande & 51.7\textsubscript{$\pm$2.2} & \textbf{51.5}\textsubscript{$\pm$2.7} & 50.5\textsubscript{$\pm$.6} & 50.1\textsubscript{$\pm$.3} & 49.3\textsubscript{$\pm$1.8} \\
LAMBADA & 28.4\textsubscript{$\pm$.3} & 18.9\textsubscript{$\pm$.3} & 26.4\textsubscript{$\pm$.6} & 26.4\textsubscript{$\pm$1.2} & \textbf{29.1}\textsubscript{$\pm$.7} \\ \addlinespace[3pt]
BoolQ & 49.9\textsubscript{$\pm$4.8} & 48.1\textsubscript{$\pm$6.2} & 50.3\textsubscript{$\pm$6.1} & 50.7\textsubscript{$\pm$.6} & \textbf{57.2}\textsubscript{$\pm$3.4} \\
LSAT-RC & 24.6\textsubscript{$\pm$4.2} & 24.8\textsubscript{$\pm$2.9} & 22.0\textsubscript{$\pm$3.3} & 24.6\textsubscript{$\pm$.4} & \textbf{25.6}\textsubscript{$\pm$1.4} \\
LSAT-LR & 24.9\textsubscript{$\pm$.6} & 26.3\textsubscript{$\pm$1.9} & 25.9\textsubscript{$\pm$1.5} & 26.3\textsubscript{$\pm$3.0} & \textbf{29.2}\textsubscript{$\pm$.9} \\
SAT-EN & 25.7\textsubscript{$\pm$.0} & 25.5\textsubscript{$\pm$1.7} & 25.2\textsubscript{$\pm$3.0} & 24.8\textsubscript{$\pm$1.3} & \textbf{27.2}\textsubscript{$\pm$3.0} \\
\midrule
Average & 35.8\textsubscript{$\pm$1.0} & 33.4\textsubscript{$\pm$1.1} & 35.2\textsubscript{$\pm$.2} & 35.5\textsubscript{$\pm$.4} & \textbf{36.6}\textsubscript{$\pm$.6} \\
\bottomrule
\end{tabular}
\end{table}

\begin{table}[tbp]
\caption{\textbf{\boldmath Per-task results at the $119$M--$285$M scale, Tier~3 (High, about $192$M parameters).} Sub-network width $512$ for Standalone, sampled standalone and the fully elastic model, FFN hidden width $1920$ at $D_{\mathrm{model}}=512$ for MatFormer, and $768$ for \Ours{}. Mean over three runs, with the standard deviation as a subscript.}
\label{tab:main-scale1-tier3}
\centering
\setlength{\tabcolsep}{3pt}
\begin{tabular}{@{}lccccc@{}}
\toprule
\multicolumn{1}{@{}l@{}}{\textbf{Tier~3\,$\cdot$\,High}} & Standalone$^\dagger$ & Sampled standalone & MatFormer & Fully elastic & \textbf{\Ours} \\
\midrule
Params (M) & 194.3 & 194.3 & 188.0 & 194.3 & 190.8 \\
Train loss $\downarrow$ & 3.0916\textsubscript{$\pm$.0086} & 3.4087\textsubscript{$\pm$.0336} & 3.1561\textsubscript{$\pm$.0036} & 3.1447\textsubscript{$\pm$.0077} & \textbf{3.1229}\textsubscript{$\pm$.0018} \\
Val.\ loss $\downarrow$ & 3.2635\textsubscript{$\pm$.0092} & 3.5826\textsubscript{$\pm$.0344} & 3.3250\textsubscript{$\pm$.0036} & 3.3145\textsubscript{$\pm$.0076} & \textbf{3.2944}\textsubscript{$\pm$.0016} \\
\midrule
\multicolumn{6}{@{}l}{\emph{5-shot accuracy (\%) $\uparrow$}} \\
MMLU & 24.6\textsubscript{$\pm$.4} & \textbf{25.0}\textsubscript{$\pm$.3} & 24.6\textsubscript{$\pm$.4} & 24.6\textsubscript{$\pm$.5} & 24.0\textsubscript{$\pm$.5} \\
ARC-Easy & 47.9\textsubscript{$\pm$.2} & 40.0\textsubscript{$\pm$.6} & 45.9\textsubscript{$\pm$.7} & 46.6\textsubscript{$\pm$1.0} & \textbf{47.1}\textsubscript{$\pm$.8} \\
ARC-Challenge & 23.9\textsubscript{$\pm$.8} & 22.3\textsubscript{$\pm$.8} & 24.3\textsubscript{$\pm$.3} & \textbf{24.5}\textsubscript{$\pm$.7} & 23.6\textsubscript{$\pm$.8} \\ \addlinespace[3pt]
SVAMP & 7.7\textsubscript{$\pm$1.9} & 6.3\textsubscript{$\pm$.0} & 6.6\textsubscript{$\pm$1.5} & \textbf{6.8}\textsubscript{$\pm$3.3} & 6.3\textsubscript{$\pm$1.0} \\ \addlinespace[3pt]
COPA & 60.5\textsubscript{$\pm$2.1} & 55.0\textsubscript{$\pm$1.4} & 60.3\textsubscript{$\pm$2.3} & 60.3\textsubscript{$\pm$3.1} & \textbf{61.7}\textsubscript{$\pm$3.1} \\
PIQA & 65.1\textsubscript{$\pm$.1} & 60.8\textsubscript{$\pm$.7} & 64.3\textsubscript{$\pm$.9} & 63.9\textsubscript{$\pm$.3} & \textbf{64.6}\textsubscript{$\pm$.4} \\ \addlinespace[3pt]
HellaSwag & 35.3\textsubscript{$\pm$.2} & 28.9\textsubscript{$\pm$1.0} & 33.1\textsubscript{$\pm$.2} & \textbf{33.9}\textsubscript{$\pm$.6} & \textbf{33.9}\textsubscript{$\pm$.3} \\
WinoGrande & 52.0\textsubscript{$\pm$.6} & \textbf{50.9}\textsubscript{$\pm$.6} & 50.5\textsubscript{$\pm$.8} & 50.2\textsubscript{$\pm$.5} & 50.0\textsubscript{$\pm$2.1} \\
LAMBADA & 29.6\textsubscript{$\pm$.1} & 19.3\textsubscript{$\pm$1.0} & 27.6\textsubscript{$\pm$.4} & 28.0\textsubscript{$\pm$1.4} & \textbf{29.9}\textsubscript{$\pm$1.1} \\ \addlinespace[3pt]
BoolQ & 54.1\textsubscript{$\pm$.1} & 53.9\textsubscript{$\pm$.8} & 51.1\textsubscript{$\pm$7.6} & 50.1\textsubscript{$\pm$3.5} & \textbf{55.3}\textsubscript{$\pm$1.4} \\
LSAT-RC & 25.6\textsubscript{$\pm$.3} & \textbf{26.3}\textsubscript{$\pm$2.4} & 23.6\textsubscript{$\pm$1.3} & 24.0\textsubscript{$\pm$1.6} & 26.0\textsubscript{$\pm$1.5} \\
LSAT-LR & 24.9\textsubscript{$\pm$.6} & \textbf{27.1}\textsubscript{$\pm$1.7} & \textbf{27.1}\textsubscript{$\pm$1.2} & 25.7\textsubscript{$\pm$2.2} & 26.8\textsubscript{$\pm$1.6} \\
SAT-EN & 24.8\textsubscript{$\pm$.7} & \textbf{26.5}\textsubscript{$\pm$1.0} & \textbf{26.5}\textsubscript{$\pm$3.0} & 25.2\textsubscript{$\pm$1.5} & 26.2\textsubscript{$\pm$1.7} \\
\midrule
Average & 36.6\textsubscript{$\pm$.2} & 34.0\textsubscript{$\pm$.7} & 35.8\textsubscript{$\pm$.5} & 35.7\textsubscript{$\pm$.4} & \textbf{36.6}\textsubscript{$\pm$.4} \\
\bottomrule
\end{tabular}
\end{table}

\begin{table}[tbp]
\caption{\textbf{\boldmath Per-task results at the $119$M--$285$M scale, Tier~4 (Extra, about $236$M parameters).} Sub-network width $576$ for Standalone, sampled standalone and the fully elastic model, FFN hidden width $2816$ at $D_{\mathrm{model}}=512$ for MatFormer, and $896$ for \Ours{}. Mean over three runs, with the standard deviation as a subscript.}
\label{tab:main-scale1-tier4}
\centering
\setlength{\tabcolsep}{3pt}
\begin{tabular}{@{}lccccc@{}}
\toprule
\multicolumn{1}{@{}l@{}}{\textbf{Tier~4\,$\cdot$\,Extra}} & Standalone$^\dagger$ & Sampled standalone & MatFormer & Fully elastic & \textbf{\Ours} \\
\midrule
Params (M) & 238.7 & 238.7 & 232.1 & 238.7 & 233.1 \\
Train loss $\downarrow$ & 3.0474\textsubscript{$\pm$.0006} & 3.3561\textsubscript{$\pm$.0056} & 3.1407\textsubscript{$\pm$.0036} & 3.1233\textsubscript{$\pm$.0071} & \textbf{3.0941}\textsubscript{$\pm$.0015} \\
Val.\ loss $\downarrow$ & 3.2186\textsubscript{$\pm$.0012} & 3.5319\textsubscript{$\pm$.0056} & 3.3123\textsubscript{$\pm$.0037} & 3.2954\textsubscript{$\pm$.0072} & \textbf{3.2641}\textsubscript{$\pm$.0017} \\
\midrule
\multicolumn{6}{@{}l}{\emph{5-shot accuracy (\%) $\uparrow$}} \\
MMLU & 24.6\textsubscript{$\pm$.5} & \textbf{24.7}\textsubscript{$\pm$.1} & 24.5\textsubscript{$\pm$.4} & 24.6\textsubscript{$\pm$.5} & 24.4\textsubscript{$\pm$.7} \\
ARC-Easy & 50.7\textsubscript{$\pm$1.0} & 42.0\textsubscript{$\pm$.7} & 46.3\textsubscript{$\pm$.2} & 47.2\textsubscript{$\pm$.9} & \textbf{47.6}\textsubscript{$\pm$.6} \\
ARC-Challenge & 25.2\textsubscript{$\pm$1.1} & 23.2\textsubscript{$\pm$.4} & 24.0\textsubscript{$\pm$.5} & \textbf{24.8}\textsubscript{$\pm$1.1} & 24.0\textsubscript{$\pm$.5} \\ \addlinespace[3pt]
SVAMP & 8.0\textsubscript{$\pm$.9} & 6.5\textsubscript{$\pm$.7} & \textbf{7.0}\textsubscript{$\pm$.6} & 6.2\textsubscript{$\pm$2.2} & 6.7\textsubscript{$\pm$1.5} \\ \addlinespace[3pt]
COPA & 61.0\textsubscript{$\pm$2.8} & 57.5\textsubscript{$\pm$.7} & 60.7\textsubscript{$\pm$.6} & 61.0\textsubscript{$\pm$.0} & \textbf{63.0}\textsubscript{$\pm$2.6} \\
PIQA & 65.8\textsubscript{$\pm$.2} & 62.3\textsubscript{$\pm$.1} & \textbf{64.9}\textsubscript{$\pm$1.3} & 64.3\textsubscript{$\pm$.2} & 64.8\textsubscript{$\pm$.3} \\ \addlinespace[3pt]
HellaSwag & 36.5\textsubscript{$\pm$.2} & 29.7\textsubscript{$\pm$.2} & 33.4\textsubscript{$\pm$.3} & 34.2\textsubscript{$\pm$.6} & \textbf{34.6}\textsubscript{$\pm$.4} \\
WinoGrande & 50.7\textsubscript{$\pm$.1} & 48.6\textsubscript{$\pm$.4} & 50.1\textsubscript{$\pm$.3} & \textbf{50.2}\textsubscript{$\pm$.6} & 49.8\textsubscript{$\pm$1.4} \\
LAMBADA & 31.7\textsubscript{$\pm$.9} & 20.4\textsubscript{$\pm$.4} & 28.0\textsubscript{$\pm$.4} & 29.2\textsubscript{$\pm$.9} & \textbf{30.2}\textsubscript{$\pm$.9} \\ \addlinespace[3pt]
BoolQ & 50.7\textsubscript{$\pm$1.6} & 51.3\textsubscript{$\pm$5.0} & 50.4\textsubscript{$\pm$7.7} & 49.2\textsubscript{$\pm$2.7} & \textbf{54.5}\textsubscript{$\pm$2.4} \\
LSAT-RC & 25.7\textsubscript{$\pm$1.6} & 25.2\textsubscript{$\pm$.3} & 23.4\textsubscript{$\pm$.9} & 24.1\textsubscript{$\pm$1.5} & \textbf{26.4}\textsubscript{$\pm$1.1} \\
LSAT-LR & 26.5\textsubscript{$\pm$3.6} & 27.8\textsubscript{$\pm$1.1} & \textbf{28.0}\textsubscript{$\pm$2.1} & 25.2\textsubscript{$\pm$1.4} & 27.6\textsubscript{$\pm$2.2} \\
SAT-EN & 24.8\textsubscript{$\pm$3.4} & 25.0\textsubscript{$\pm$.3} & 26.2\textsubscript{$\pm$2.6} & \textbf{26.5}\textsubscript{$\pm$.7} & \textbf{26.5}\textsubscript{$\pm$1.2} \\
\midrule
Average & 37.1\textsubscript{$\pm$.3} & 34.2\textsubscript{$\pm$.1} & 35.9\textsubscript{$\pm$.6} & 35.9\textsubscript{$\pm$.3} & \textbf{36.9}\textsubscript{$\pm$.2} \\
\bottomrule
\end{tabular}
\end{table}

\begin{table}[tbp]
\caption{\textbf{\boldmath Per-task results at the $119$M--$285$M scale, Tier~5 (Max, about $285$M parameters).} Sub-network width $640$ for Standalone, sampled standalone and the fully elastic model, FFN hidden width $3840$ at $D_{\mathrm{model}}=512$ for MatFormer, and $1024$ for \Ours{}. Mean over three runs, with the standard deviation as a subscript.}
\label{tab:main-scale1-tier5}
\centering
\setlength{\tabcolsep}{3pt}
\begin{tabular}{@{}lccccc@{}}
\toprule
\multicolumn{1}{@{}l@{}}{\textbf{Tier~5\,$\cdot$\,Max}} & Standalone$^\dagger$ & Sampled standalone & MatFormer & Fully elastic & \textbf{\Ours} \\
\midrule
Params (M) & 287.5 & 287.5 & 282.4 & 287.5 & 279.6 \\
Train loss $\downarrow$ & 3.0203\textsubscript{$\pm$.0017} & 3.3505\textsubscript{$\pm$.0321} & 3.1436\textsubscript{$\pm$.0035} & 3.1196\textsubscript{$\pm$.0070} & \textbf{3.0713}\textsubscript{$\pm$.0014} \\
Val.\ loss $\downarrow$ & 3.1908\textsubscript{$\pm$.0014} & 3.5239\textsubscript{$\pm$.0340} & 3.3139\textsubscript{$\pm$.0036} & 3.2899\textsubscript{$\pm$.0071} & \textbf{3.2403}\textsubscript{$\pm$.0016} \\
\midrule
\multicolumn{6}{@{}l}{\emph{5-shot accuracy (\%) $\uparrow$}} \\
MMLU & 24.8\textsubscript{$\pm$.9} & 24.5\textsubscript{$\pm$.2} & \textbf{24.8}\textsubscript{$\pm$.4} & 24.3\textsubscript{$\pm$.2} & 24.2\textsubscript{$\pm$.4} \\
ARC-Easy & 50.5\textsubscript{$\pm$.5} & 41.1\textsubscript{$\pm$.1} & 46.2\textsubscript{$\pm$.5} & 47.1\textsubscript{$\pm$.9} & \textbf{48.0}\textsubscript{$\pm$.7} \\
ARC-Challenge & 25.4\textsubscript{$\pm$.4} & 23.2\textsubscript{$\pm$1.1} & 24.2\textsubscript{$\pm$.4} & \textbf{25.1}\textsubscript{$\pm$1.2} & 24.3\textsubscript{$\pm$.8} \\ \addlinespace[3pt]
SVAMP & 5.8\textsubscript{$\pm$.7} & 6.3\textsubscript{$\pm$1.4} & \textbf{7.2}\textsubscript{$\pm$1.0} & 5.7\textsubscript{$\pm$2.4} & 6.4\textsubscript{$\pm$1.2} \\ \addlinespace[3pt]
COPA & 61.0\textsubscript{$\pm$.0} & 58.0\textsubscript{$\pm$2.8} & 60.0\textsubscript{$\pm$1.0} & 60.7\textsubscript{$\pm$1.2} & \textbf{63.3}\textsubscript{$\pm$2.9} \\
PIQA & 66.4\textsubscript{$\pm$.6} & 61.5\textsubscript{$\pm$.2} & 64.8\textsubscript{$\pm$1.3} & 64.0\textsubscript{$\pm$.1} & \textbf{65.2}\textsubscript{$\pm$.5} \\ \addlinespace[3pt]
HellaSwag & 37.6\textsubscript{$\pm$.1} & 29.9\textsubscript{$\pm$.8} & 33.4\textsubscript{$\pm$.1} & 34.2\textsubscript{$\pm$.7} & \textbf{35.1}\textsubscript{$\pm$.3} \\
WinoGrande & 50.7\textsubscript{$\pm$.1} & 49.8\textsubscript{$\pm$1.7} & \textbf{50.4}\textsubscript{$\pm$.9} & 49.9\textsubscript{$\pm$1.1} & 49.8\textsubscript{$\pm$1.4} \\
LAMBADA & 33.0\textsubscript{$\pm$1.9} & 20.0\textsubscript{$\pm$.4} & 27.8\textsubscript{$\pm$.4} & 29.3\textsubscript{$\pm$1.3} & \textbf{30.4}\textsubscript{$\pm$.4} \\ \addlinespace[3pt]
BoolQ & 49.7\textsubscript{$\pm$4.2} & \textbf{55.8}\textsubscript{$\pm$5.4} & 50.7\textsubscript{$\pm$7.3} & 50.6\textsubscript{$\pm$3.4} & 52.3\textsubscript{$\pm$3.1} \\
LSAT-RC & 25.6\textsubscript{$\pm$.3} & 23.9\textsubscript{$\pm$.5} & 25.1\textsubscript{$\pm$.6} & 25.0\textsubscript{$\pm$.6} & \textbf{25.6}\textsubscript{$\pm$1.1} \\
LSAT-LR & 25.7\textsubscript{$\pm$2.2} & 27.2\textsubscript{$\pm$.1} & 27.0\textsubscript{$\pm$3.0} & 25.4\textsubscript{$\pm$2.3} & \textbf{27.6}\textsubscript{$\pm$.6} \\
SAT-EN & 26.2\textsubscript{$\pm$1.4} & 26.5\textsubscript{$\pm$2.4} & 25.2\textsubscript{$\pm$1.0} & \textbf{26.9}\textsubscript{$\pm$.7} & \textbf{26.9}\textsubscript{$\pm$2.0} \\
\midrule
Average & 37.1\textsubscript{$\pm$.7} & 34.4\textsubscript{$\pm$.5} & 35.9\textsubscript{$\pm$.7} & 36.0\textsubscript{$\pm$.5} & \textbf{36.9}\textsubscript{$\pm$.5} \\
\bottomrule
\end{tabular}
\end{table}

\begin{figure}[p]
\centering
\includegraphics[width=\linewidth]{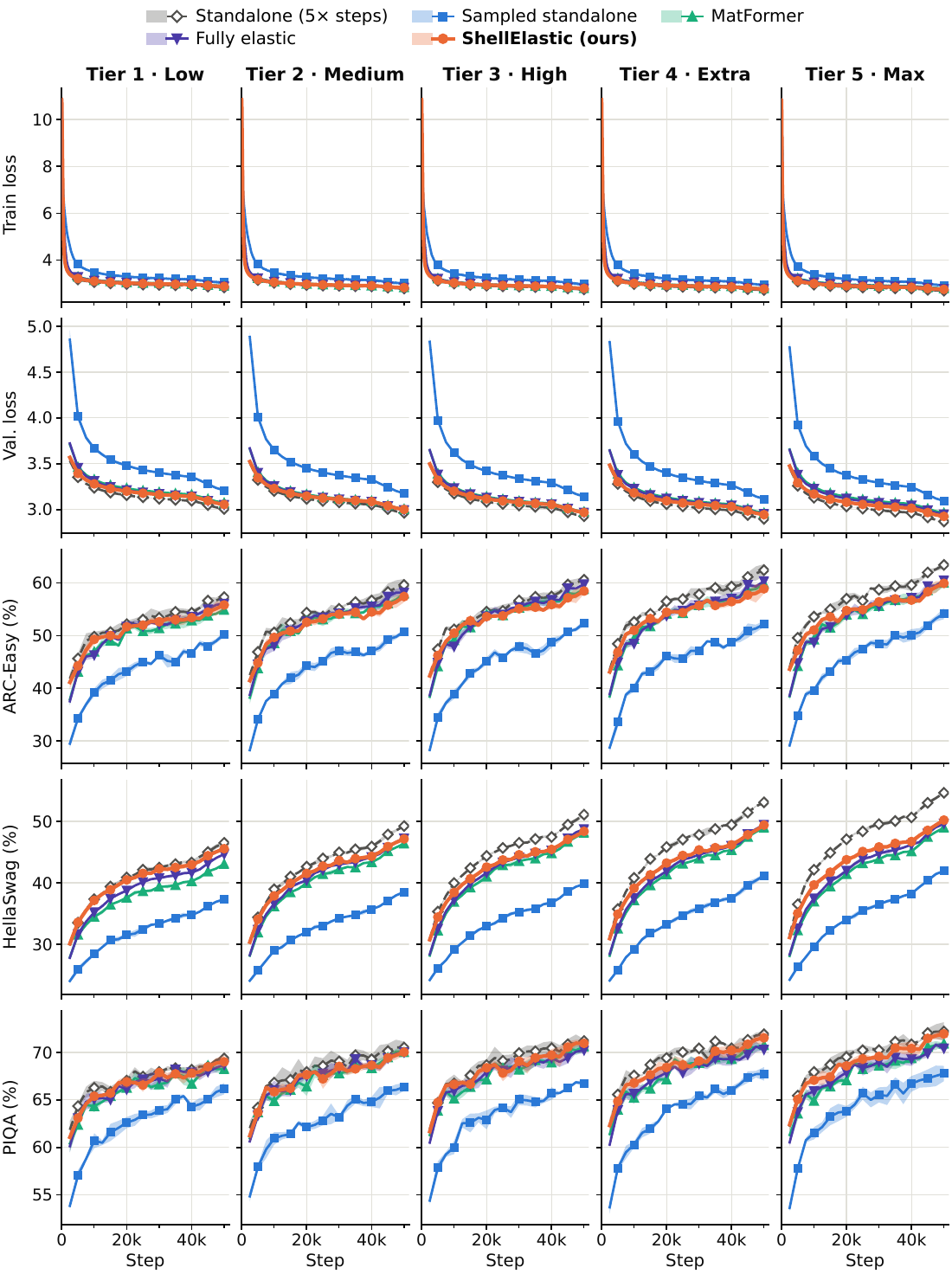}
\caption{\textbf{\boldmath Training curves at the $375$M--$906$M scale} (Table~\ref{tab:main-scales}). Columns are the five tiers, from Tier~1 (Low, about $375$M parameters) to Tier~5 (Max, about $906$M); rows are the train loss, the validation loss and 5-shot accuracy on ARC-Easy, HellaSwag and PIQA. Lines are means over three runs, with shaded bands of $\pm1$ standard deviation; conventions in Appendix~\ref{app:main-full}.}
\label{fig:main-scale2}
\end{figure}

\begin{table}[tbp]
\caption{\textbf{\boldmath Per-task results at the $375$M--$906$M scale, Tier~1 (Low, about $375$M parameters).} Sub-network width $832$ for Standalone, sampled standalone and the fully elastic model, FFN hidden width $1024$ at $D_{\mathrm{model}}=1024$ for MatFormer, and $1024$ for \Ours{}. Mean over three runs, with the standard deviation as a subscript.}
\label{tab:main-scale2-tier1}
\centering
\setlength{\tabcolsep}{3pt}
\begin{tabular}{@{}lccccc@{}}
\toprule
\multicolumn{1}{@{}l@{}}{\textbf{Tier~1\,$\cdot$\,Low}} & Standalone$^\dagger$ & Sampled standalone & MatFormer & Fully elastic & \textbf{\Ours} \\
\midrule
Params (M) & 377.5 & 377.5 & 371.9 & 377.5 & 371.9 \\
Train loss $\downarrow$ & 2.8302\textsubscript{$\pm$.0016} & 3.0381\textsubscript{$\pm$.0070} & 2.8993\textsubscript{$\pm$.0014} & 2.8791\textsubscript{$\pm$.0019} & \textbf{2.8746}\textsubscript{$\pm$.0035} \\
Val.\ loss $\downarrow$ & 3.0069\textsubscript{$\pm$.0016} & 3.2085\textsubscript{$\pm$.0079} & 3.0724\textsubscript{$\pm$.0015} & \textbf{3.0519}\textsubscript{$\pm$.0023} & 3.0546\textsubscript{$\pm$.0039} \\
\midrule
\multicolumn{6}{@{}l}{\small\emph{5-shot accuracy (\%) $\uparrow$}} \\
MMLU & 24.9\textsubscript{$\pm$.5} & \textbf{25.1}\textsubscript{$\pm$.4} & \textbf{25.1}\textsubscript{$\pm$.2} & 23.9\textsubscript{$\pm$.7} & 24.8\textsubscript{$\pm$.0} \\
ARC-Easy & 57.3\textsubscript{$\pm$.8} & 50.3\textsubscript{$\pm$.5} & 54.9\textsubscript{$\pm$.6} & \textbf{56.1}\textsubscript{$\pm$.4} & 55.8\textsubscript{$\pm$.8} \\
ARC-Challenge & 29.2\textsubscript{$\pm$.3} & 26.1\textsubscript{$\pm$.2} & \textbf{28.6}\textsubscript{$\pm$.8} & 27.5\textsubscript{$\pm$.2} & 28.4\textsubscript{$\pm$1.4} \\ \addlinespace[3pt]
SVAMP & 6.0\textsubscript{$\pm$1.5} & \textbf{7.3}\textsubscript{$\pm$.3} & 5.6\textsubscript{$\pm$.4} & 6.4\textsubscript{$\pm$1.3} & 6.6\textsubscript{$\pm$.2} \\ \addlinespace[3pt]
COPA & 67.0\textsubscript{$\pm$.0} & 62.3\textsubscript{$\pm$4.0} & \textbf{67.7}\textsubscript{$\pm$1.2} & 67.0\textsubscript{$\pm$1.7} & 67.3\textsubscript{$\pm$2.5} \\
PIQA & 69.4\textsubscript{$\pm$.5} & 66.2\textsubscript{$\pm$.6} & 68.2\textsubscript{$\pm$.5} & 68.7\textsubscript{$\pm$.3} & \textbf{69.0}\textsubscript{$\pm$.7} \\ \addlinespace[3pt]
HellaSwag & 46.5\textsubscript{$\pm$.3} & 37.3\textsubscript{$\pm$.2} & 43.1\textsubscript{$\pm$.2} & 44.7\textsubscript{$\pm$.1} & \textbf{45.5}\textsubscript{$\pm$.2} \\
WinoGrande & 54.0\textsubscript{$\pm$1.8} & 51.8\textsubscript{$\pm$.2} & 52.7\textsubscript{$\pm$1.1} & \textbf{53.9}\textsubscript{$\pm$.7} & \textbf{53.9}\textsubscript{$\pm$1.5} \\
LAMBADA & 43.5\textsubscript{$\pm$.6} & 33.7\textsubscript{$\pm$.8} & 41.4\textsubscript{$\pm$.7} & 42.4\textsubscript{$\pm$.9} & \textbf{43.4}\textsubscript{$\pm$1.5} \\ \addlinespace[3pt]
BoolQ & 50.8\textsubscript{$\pm$1.0} & 54.1\textsubscript{$\pm$.6} & 52.5\textsubscript{$\pm$4.8} & \textbf{58.7}\textsubscript{$\pm$.3} & 57.3\textsubscript{$\pm$2.4} \\
LSAT-RC & 26.1\textsubscript{$\pm$2.7} & 23.1\textsubscript{$\pm$1.1} & 24.0\textsubscript{$\pm$2.4} & 21.5\textsubscript{$\pm$3.2} & \textbf{25.0}\textsubscript{$\pm$3.7} \\
LSAT-LR & 23.8\textsubscript{$\pm$3.4} & \textbf{28.0}\textsubscript{$\pm$.9} & 26.3\textsubscript{$\pm$1.5} & 25.3\textsubscript{$\pm$1.7} & 27.3\textsubscript{$\pm$2.1} \\
SAT-EN & 26.9\textsubscript{$\pm$2.0} & 24.3\textsubscript{$\pm$2.1} & 26.7\textsubscript{$\pm$3.4} & \textbf{28.0}\textsubscript{$\pm$1.7} & 26.1\textsubscript{$\pm$3.5} \\
\midrule
Average & 40.4\textsubscript{$\pm$.4} & 37.7\textsubscript{$\pm$.1} & 39.7\textsubscript{$\pm$.9} & 40.3\textsubscript{$\pm$.3} & \textbf{40.8}\textsubscript{$\pm$.7} \\
\bottomrule
\end{tabular}
\end{table}

\begin{table}[tbp]
\caption{\textbf{\boldmath Per-task results at the $375$M--$906$M scale, Tier~2 (Medium, about $486$M parameters).} Sub-network width $960$ for Standalone, sampled standalone and the fully elastic model, FFN hidden width $2176$ at $D_{\mathrm{model}}=1024$ for MatFormer, and $1280$ for \Ours{}. Mean over three runs, with the standard deviation as a subscript.}
\label{tab:main-scale2-tier2}
\centering
\setlength{\tabcolsep}{3pt}
\begin{tabular}{@{}lccccc@{}}
\toprule
\multicolumn{1}{@{}l@{}}{\textbf{Tier~2\,$\cdot$\,Medium}} & Standalone$^\dagger$ & Sampled standalone & MatFormer & Fully elastic & \textbf{\Ours} \\
\midrule
Params (M) & 487.7 & 487.7 & 485.1 & 487.7 & 481.6 \\
Train loss $\downarrow$ & 2.7862\textsubscript{$\pm$.0010} & 3.0000\textsubscript{$\pm$.0032} & 2.8283\textsubscript{$\pm$.0015} & \textbf{2.8172}\textsubscript{$\pm$.0013} & 2.8222\textsubscript{$\pm$.0011} \\
Val.\ loss $\downarrow$ & 2.9632\textsubscript{$\pm$.0010} & 3.1750\textsubscript{$\pm$.0039} & 3.0050\textsubscript{$\pm$.0014} & \textbf{2.9948}\textsubscript{$\pm$.0015} & 2.9998\textsubscript{$\pm$.0012} \\
\midrule
\multicolumn{6}{@{}l}{\small\emph{5-shot accuracy (\%) $\uparrow$}} \\
MMLU & 25.2\textsubscript{$\pm$.5} & \textbf{25.2}\textsubscript{$\pm$.7} & 24.7\textsubscript{$\pm$.3} & \textbf{25.2}\textsubscript{$\pm$.1} & 24.8\textsubscript{$\pm$.5} \\
ARC-Easy & 59.6\textsubscript{$\pm$1.1} & 50.8\textsubscript{$\pm$.4} & 58.0\textsubscript{$\pm$.2} & \textbf{58.2}\textsubscript{$\pm$.9} & 57.4\textsubscript{$\pm$1.1} \\
ARC-Challenge & 30.5\textsubscript{$\pm$.7} & 26.5\textsubscript{$\pm$1.0} & \textbf{29.4}\textsubscript{$\pm$.1} & \textbf{29.4}\textsubscript{$\pm$.3} & 28.8\textsubscript{$\pm$.7} \\ \addlinespace[3pt]
SVAMP & 6.7\textsubscript{$\pm$.6} & \textbf{6.4}\textsubscript{$\pm$1.5} & 6.2\textsubscript{$\pm$1.3} & 6.0\textsubscript{$\pm$1.0} & 5.4\textsubscript{$\pm$1.1} \\ \addlinespace[3pt]
COPA & 66.3\textsubscript{$\pm$2.1} & 62.0\textsubscript{$\pm$1.0} & 65.0\textsubscript{$\pm$.0} & \textbf{68.3}\textsubscript{$\pm$.6} & 68.0\textsubscript{$\pm$3.6} \\
PIQA & 70.5\textsubscript{$\pm$.6} & 66.4\textsubscript{$\pm$.4} & \textbf{70.0}\textsubscript{$\pm$.3} & \textbf{70.0}\textsubscript{$\pm$.3} & \textbf{70.0}\textsubscript{$\pm$.3} \\ \addlinespace[3pt]
HellaSwag & 49.2\textsubscript{$\pm$.1} & 38.5\textsubscript{$\pm$.3} & 46.4\textsubscript{$\pm$.2} & \textbf{47.2}\textsubscript{$\pm$.2} & 47.1\textsubscript{$\pm$.3} \\
WinoGrande & 55.7\textsubscript{$\pm$.5} & 51.0\textsubscript{$\pm$1.8} & 53.2\textsubscript{$\pm$.8} & 54.4\textsubscript{$\pm$1.0} & \textbf{54.6}\textsubscript{$\pm$1.5} \\
LAMBADA & 46.1\textsubscript{$\pm$1.1} & 34.7\textsubscript{$\pm$.1} & 44.1\textsubscript{$\pm$.1} & \textbf{45.1}\textsubscript{$\pm$1.1} & 44.9\textsubscript{$\pm$1.4} \\ \addlinespace[3pt]
BoolQ & 54.4\textsubscript{$\pm$.1} & 52.9\textsubscript{$\pm$1.0} & 53.4\textsubscript{$\pm$4.4} & 55.8\textsubscript{$\pm$1.1} & \textbf{57.0}\textsubscript{$\pm$.7} \\
LSAT-RC & 27.7\textsubscript{$\pm$1.8} & 25.2\textsubscript{$\pm$3.2} & 24.3\textsubscript{$\pm$2.6} & 21.5\textsubscript{$\pm$2.4} & \textbf{26.4}\textsubscript{$\pm$2.8} \\
LSAT-LR & 26.9\textsubscript{$\pm$2.0} & 25.2\textsubscript{$\pm$2.0} & 24.4\textsubscript{$\pm$2.6} & 24.3\textsubscript{$\pm$.9} & \textbf{26.7}\textsubscript{$\pm$2.8} \\
SAT-EN & 24.9\textsubscript{$\pm$1.0} & 25.9\textsubscript{$\pm$1.0} & 24.6\textsubscript{$\pm$.3} & \textbf{27.2}\textsubscript{$\pm$2.6} & 26.5\textsubscript{$\pm$2.2} \\
\midrule
Average & 41.8\textsubscript{$\pm$.4} & 37.8\textsubscript{$\pm$.3} & 40.3\textsubscript{$\pm$.3} & 41.0\textsubscript{$\pm$.3} & \textbf{41.4}\textsubscript{$\pm$.7} \\
\bottomrule
\end{tabular}
\end{table}

\begin{table}[tbp]
\caption{\textbf{\boldmath Per-task results at the $375$M--$906$M scale, Tier~3 (High, about $611$M parameters).} Sub-network width $1088$ for Standalone, sampled standalone and the fully elastic model, FFN hidden width $3456$ at $D_{\mathrm{model}}=1024$ for MatFormer, and $1536$ for \Ours{}. Mean over three runs, with the standard deviation as a subscript.}
\label{tab:main-scale2-tier3}
\centering
\setlength{\tabcolsep}{3pt}
\begin{tabular}{@{}lccccc@{}}
\toprule
\multicolumn{1}{@{}l@{}}{\textbf{Tier~3\,$\cdot$\,High}} & Standalone$^\dagger$ & Sampled standalone & MatFormer & Fully elastic & \textbf{\Ours} \\
\midrule
Params (M) & 611.8 & 611.8 & 610.9 & 611.8 & 608.1 \\
Train loss $\downarrow$ & 2.7496\textsubscript{$\pm$.0018} & 2.9719\textsubscript{$\pm$.0037} & 2.8035\textsubscript{$\pm$.0016} & 2.7933\textsubscript{$\pm$.0011} & \textbf{2.7901}\textsubscript{$\pm$.0013} \\
Val.\ loss $\downarrow$ & 2.9262\textsubscript{$\pm$.0017} & 3.1414\textsubscript{$\pm$.0040} & 2.9750\textsubscript{$\pm$.0015} & \textbf{2.9662}\textsubscript{$\pm$.0016} & 2.9670\textsubscript{$\pm$.0016} \\
\midrule
\multicolumn{6}{@{}l}{\small\emph{5-shot accuracy (\%) $\uparrow$}} \\
MMLU & 24.1\textsubscript{$\pm$.5} & 24.6\textsubscript{$\pm$.9} & 24.9\textsubscript{$\pm$.9} & \textbf{25.0}\textsubscript{$\pm$.8} & 24.8\textsubscript{$\pm$.3} \\
ARC-Easy & 60.6\textsubscript{$\pm$.5} & 52.4\textsubscript{$\pm$.6} & 58.8\textsubscript{$\pm$.3} & \textbf{59.8}\textsubscript{$\pm$.4} & 58.5\textsubscript{$\pm$.6} \\
ARC-Challenge & 30.6\textsubscript{$\pm$.6} & 26.7\textsubscript{$\pm$.5} & 30.0\textsubscript{$\pm$.2} & \textbf{30.3}\textsubscript{$\pm$.9} & 29.1\textsubscript{$\pm$.3} \\ \addlinespace[3pt]
SVAMP & 5.0\textsubscript{$\pm$.9} & 6.0\textsubscript{$\pm$.6} & 5.9\textsubscript{$\pm$.7} & 5.9\textsubscript{$\pm$.8} & \textbf{6.6}\textsubscript{$\pm$1.3} \\ \addlinespace[3pt]
COPA & 68.3\textsubscript{$\pm$1.5} & 61.3\textsubscript{$\pm$4.0} & 66.3\textsubscript{$\pm$.6} & 68.0\textsubscript{$\pm$1.7} & \textbf{68.3}\textsubscript{$\pm$2.5} \\
PIQA & 71.2\textsubscript{$\pm$.2} & 66.7\textsubscript{$\pm$.3} & 70.6\textsubscript{$\pm$.6} & 70.2\textsubscript{$\pm$.4} & \textbf{70.9}\textsubscript{$\pm$.4} \\ \addlinespace[3pt]
HellaSwag & 51.1\textsubscript{$\pm$.2} & 39.8\textsubscript{$\pm$.1} & 48.1\textsubscript{$\pm$.2} & \textbf{48.7}\textsubscript{$\pm$.2} & 48.4\textsubscript{$\pm$.1} \\
WinoGrande & 55.8\textsubscript{$\pm$1.3} & 52.6\textsubscript{$\pm$.7} & 53.1\textsubscript{$\pm$.4} & 54.0\textsubscript{$\pm$1.6} & \textbf{54.4}\textsubscript{$\pm$1.2} \\
LAMBADA & 48.3\textsubscript{$\pm$.4} & 36.0\textsubscript{$\pm$1.2} & 45.2\textsubscript{$\pm$.4} & \textbf{46.8}\textsubscript{$\pm$1.2} & 45.5\textsubscript{$\pm$.9} \\ \addlinespace[3pt]
BoolQ & 56.5\textsubscript{$\pm$4.4} & 53.0\textsubscript{$\pm$2.2} & 52.1\textsubscript{$\pm$4.9} & 55.6\textsubscript{$\pm$3.3} & \textbf{58.2}\textsubscript{$\pm$1.9} \\
LSAT-RC & 28.0\textsubscript{$\pm$2.3} & 24.8\textsubscript{$\pm$3.8} & 23.8\textsubscript{$\pm$1.5} & 24.4\textsubscript{$\pm$2.9} & \textbf{26.6}\textsubscript{$\pm$.8} \\
LSAT-LR & 25.1\textsubscript{$\pm$1.2} & 24.6\textsubscript{$\pm$2.3} & 24.2\textsubscript{$\pm$2.2} & 25.0\textsubscript{$\pm$1.5} & \textbf{26.3}\textsubscript{$\pm$2.2} \\
SAT-EN & 27.8\textsubscript{$\pm$2.9} & 25.2\textsubscript{$\pm$1.3} & 22.3\textsubscript{$\pm$1.0} & 27.7\textsubscript{$\pm$.5} & \textbf{28.2}\textsubscript{$\pm$2.2} \\
\midrule
Average & 42.5\textsubscript{$\pm$.8} & 38.0\textsubscript{$\pm$.4} & 40.4\textsubscript{$\pm$.1} & 41.6\textsubscript{$\pm$.4} & \textbf{42.0}\textsubscript{$\pm$.4} \\
\bottomrule
\end{tabular}
\end{table}

\begin{table}[tbp]
\caption{\textbf{\boldmath Per-task results at the $375$M--$906$M scale, Tier~4 (Extra, about $750$M parameters).} Sub-network width $1216$ for Standalone, sampled standalone and the fully elastic model, FFN hidden width $4864$ at $D_{\mathrm{model}}=1024$ for MatFormer, and $1792$ for \Ours{}. Mean over three runs, with the standard deviation as a subscript.}
\label{tab:main-scale2-tier4}
\centering
\setlength{\tabcolsep}{3pt}
\begin{tabular}{@{}lccccc@{}}
\toprule
\multicolumn{1}{@{}l@{}}{\textbf{Tier~4\,$\cdot$\,Extra}} & Standalone$^\dagger$ & Sampled standalone & MatFormer & Fully elastic & \textbf{\Ours} \\
\midrule
Params (M) & 749.8 & 749.8 & 749.3 & 749.8 & 751.4 \\
Train loss $\downarrow$ & 2.7201\textsubscript{$\pm$.0016} & 2.9403\textsubscript{$\pm$.0057} & 2.7857\textsubscript{$\pm$.0015} & 2.7735\textsubscript{$\pm$.0011} & \textbf{2.7671}\textsubscript{$\pm$.0010} \\
Val.\ loss $\downarrow$ & 2.8972\textsubscript{$\pm$.0016} & 3.1146\textsubscript{$\pm$.0057} & 2.9628\textsubscript{$\pm$.0016} & 2.9520\textsubscript{$\pm$.0014} & \textbf{2.9435}\textsubscript{$\pm$.0016} \\
\midrule
\multicolumn{6}{@{}l}{\small\emph{5-shot accuracy (\%) $\uparrow$}} \\
MMLU & 25.1\textsubscript{$\pm$.5} & 23.9\textsubscript{$\pm$.1} & \textbf{24.6}\textsubscript{$\pm$.5} & 24.4\textsubscript{$\pm$.5} & 24.5\textsubscript{$\pm$.3} \\
ARC-Easy & 62.4\textsubscript{$\pm$1.1} & 52.2\textsubscript{$\pm$.6} & 59.9\textsubscript{$\pm$.4} & \textbf{60.4}\textsubscript{$\pm$.1} & 58.9\textsubscript{$\pm$.9} \\
ARC-Challenge & 32.1\textsubscript{$\pm$.4} & 26.2\textsubscript{$\pm$.3} & \textbf{30.5}\textsubscript{$\pm$.5} & \textbf{30.5}\textsubscript{$\pm$1.1} & 29.6\textsubscript{$\pm$.3} \\ \addlinespace[3pt]
SVAMP & 5.0\textsubscript{$\pm$.7} & \textbf{6.3}\textsubscript{$\pm$1.5} & 6.0\textsubscript{$\pm$1.5} & 6.2\textsubscript{$\pm$.8} & 5.8\textsubscript{$\pm$.8} \\ \addlinespace[3pt]
COPA & 68.7\textsubscript{$\pm$1.2} & 64.7\textsubscript{$\pm$3.8} & 66.0\textsubscript{$\pm$.0} & 68.3\textsubscript{$\pm$1.2} & \textbf{70.0}\textsubscript{$\pm$4.0} \\
PIQA & 71.9\textsubscript{$\pm$.3} & 67.7\textsubscript{$\pm$.7} & 70.7\textsubscript{$\pm$.1} & 70.3\textsubscript{$\pm$.1} & \textbf{71.5}\textsubscript{$\pm$.7} \\ \addlinespace[3pt]
HellaSwag & 53.1\textsubscript{$\pm$.3} & 41.2\textsubscript{$\pm$.3} & 48.9\textsubscript{$\pm$.1} & \textbf{49.4}\textsubscript{$\pm$.4} & \textbf{49.4}\textsubscript{$\pm$.1} \\
WinoGrande & 58.0\textsubscript{$\pm$2.0} & 52.8\textsubscript{$\pm$.7} & 54.1\textsubscript{$\pm$.8} & 54.2\textsubscript{$\pm$1.6} & \textbf{55.0}\textsubscript{$\pm$.8} \\
LAMBADA & 49.5\textsubscript{$\pm$.5} & 39.5\textsubscript{$\pm$.3} & 45.7\textsubscript{$\pm$.6} & \textbf{47.0}\textsubscript{$\pm$1.2} & 46.2\textsubscript{$\pm$1.1} \\ \addlinespace[3pt]
BoolQ & 59.0\textsubscript{$\pm$2.4} & 53.7\textsubscript{$\pm$2.1} & 54.1\textsubscript{$\pm$4.4} & 56.8\textsubscript{$\pm$5.5} & \textbf{57.9}\textsubscript{$\pm$.4} \\
LSAT-RC & 25.2\textsubscript{$\pm$1.8} & 22.9\textsubscript{$\pm$2.2} & 24.0\textsubscript{$\pm$1.7} & 23.5\textsubscript{$\pm$2.4} & \textbf{27.6}\textsubscript{$\pm$1.6} \\
LSAT-LR & 24.5\textsubscript{$\pm$1.1} & 25.8\textsubscript{$\pm$1.2} & 24.7\textsubscript{$\pm$1.3} & 25.6\textsubscript{$\pm$1.7} & \textbf{28.0}\textsubscript{$\pm$1.8} \\
SAT-EN & 27.7\textsubscript{$\pm$.0} & 26.2\textsubscript{$\pm$3.9} & 26.2\textsubscript{$\pm$1.9} & 29.1\textsubscript{$\pm$1.0} & \textbf{29.4}\textsubscript{$\pm$1.4} \\
\midrule
Average & 43.2\textsubscript{$\pm$.5} & 38.7\textsubscript{$\pm$.5} & 41.2\textsubscript{$\pm$.3} & 42.0\textsubscript{$\pm$.5} & \textbf{42.6}\textsubscript{$\pm$.3} \\
\bottomrule
\end{tabular}
\end{table}

\begin{table}[tbp]
\caption{\textbf{\boldmath Per-task results at the $375$M--$906$M scale, Tier~5 (Max, about $906$M parameters).} Sub-network width $1344$ for Standalone, sampled standalone and the fully elastic model, FFN hidden width $6528$ at $D_{\mathrm{model}}=1024$ for MatFormer, and $2048$ for \Ours{}. Mean over three runs, with the standard deviation as a subscript.}
\label{tab:main-scale2-tier5}
\centering
\setlength{\tabcolsep}{3pt}
\begin{tabular}{@{}lccccc@{}}
\toprule
\multicolumn{1}{@{}l@{}}{\textbf{Tier~5\,$\cdot$\,Max}} & Standalone$^\dagger$ & Sampled standalone & MatFormer & Fully elastic & \textbf{\Ours} \\
\midrule
Params (M) & 901.6 & 901.6 & 912.9 & 901.6 & 911.5 \\
Train loss $\downarrow$ & 2.6922\textsubscript{$\pm$.0012} & 2.9196\textsubscript{$\pm$.0026} & 2.7871\textsubscript{$\pm$.0015} & 2.7710\textsubscript{$\pm$.0009} & \textbf{2.7488}\textsubscript{$\pm$.0008} \\
Val.\ loss $\downarrow$ & 2.8696\textsubscript{$\pm$.0010} & 3.0931\textsubscript{$\pm$.0032} & 2.9631\textsubscript{$\pm$.0016} & 2.9481\textsubscript{$\pm$.0015} & \textbf{2.9250}\textsubscript{$\pm$.0012} \\
\midrule
\multicolumn{6}{@{}l}{\small\emph{5-shot accuracy (\%) $\uparrow$}} \\
MMLU & 24.9\textsubscript{$\pm$.5} & 24.8\textsubscript{$\pm$.2} & 24.8\textsubscript{$\pm$.0} & \textbf{24.9}\textsubscript{$\pm$.5} & 24.8\textsubscript{$\pm$.1} \\
ARC-Easy & 63.4\textsubscript{$\pm$.8} & 54.2\textsubscript{$\pm$1.1} & 60.0\textsubscript{$\pm$.7} & \textbf{60.5}\textsubscript{$\pm$.1} & 59.9\textsubscript{$\pm$.4} \\
ARC-Challenge & 33.0\textsubscript{$\pm$.6} & 26.8\textsubscript{$\pm$.8} & 30.6\textsubscript{$\pm$.6} & 30.4\textsubscript{$\pm$.7} & \textbf{31.0}\textsubscript{$\pm$.8} \\ \addlinespace[3pt]
SVAMP & 5.6\textsubscript{$\pm$.7} & 4.9\textsubscript{$\pm$.8} & 5.8\textsubscript{$\pm$.8} & \textbf{6.4}\textsubscript{$\pm$.5} & 6.0\textsubscript{$\pm$1.3} \\ \addlinespace[3pt]
COPA & 70.0\textsubscript{$\pm$2.0} & 61.7\textsubscript{$\pm$2.1} & 66.3\textsubscript{$\pm$.6} & 67.7\textsubscript{$\pm$1.2} & \textbf{72.0}\textsubscript{$\pm$2.6} \\
PIQA & 72.2\textsubscript{$\pm$1.0} & 67.8\textsubscript{$\pm$.8} & 70.9\textsubscript{$\pm$.1} & 70.4\textsubscript{$\pm$.3} & \textbf{71.9}\textsubscript{$\pm$.6} \\ \addlinespace[3pt]
HellaSwag & 54.6\textsubscript{$\pm$.2} & 41.9\textsubscript{$\pm$.1} & 49.0\textsubscript{$\pm$.1} & 49.6\textsubscript{$\pm$.5} & \textbf{50.2}\textsubscript{$\pm$.2} \\
WinoGrande & 57.6\textsubscript{$\pm$1.4} & 53.2\textsubscript{$\pm$.7} & 54.2\textsubscript{$\pm$1.0} & 54.3\textsubscript{$\pm$.9} & \textbf{54.9}\textsubscript{$\pm$.9} \\
LAMBADA & 51.4\textsubscript{$\pm$.8} & 39.0\textsubscript{$\pm$.4} & 45.4\textsubscript{$\pm$.8} & \textbf{47.0}\textsubscript{$\pm$.8} & 46.3\textsubscript{$\pm$1.2} \\ \addlinespace[3pt]
BoolQ & 54.8\textsubscript{$\pm$5.4} & 57.2\textsubscript{$\pm$1.7} & 55.7\textsubscript{$\pm$4.1} & 57.9\textsubscript{$\pm$4.6} & \textbf{59.8}\textsubscript{$\pm$1.7} \\
LSAT-RC & 24.5\textsubscript{$\pm$2.6} & 24.6\textsubscript{$\pm$.4} & 24.6\textsubscript{$\pm$1.0} & 24.1\textsubscript{$\pm$.2} & \textbf{27.4}\textsubscript{$\pm$2.1} \\
LSAT-LR & 24.1\textsubscript{$\pm$.7} & 26.3\textsubscript{$\pm$3.2} & 25.0\textsubscript{$\pm$1.6} & 25.2\textsubscript{$\pm$.3} & \textbf{27.1}\textsubscript{$\pm$.5} \\
SAT-EN & 24.8\textsubscript{$\pm$3.8} & 24.8\textsubscript{$\pm$1.8} & 24.4\textsubscript{$\pm$1.2} & \textbf{28.0}\textsubscript{$\pm$.6} & 27.5\textsubscript{$\pm$1.7} \\
\midrule
Average & 43.1\textsubscript{$\pm$.8} & 39.0\textsubscript{$\pm$.5} & 41.3\textsubscript{$\pm$.6} & 42.0\textsubscript{$\pm$.4} & \textbf{43.0}\textsubscript{$\pm$.3} \\
\bottomrule
\end{tabular}
\end{table}

\begin{figure}[p]
\centering
\includegraphics[width=\linewidth]{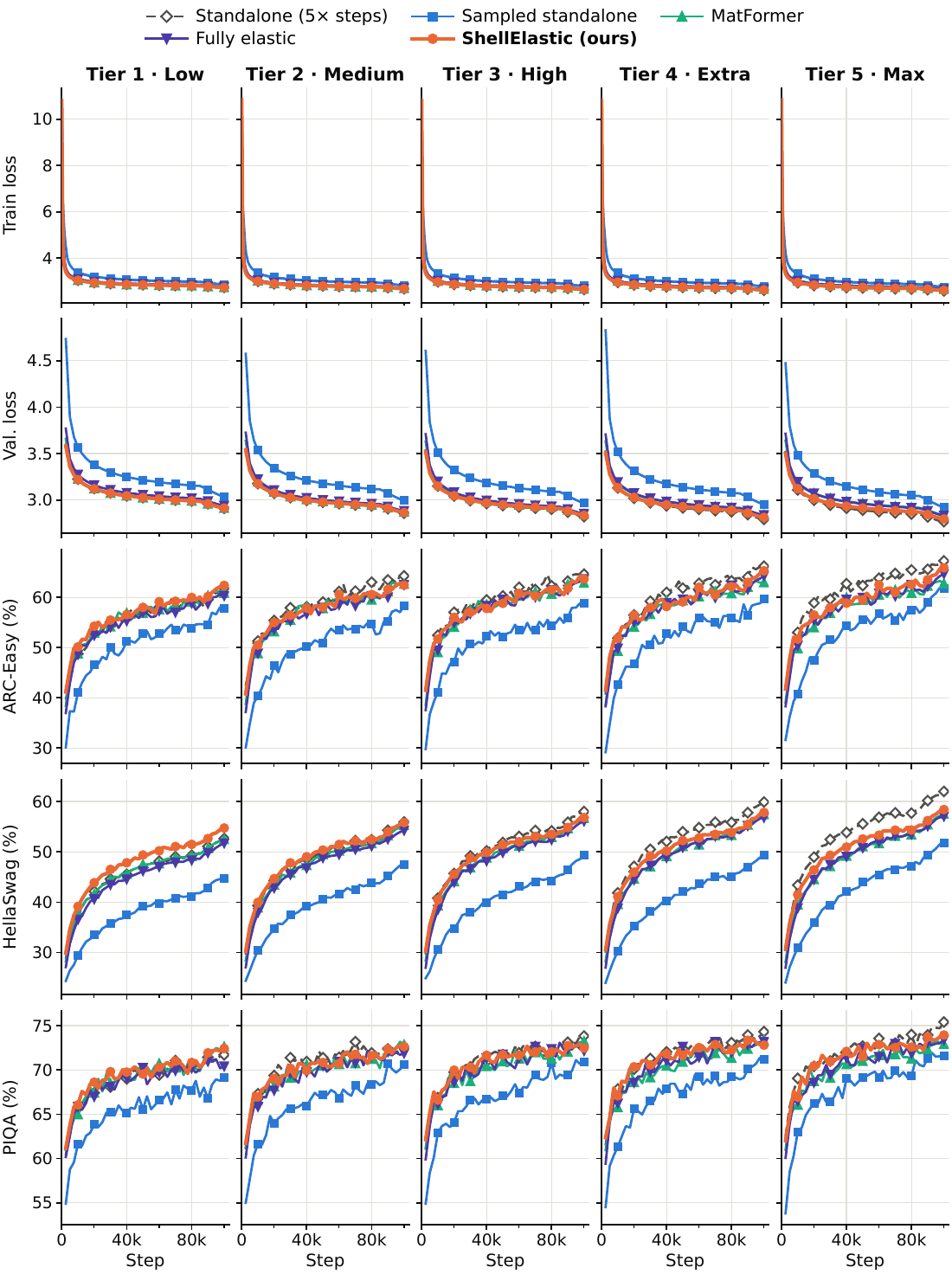}
\caption{\textbf{\boldmath Training curves at the $675$M--$1.69$B scale} (Table~\ref{tab:main-scales}). Columns are the five tiers, from Tier~1 (Low, about $675$M parameters) to Tier~5 (Max, about $1.69$B); rows are the train loss, the validation loss and 5-shot accuracy on ARC-Easy, HellaSwag and PIQA. Each line is a single run; conventions in Appendix~\ref{app:main-full}.}
\label{fig:main-scale3}
\end{figure}

\begin{table}[tbp]
\caption{\textbf{\boldmath Per-task results at the $675$M--$1.69$B scale, Tier~1 (Low, about $675$M parameters).} Sub-network width $832$ for Standalone, sampled standalone and the fully elastic model, FFN hidden width $4096$ at $D_{\mathrm{model}}=1024$ for MatFormer, and $1024$ for \Ours{}. Single runs.}
\label{tab:main-scale3-tier1}
\centering
\setlength{\tabcolsep}{3pt}
\begin{tabular}{@{}lccccc@{}}
\toprule
\multicolumn{1}{@{}l@{}}{\textbf{Tier~1\,$\cdot$\,Low}} & Standalone$^\dagger$ & Sampled standalone & MatFormer & Fully elastic & \textbf{\Ours} \\
\midrule
Params (M) & 676.6 & 676.6 & 673.8 & 676.6 & 673.8 \\
Train loss $\downarrow$ & 2.7335 & 2.8643 & 2.7365 & 2.7638 & \textbf{2.7347} \\
Val.\ loss $\downarrow$ & 2.9118 & 3.0370 & 2.9140 & 2.9403 & \textbf{2.9135} \\
\midrule
\multicolumn{6}{@{}l}{\small\emph{5-shot accuracy (\%) $\uparrow$}} \\
MMLU & 25.2 & 23.6 & \textbf{24.1} & 23.3 & \textbf{24.1} \\
ARC-Easy & 61.8 & 57.7 & 60.9 & 60.3 & \textbf{62.4} \\
ARC-Challenge & 31.4 & 29.6 & \textbf{32.3} & 30.2 & 32.2 \\ \addlinespace[3pt]
SVAMP & 6.3 & 6.3 & \textbf{6.7} & 4.7 & 4.7 \\ \addlinespace[3pt]
COPA & 70.0 & 64.0 & 67.0 & \textbf{71.0} & 70.0 \\
PIQA & 71.7 & 69.2 & \textbf{72.6} & 70.4 & 72.4 \\ \addlinespace[3pt]
HellaSwag & 52.6 & 44.8 & 52.5 & 51.7 & \textbf{54.7} \\
WinoGrande & 57.1 & 50.8 & 56.7 & 56.9 & \textbf{58.2} \\
LAMBADA & 47.1 & 43.0 & 49.7 & 44.0 & \textbf{51.6} \\ \addlinespace[3pt]
BoolQ & 57.3 & 52.4 & 60.6 & 55.7 & \textbf{61.9} \\
LSAT-RC & 29.1 & \textbf{29.5} & 23.9 & 26.9 & 26.1 \\
LSAT-LR & 24.9 & 28.4 & 27.5 & \textbf{29.0} & 24.9 \\
SAT-EN & 29.1 & 23.8 & 22.3 & 22.8 & \textbf{25.2} \\
\midrule
Average & 43.4 & 40.2 & 42.8 & 42.1 & \textbf{43.7} \\
\bottomrule
\end{tabular}
\end{table}

\begin{table}[tbp]
\caption{\textbf{\boldmath Per-task results at the $675$M--$1.69$B scale, Tier~2 (Medium, about $882$M parameters).} Sub-network width $960$ for Standalone, sampled standalone and the fully elastic model, FFN hidden width $6144$ at $D_{\mathrm{model}}=1024$ for MatFormer, and $1280$ for \Ours{}. Single runs.}
\label{tab:main-scale3-tier2}
\centering
\setlength{\tabcolsep}{3pt}
\begin{tabular}{@{}lccccc@{}}
\toprule
\multicolumn{1}{@{}l@{}}{\textbf{Tier~2\,$\cdot$\,Medium}} & Standalone$^\dagger$ & Sampled standalone & MatFormer & Fully elastic & \textbf{\Ours} \\
\midrule
Params (M) & 885.8 & 885.8 & 875.2 & 885.8 & 878.0 \\
Train loss $\downarrow$ & 2.6791 & 2.8253 & 2.6938 & 2.7067 & \textbf{2.6865} \\
Val.\ loss $\downarrow$ & 2.8587 & 2.9980 & 2.8720 & 2.8840 & \textbf{2.8657} \\
\midrule
\multicolumn{6}{@{}l}{\small\emph{5-shot accuracy (\%) $\uparrow$}} \\
MMLU & 23.8 & 24.0 & \textbf{24.6} & 24.1 & 23.6 \\
ARC-Easy & 64.3 & 58.3 & \textbf{62.7} & 62.5 & 62.4 \\
ARC-Challenge & 33.8 & 28.2 & \textbf{33.4} & 31.6 & 32.9 \\ \addlinespace[3pt]
SVAMP & 6.0 & \textbf{7.0} & \textbf{7.0} & 5.0 & 6.0 \\ \addlinespace[3pt]
COPA & 73.0 & 63.0 & 69.0 & \textbf{73.0} & 65.0 \\
PIQA & 72.6 & 70.6 & \textbf{72.9} & 71.9 & 72.6 \\ \addlinespace[3pt]
HellaSwag & 55.9 & 47.4 & 55.3 & 54.2 & \textbf{55.8} \\
WinoGrande & 59.0 & 52.7 & 57.5 & 58.2 & \textbf{58.8} \\
LAMBADA & 52.0 & 43.8 & 51.2 & 48.2 & \textbf{52.4} \\ \addlinespace[3pt]
BoolQ & 55.0 & 59.0 & 62.6 & 55.8 & \textbf{62.8} \\
LSAT-RC & 26.1 & 23.1 & 20.9 & \textbf{27.6} & 23.5 \\
LSAT-LR & 25.3 & \textbf{27.8} & 25.5 & 27.5 & 27.3 \\
SAT-EN & 23.8 & \textbf{24.8} & 20.9 & \textbf{24.8} & \textbf{24.8} \\
\midrule
Average & 43.9 & 40.8 & 43.3 & 43.4 & \textbf{43.7} \\
\bottomrule
\end{tabular}
\end{table}

\begin{table}[tbp]
\caption{\textbf{\boldmath Per-task results at the $675$M--$1.69$B scale, Tier~3 (High, about $1.12$B parameters).} Sub-network width $1088$ for Standalone, sampled standalone and the fully elastic model, FFN hidden width $8704$ at $D_{\mathrm{model}}=1024$ for MatFormer, and $1536$ for \Ours{}. Single runs.}
\label{tab:main-scale3-tier3}
\centering
\setlength{\tabcolsep}{3pt}
\begin{tabular}{@{}lccccc@{}}
\toprule
\multicolumn{1}{@{}l@{}}{\textbf{Tier~3\,$\cdot$\,High}} & Standalone$^\dagger$ & Sampled standalone & MatFormer & Fully elastic & \textbf{\Ours} \\
\midrule
Params (M) & 1123.2 & 1123.2 & 1126.8 & 1123.2 & 1117.7 \\
Train loss $\downarrow$ & 2.6424 & 2.7994 & 2.6730 & 2.6788 & \textbf{2.6550} \\
Val.\ loss $\downarrow$ & 2.8212 & 2.9696 & 2.8487 & 2.8539 & \textbf{2.8351} \\
\midrule
\multicolumn{6}{@{}l}{\small\emph{5-shot accuracy (\%) $\uparrow$}} \\
MMLU & 25.3 & 22.9 & \textbf{24.3} & \textbf{24.3} & 24.2 \\
ARC-Easy & 64.6 & 58.8 & 62.9 & \textbf{63.6} & \textbf{63.6} \\
ARC-Challenge & 34.3 & 30.1 & 33.4 & 32.8 & \textbf{33.8} \\ \addlinespace[3pt]
SVAMP & 6.3 & 5.3 & \textbf{7.0} & 5.0 & \textbf{7.0} \\ \addlinespace[3pt]
COPA & 72.0 & 68.0 & 69.0 & \textbf{71.0} & 67.0 \\
PIQA & 73.8 & 70.9 & \textbf{73.2} & 72.1 & 72.5 \\ \addlinespace[3pt]
HellaSwag & 58.0 & 49.3 & 56.7 & 56.1 & \textbf{56.8} \\
WinoGrande & 59.7 & 54.9 & \textbf{59.4} & 58.7 & 58.6 \\
LAMBADA & 52.7 & 45.8 & \textbf{52.7} & 50.2 & 52.4 \\ \addlinespace[3pt]
BoolQ & 61.8 & 54.2 & 61.3 & 57.9 & \textbf{62.1} \\
LSAT-RC & 22.4 & \textbf{26.1} & 23.5 & 25.4 & 22.8 \\
LSAT-LR & 24.5 & 28.0 & 23.3 & \textbf{28.8} & 24.7 \\
SAT-EN & 25.7 & \textbf{25.7} & 18.0 & 25.2 & \textbf{25.7} \\
\midrule
Average & 44.7 & 41.6 & 43.4 & \textbf{43.9} & \textbf{43.9} \\
\bottomrule
\end{tabular}
\end{table}

\begin{table}[tbp]
\caption{\textbf{\boldmath Per-task results at the $675$M--$1.69$B scale, Tier~4 (Extra, about $1.39$B parameters).} Sub-network width $1216$ for Standalone, sampled standalone and the fully elastic model, FFN hidden width $11520$ at $D_{\mathrm{model}}=1024$ for MatFormer, and $1792$ for \Ours{}. Single runs.}
\label{tab:main-scale3-tier4}
\centering
\setlength{\tabcolsep}{3pt}
\begin{tabular}{@{}lccccc@{}}
\toprule
\multicolumn{1}{@{}l@{}}{\textbf{Tier~4\,$\cdot$\,Extra}} & Standalone$^\dagger$ & Sampled standalone & MatFormer & Fully elastic & \textbf{\Ours} \\
\midrule
Params (M) & 1388.5 & 1388.5 & 1403.7 & 1388.5 & 1393.1 \\
Train loss $\downarrow$ & 2.6122 & 2.7838 & 2.6617 & 2.6621 & \textbf{2.6323} \\
Val.\ loss $\downarrow$ & 2.7933 & 2.9557 & 2.8385 & 2.8384 & \textbf{2.8131} \\
\midrule
\multicolumn{6}{@{}l}{\small\emph{5-shot accuracy (\%) $\uparrow$}} \\
MMLU & 23.7 & 24.6 & 24.2 & \textbf{24.8} & 24.2 \\
ARC-Easy & 66.2 & 59.7 & 63.0 & 64.3 & \textbf{65.3} \\
ARC-Challenge & 35.1 & 30.0 & 32.8 & \textbf{33.6} & 32.9 \\ \addlinespace[3pt]
SVAMP & 6.7 & 2.7 & \textbf{8.0} & 3.7 & 6.0 \\ \addlinespace[3pt]
COPA & 73.0 & 64.0 & 72.0 & \textbf{73.0} & 68.0 \\
PIQA & 74.3 & 71.2 & \textbf{73.3} & 73.2 & 72.8 \\ \addlinespace[3pt]
HellaSwag & 59.9 & 49.3 & 57.4 & 56.8 & \textbf{57.8} \\
WinoGrande & 59.7 & 54.2 & \textbf{59.3} & 58.8 & 59.2 \\
LAMBADA & 55.2 & 46.6 & \textbf{53.3} & 51.6 & 52.5 \\ \addlinespace[3pt]
BoolQ & 64.6 & 56.3 & 61.3 & 59.2 & \textbf{62.4} \\
LSAT-RC & 24.6 & 25.0 & 23.1 & \textbf{26.1} & 24.3 \\
LSAT-LR & 28.4 & 22.7 & 25.9 & \textbf{29.2} & 25.1 \\
SAT-EN & 29.1 & \textbf{27.2} & 18.0 & 20.9 & 25.7 \\
\midrule
Average & 46.2 & 41.0 & 44.0 & 44.2 & \textbf{44.3} \\
\bottomrule
\end{tabular}
\end{table}

\begin{table}[tbp]
\caption{\textbf{\boldmath Per-task results at the $675$M--$1.69$B scale, Tier~5 (Max, about $1.69$B parameters).} Sub-network width $1344$ for Standalone, sampled standalone and the fully elastic model, FFN hidden width $14592$ at $D_{\mathrm{model}}=1024$ for MatFormer, and $2048$ for \Ours{}. Single runs.}
\label{tab:main-scale3-tier5}
\centering
\setlength{\tabcolsep}{3pt}
\begin{tabular}{@{}lccccc@{}}
\toprule
\multicolumn{1}{@{}l@{}}{\textbf{Tier~5\,$\cdot$\,Max}} & Standalone$^\dagger$ & Sampled standalone & MatFormer & Fully elastic & \textbf{\Ours} \\
\midrule
Params (M) & 1682.0 & 1682.0 & 1705.6 & 1682.0 & 1704.2 \\
Train loss $\downarrow$ & 2.5859 & 2.7509 & 2.6585 & 2.6549 & \textbf{2.6152} \\
Val.\ loss $\downarrow$ & 2.7674 & 2.9242 & 2.8378 & 2.8334 & \textbf{2.7959} \\
\midrule
\multicolumn{6}{@{}l}{\small\emph{5-shot accuracy (\%) $\uparrow$}} \\
MMLU & 25.2 & \textbf{24.7} & 24.2 & 24.3 & 24.6 \\
ARC-Easy & 67.3 & 61.7 & 62.8 & 64.6 & \textbf{65.9} \\
ARC-Challenge & 36.6 & 32.4 & 32.6 & \textbf{34.1} & 33.9 \\ \addlinespace[3pt]
SVAMP & 5.0 & 4.7 & \textbf{6.3} & 5.3 & 6.0 \\ \addlinespace[3pt]
COPA & 73.0 & 66.0 & \textbf{72.0} & \textbf{72.0} & 66.0 \\
PIQA & 75.4 & 71.5 & 72.9 & 73.4 & \textbf{73.9} \\ \addlinespace[3pt]
HellaSwag & 62.0 & 51.8 & 57.5 & 57.0 & \textbf{58.4} \\
WinoGrande & 60.5 & 55.0 & 59.2 & \textbf{60.2} & 59.2 \\
LAMBADA & 56.9 & 47.6 & \textbf{53.5} & 52.2 & 52.4 \\ \addlinespace[3pt]
BoolQ & 63.0 & 61.4 & 60.6 & 59.3 & \textbf{62.5} \\
LSAT-RC & 25.0 & \textbf{27.2} & 20.1 & \textbf{27.2} & 22.4 \\
LSAT-LR & 22.7 & \textbf{28.6} & 24.1 & 28.0 & 27.8 \\
SAT-EN & 26.7 & \textbf{29.1} & 20.4 & 21.4 & 25.2 \\
\midrule
Average & 46.1 & 43.2 & 43.6 & \textbf{44.6} & 44.5 \\
\bottomrule
\end{tabular}
\end{table}

\clearpage
\section{Additional transfer results}
\label{app:transfer-extra}
\label{app:mutransfer-tiers}

Figures~\ref{fig:mutransfer-matformer}--\ref{fig:mutransfer-shell} repeat the sweeps of Figure~\ref{fig:mutransfer} for each of the five sub-networks of MatFormer, the fully elastic model and \Ours{}.
Lines, stars and $\Delta$ val.\ loss are as in Figure~\ref{fig:mutransfer}, with the loss of each sub-network in place of the mean over the five; the proxy line is repeated in every column, and the five panels of a row share one y-range.
Two observations emerge:
\begin{itemize}
\item \textbf{\muE{} transfers to every sub-network.} Under \muE{}, the optimal $\eta_0$ of every sub-network lies between $10^{-3}$ and $3.3\times10^{-3}$ and the optimal $\sigma_0$ at $3.3\times10^{-3}$ or $10^{-2}$, and the base values of Section~\ref{sec:setup} are within $0.005$ nats of the optimum of every sub-network at the $119$M--$285$M and $375$M--$906$M scales.
\item \textbf{The gap between SP and \muE{} is largest at the narrow widths.} The gap is larger at the smaller sub-networks at both scales, for example $0.022$ nats at Low and $0.007$ at Max for \Ours{} in the $\eta_0$ sweep at the $119$M--$285$M scale, as in Table~\ref{tab:muinternal}, and $0.015$ at Low and $0.003$ at Max at the $375$M--$906$M scale.
\end{itemize}

\begin{figure}[htp]
\centering
\includegraphics[width=\linewidth]{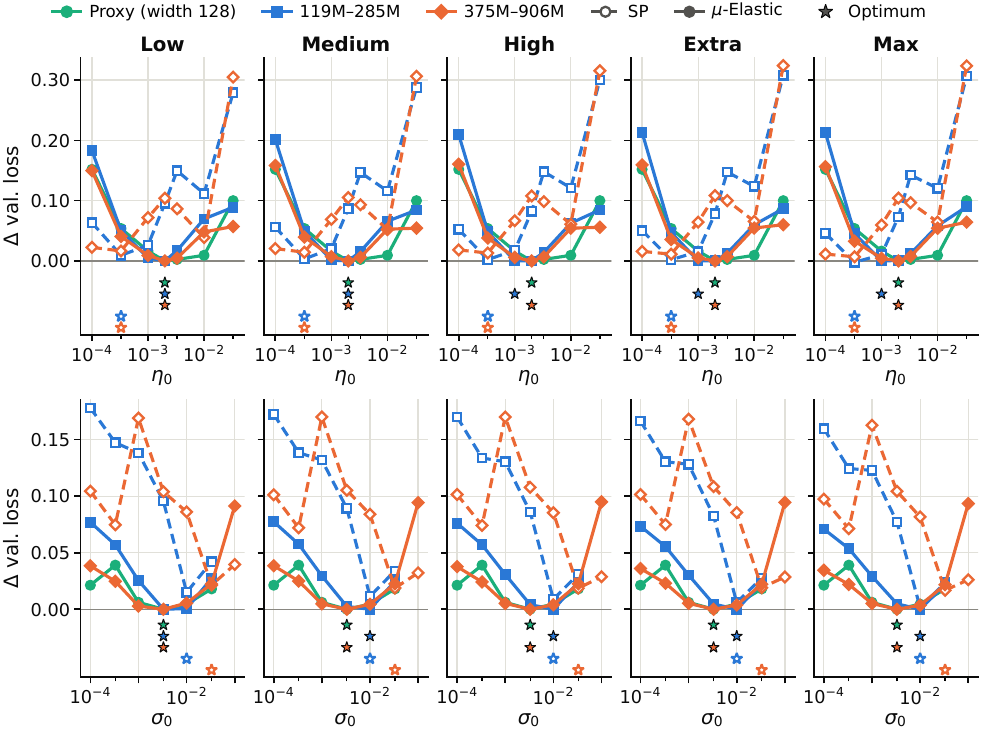}
\caption{\textbf{Per-sub-network sweeps for MatFormer.} The sweeps of Figure~\ref{fig:mutransfer} for each sub-network, from the smallest (Low) to the largest (Max; widths in Table~\ref{tab:models}), with the loss of each sub-network in place of the mean over the five.}
\label{fig:mutransfer-matformer}
\end{figure}

\begin{figure}[htp]
\centering
\includegraphics[width=\linewidth]{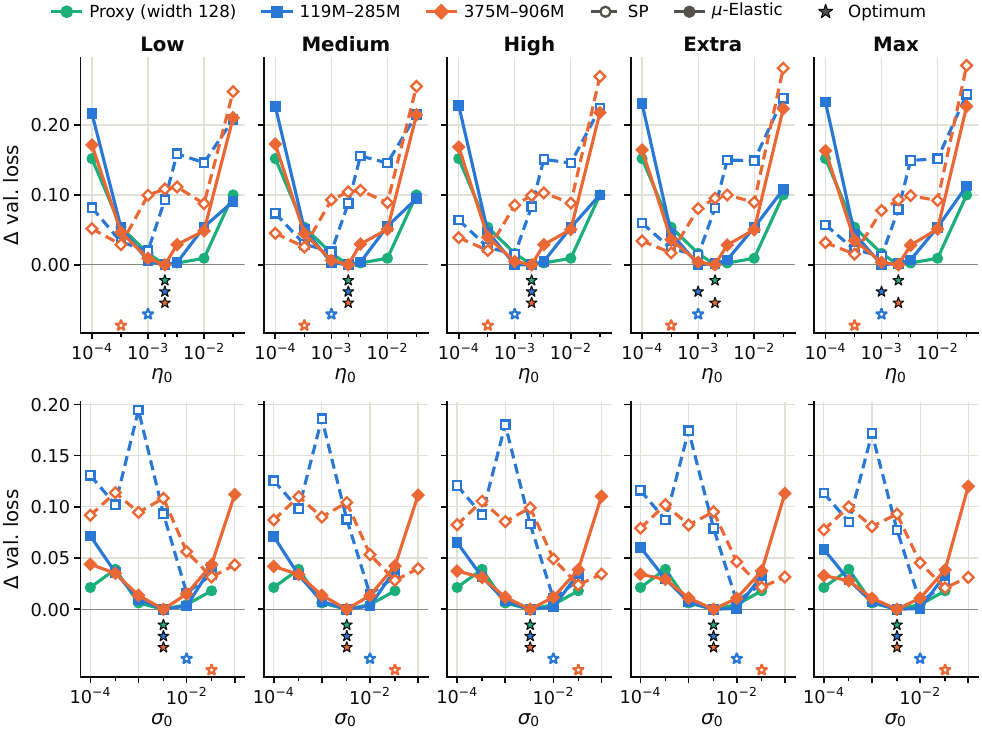}
\caption{\textbf{Per-sub-network sweeps for the fully elastic model}, as in Figure~\ref{fig:mutransfer-matformer}.}
\label{fig:mutransfer-fully-elastic}
\end{figure}

\begin{figure}[htp]
\centering
\includegraphics[width=\linewidth]{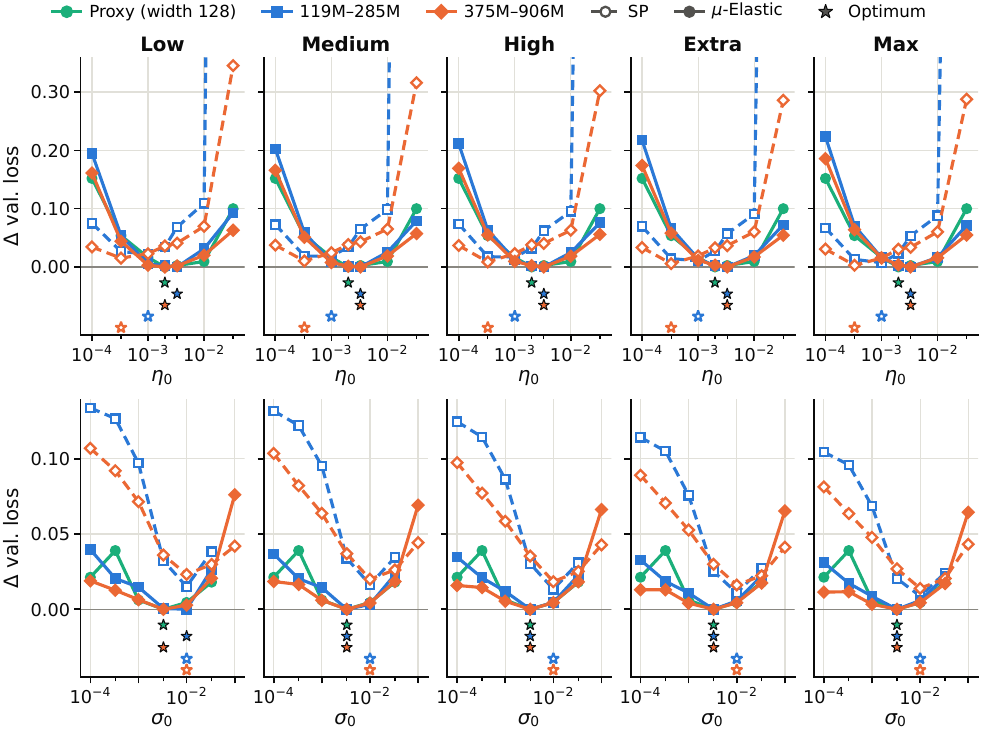}
\caption{\textbf{Per-sub-network sweeps for \Ours{}}, as in Figure~\ref{fig:mutransfer-matformer}.}
\label{fig:mutransfer-shell}
\end{figure}

\paragraph{Weight decay.}
Figure~\ref{fig:mutransfer-wd} sweeps the weight decay $\lambda_0$ over seven values for the proxy and for \Ours{} at the $119$M--$285$M scale, with $\eta_0$ and $\sigma_0$ at the base values of Section~\ref{sec:setup}, and Figure~\ref{fig:mutransfer-wd-tiers} shows each sub-network.
Two observations emerge:
\begin{itemize}
\item \textbf{Weight decay transfers under \muE{}.} From $\lambda_0=10^{-3}$ to $10^{-1}$, the mean loss of \Ours{} under \muE{} varies by at most $0.003$ nats, the same flat range as the proxy's, and both rise from $3.3\times10^{-1}$ on; the base value is within $0.001$ nats of the best \muE{} value.
\item \textbf{\muE{} is lower than SP at every swept value.} From $3.3\times10^{-3}$ to $10^{-1}$, where SP is also flat (within $0.009$ nats), the gap is $0.018$--$0.028$ nats in the mean and again largest at the narrow widths, $0.023$--$0.032$ nats at Low against $0.010$--$0.020$ at Max. SP also degrades at $10^{-3}$, where the gap grows to $0.039$ nats, and the gap shrinks to $0.010$ and $0.005$ nats at $3.3\times10^{-1}$ and $1$, where both degrade.
\end{itemize}

\begin{figure}[ht]
\centering
\includegraphics[width=0.6\linewidth]{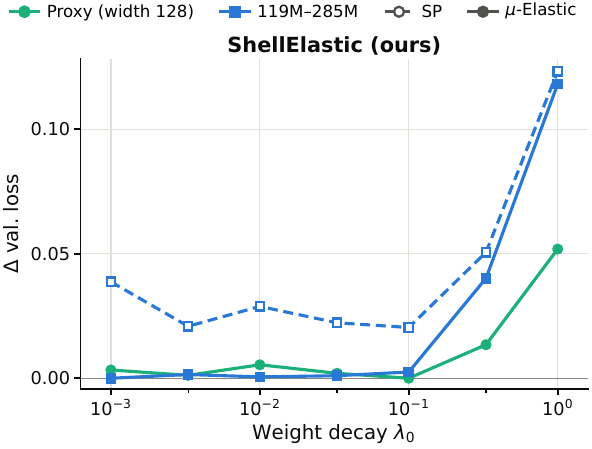}
\caption{\textbf{Weight-decay sweeps.} Sweeps of $\lambda_0$ for the proxy under \mup{} and for \Ours{} at the $119$M--$285$M scale under \muE{} (solid) and SP (dashed, open markers); $\Delta$ val.\ loss is as in Figure~\ref{fig:mutransfer}.}
\label{fig:mutransfer-wd}
\end{figure}

\begin{figure}[ht]
\centering
\includegraphics[width=\linewidth]{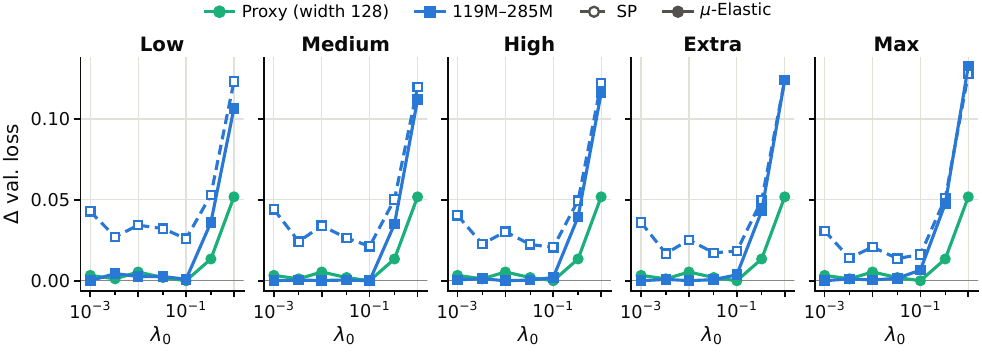}
\caption{\textbf{Per-sub-network weight-decay sweeps.} The sweeps of Figure~\ref{fig:mutransfer-wd} for each sub-network of \Ours{}, from the smallest (Low) to the largest (Max); lines and $\Delta$ val.\ loss as there, with the loss of each sub-network in place of the mean over the five, and one y-range for all panels.}
\label{fig:mutransfer-wd-tiers}
\end{figure}

\begin{table}[htp]
\caption{\textbf{\boldmath Effect of \muE{} on each elastic model at the $119$M--$285$M and $375$M--$906$M scales}, the values of Table~\ref{tab:muinternal} with four decimals.}
\label{tab:muinternal-full}
\centering
\small
\begin{tabular}{@{}lcccccc@{}}
\toprule
Method & Low & Medium & High & Extra & Max & Avg. \\
\midrule
\multicolumn{7}{@{}l}{\emph{$119$M--$285$M, train loss $\downarrow$}} \\
MatFormer & 3.2614 & 3.1841 & 3.1566 & \textbf{3.1405} & \textbf{3.1393} & 3.1764 \\
\quad w/ \muE{} & \textbf{3.2536} & \textbf{3.1821} & \textbf{3.1561} & 3.1407 & 3.1436 & \textbf{3.1752} \\ \addlinespace[2pt]
Fully elastic & 3.2717 & 3.1917 & 3.1544 & 3.1304 & 3.1263 & 3.1749 \\
\quad w/ \muE{} & \textbf{3.2572} & \textbf{3.1783} & \textbf{3.1447} & \textbf{3.1233} & \textbf{3.1196} & \textbf{3.1646} \\ \addlinespace[2pt]
\textbf{\Ours} & 3.2472 & 3.1801 & 3.1356 & 3.1040 & 3.0794 & 3.1493 \\
\quad \textbf{w/ \muE{}} & \textbf{3.2305} & \textbf{3.1641} & \textbf{3.1229} & \textbf{3.0941} & \textbf{3.0713} & \textbf{3.1366} \\
\midrule
\multicolumn{7}{@{}l}{\emph{$119$M--$285$M, val.\ loss $\downarrow$}} \\
MatFormer & 3.4341 & 3.3598 & 3.3271 & 3.3138 & \textbf{3.3110} & 3.3492 \\
\quad w/ \muE{} & \textbf{3.4248} & \textbf{3.3566} & \textbf{3.3250} & \textbf{3.3123} & 3.3139 & \textbf{3.3465} \\ \addlinespace[2pt]
Fully elastic & 3.4451 & 3.3681 & 3.3251 & 3.3030 & 3.2969 & 3.3477 \\
\quad w/ \muE{} & \textbf{3.4291} & \textbf{3.3536} & \textbf{3.3145} & \textbf{3.2954} & \textbf{3.2899} & \textbf{3.3365} \\ \addlinespace[2pt]
\textbf{\Ours} & 3.4232 & 3.3538 & 3.3077 & 3.2749 & 3.2491 & 3.3217 \\
\quad \textbf{w/ \muE{}} & \textbf{3.4086} & \textbf{3.3375} & \textbf{3.2944} & \textbf{3.2641} & \textbf{3.2403} & \textbf{3.3090} \\
\midrule
\multicolumn{7}{@{}l}{\emph{$375$M--$906$M, train loss $\downarrow$}} \\
MatFormer & 2.9140 & 2.8413 & 2.8152 & 2.7958 & 2.7929 & 2.8318 \\
\quad w/ \muE{} & \textbf{2.8993} & \textbf{2.8283} & \textbf{2.8035} & \textbf{2.7857} & \textbf{2.7871} & \textbf{2.8208} \\ \addlinespace[2pt]
Fully elastic & 2.9061 & 2.8418 & 2.8133 & 2.7900 & 2.7856 & 2.8274 \\
\quad w/ \muE{} & \textbf{2.8791} & \textbf{2.8172} & \textbf{2.7933} & \textbf{2.7735} & \textbf{2.7710} & \textbf{2.8068} \\ \addlinespace[2pt]
\textbf{\Ours} & 2.8877 & 2.8303 & 2.7963 & 2.7696 & \textbf{2.7481} & 2.8064 \\
\quad \textbf{w/ \muE{}} & \textbf{2.8746} & \textbf{2.8222} & \textbf{2.7901} & \textbf{2.7671} & 2.7488 & \textbf{2.8006} \\
\midrule
\multicolumn{7}{@{}l}{\emph{$375$M--$906$M, val.\ loss $\downarrow$}} \\
MatFormer & 3.0889 & 3.0194 & 2.9881 & 2.9741 & 2.9700 & 3.0081 \\
\quad w/ \muE{} & \textbf{3.0724} & \textbf{3.0050} & \textbf{2.9750} & \textbf{2.9628} & \textbf{2.9631} & \textbf{2.9957} \\ \addlinespace[2pt]
Fully elastic & 3.0808 & 3.0202 & 2.9867 & 2.9690 & 2.9633 & 3.0040 \\
\quad w/ \muE{} & \textbf{3.0519} & \textbf{2.9948} & \textbf{2.9662} & \textbf{2.9520} & \textbf{2.9481} & \textbf{2.9826} \\ \addlinespace[2pt]
\textbf{\Ours} & 3.0697 & 3.0087 & 2.9732 & 2.9461 & \textbf{2.9243} & 2.9844 \\
\quad \textbf{w/ \muE{}} & \textbf{3.0546} & \textbf{2.9998} & \textbf{2.9670} & \textbf{2.9435} & 2.9250 & \textbf{2.9780} \\
\bottomrule
\end{tabular}
\end{table}

\FloatBarrier
\section{Architecture comparison and search}
\label{app:arch-compare}

This appendix complements Section~\ref{sec:arch-compare}: Appendix~\ref{app:stairformer} gives the per-task results and training curves of StairFormer, and Appendix~\ref{app:shape-all} details the architecture search at equal parameters and width count.

\subsection{Comparison with StairFormer}
\label{app:stairformer}

StairFormer~\citep{trost2026fullynested} nests sub-networks as prefixes of one forward pass with equal-width groups, $\mathcal S=\{1,\dots,M\}$ in our notation; it normalizes over cumulative prefixes (Equation~\ref{eq:normvariants}) and shares one terminal-norm gain and one readout across sub-networks.
Our StairFormer baseline has five $208$-wide groups ($g=208$, $\mathcal S=\{1,2,3,4,5\}$) with $d_{\mathrm{head}}=104$, so that each group holds two complete heads, and an FFN hidden width of $236$ per group ($e\approx1.13$, against $e=1$ for \Ours{}), so that its Max matches that of \Ours{} in parameters.
Like \Ours{}, it trains all sub-networks at every step; we give it the data, steps, HPs and loss weights of Section~\ref{sec:setup}, and \muE{} with $m=208/128=1.625$ (Appendix~\ref{app:other-param}).
Table~\ref{tab:stairformer-tasks} gives its per-task results at every tier, and Figure~\ref{fig:stairformer-curves} its training curves next to the methods of Table~\ref{tab:main-scales} other than sampled standalone.
Two observations emerge:
\begin{itemize}
\item \textbf{Only Max is parameter-matched.} StairFormer's Max and that of \Ours{} both have $279.6$M parameters, but at Tiers~1--4, StairFormer's sub-networks are $14$--$73\%$ smaller ($32.6$--$200.4$M against $118.8$--$233.1$M), so these pairs compare sub-networks of different sizes; Figure~\ref{fig:stairformer} compares the validation loss and average 5-shot accuracy against parameter count.
\item \textbf{At Max, \Ours{} is better on most tasks.} At the matched size, \Ours{} is better on $9$ of the $13$ tasks and worse on $4$ (MMLU, SVAMP, WinoGrande and LSAT-RC).
\end{itemize}

\begin{table}[tp]
\caption{\textbf{Per-task results of StairFormer and \Ours{} at every tier} of the $119$M--$285$M scale, with the rows of Appendix~\ref{app:main-full}. Stair.: StairFormer. Ours: \Ours{}. Bold marks the better entry of each pair (ties both bold). Only Tier~5 is parameter-matched.}
\label{tab:stairformer-tasks}
\centering
\begin{tabular}{@{}l@{\hspace{6pt}}c@{\hspace{4pt}}c@{\hspace{6pt}}c@{\hspace{4pt}}c@{\hspace{6pt}}c@{\hspace{4pt}}c@{\hspace{6pt}}c@{\hspace{4pt}}c@{\hspace{6pt}}c@{\hspace{4pt}}c@{}}
\toprule
 & \multicolumn{2}{@{}c@{\hspace{6pt}}}{Tier~1\,$\cdot$\,Low} & \multicolumn{2}{@{}c@{\hspace{6pt}}}{Tier~2\,$\cdot$\,Medium} & \multicolumn{2}{@{}c@{\hspace{6pt}}}{Tier~3\,$\cdot$\,High} & \multicolumn{2}{@{}c@{\hspace{6pt}}}{Tier~4\,$\cdot$\,Extra} & \multicolumn{2}{@{}c@{}}{Tier~5\,$\cdot$\,Max} \\
\cmidrule(lr){2-3} \cmidrule(lr){4-5} \cmidrule(lr){6-7} \cmidrule(lr){8-9} \cmidrule(lr){10-11}
 & Stair. & \textcolor{ourcolor}{\textbf{Ours}} & Stair. & \textcolor{ourcolor}{\textbf{Ours}} & Stair. & \textcolor{ourcolor}{\textbf{Ours}} & Stair. & \textcolor{ourcolor}{\textbf{Ours}} & Stair. & \textcolor{ourcolor}{\textbf{Ours}} \\
\midrule
Params (M) & 32.6 & 118.8 & 76.9 & 152.7 & 132.8 & 190.8 & 200.4 & 233.1 & 279.6 & 279.6 \\
Train loss $\downarrow$ & 3.6874 & \textbf{3.2305} & 3.4224 & \textbf{3.1641} & 3.2764 & \textbf{3.1229} & 3.1871 & \textbf{3.0941} & 3.1258 & \textbf{3.0713} \\
Val.\ loss $\downarrow$ & 3.8836 & \textbf{3.4086} & 3.6004 & \textbf{3.3375} & 3.4519 & \textbf{3.2944} & 3.3588 & \textbf{3.2641} & 3.2949 & \textbf{3.2403} \\
\midrule
\multicolumn{11}{@{}l}{\emph{5-shot accuracy (\%) $\uparrow$}} \\
MMLU & \textbf{25.1} & 24.7 & \textbf{25.1} & 23.8 & \textbf{25.1} & 24.0 & \textbf{25.5} & 24.4 & \textbf{25.3} & 24.2 \\
ARC-Easy & 35.7 & \textbf{45.0} & 41.3 & \textbf{45.8} & 41.8 & \textbf{47.1} & 44.0 & \textbf{47.6} & 45.7 & \textbf{48.0} \\
ARC-Challenge & 21.5 & \textbf{23.3} & 22.4 & \textbf{23.5} & 22.3 & \textbf{23.6} & 22.5 & \textbf{24.0} & 23.2 & \textbf{24.3} \\ \addlinespace[3pt]
SVAMP & \textbf{6.0} & 5.7 & 5.3 & \textbf{6.3} & \textbf{7.3} & 6.3 & \textbf{7.0} & 6.7 & \textbf{8.3} & 6.4 \\ \addlinespace[3pt]
COPA & 55.0 & \textbf{59.7} & 57.0 & \textbf{61.7} & 60.0 & \textbf{61.7} & 56.0 & \textbf{63.0} & 59.0 & \textbf{63.3} \\
PIQA & 57.8 & \textbf{62.1} & 60.2 & \textbf{63.7} & 61.8 & \textbf{64.6} & 63.1 & \textbf{64.8} & 64.0 & \textbf{65.2} \\ \addlinespace[3pt]
HellaSwag & 27.0 & \textbf{32.4} & 29.7 & \textbf{33.2} & 31.1 & \textbf{33.9} & 32.2 & \textbf{34.6} & 33.6 & \textbf{35.1} \\
WinoGrande & \textbf{52.6} & 50.7 & \textbf{51.9} & 49.3 & \textbf{51.8} & 50.0 & \textbf{52.2} & 49.8 & \textbf{50.4} & 49.8 \\
LAMBADA & 15.0 & \textbf{25.9} & 23.3 & \textbf{29.1} & 25.3 & \textbf{29.9} & 26.8 & \textbf{30.2} & 27.4 & \textbf{30.4} \\ \addlinespace[3pt]
BoolQ & 47.0 & \textbf{55.1} & 48.4 & \textbf{57.2} & 53.2 & \textbf{55.3} & 51.1 & \textbf{54.5} & 51.7 & \textbf{52.3} \\
LSAT-RC & \textbf{25.4} & 24.8 & 23.1 & \textbf{25.6} & 22.4 & \textbf{26.0} & 25.4 & \textbf{26.4} & \textbf{28.0} & 25.6 \\
LSAT-LR & 25.7 & \textbf{26.2} & 26.5 & \textbf{29.2} & 24.1 & \textbf{26.8} & 24.5 & \textbf{27.6} & 24.1 & \textbf{27.6} \\
SAT-EN & \textbf{25.7} & 25.6 & \textbf{27.2} & \textbf{27.2} & 24.3 & \textbf{26.2} & 24.3 & \textbf{26.5} & 23.8 & \textbf{26.9} \\
\midrule
Average & 32.3 & \textbf{35.5} & 34.0 & \textbf{36.6} & 34.7 & \textbf{36.6} & 35.0 & \textbf{36.9} & 35.7 & \textbf{36.9} \\
\bottomrule
\end{tabular}
\end{table}

\begin{figure}[tp]
\centering
\includegraphics[width=\linewidth]{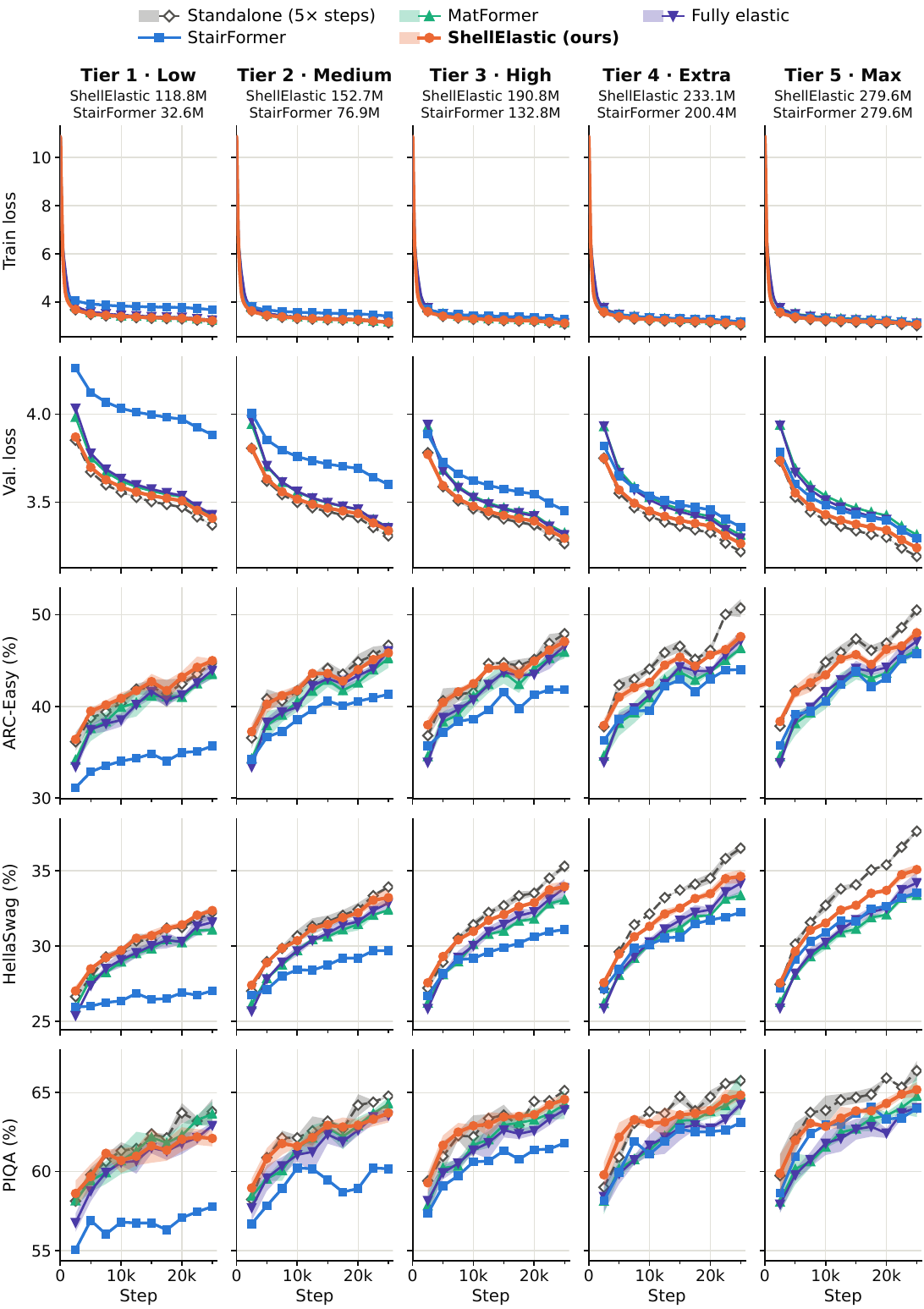}
\caption{\textbf{Training curves of StairFormer and the methods of Table~\ref{tab:main-scales}} at the $119$M--$285$M scale, except sampled standalone, in the format of Figure~\ref{fig:main-scale1}; column titles give the parameters of \Ours{} and StairFormer.}
\label{fig:stairformer-curves}
\end{figure}

\subsection{Architecture search at equal parameters and width count}
\label{app:shape-all}

The runs of Figures~\ref{fig:arch-placement}c and~\ref{fig:arch-placement-b39} train every supported width at every step with the default recipe and change only $\mathcal S$.
A supported set fixes how many of the $M^2$ blocks of size $g\times g$ a hidden matrix keeps: group $\mathcal G_k$ has $s_k-s_{k-1}$ block rows, each reading $s_k$ input blocks, so
\begin{equation}
 B(\mathcal S)=\sum_{k=1}^{K}(s_k-s_{k-1})\,s_k .
\label{eq:blockcount}
\end{equation}
We compare two families of supported sets (Figure~\ref{fig:arch-masks}): five supported widths at $B=42$, the block count of the default $\mathcal S=\{4,5,6,7,8\}$, and six supported widths at $B=39$.
Every set contains exactly one group wider than $g$, and the sets of a family differ only in where this group sits, from the last shell to the core.
A width at or above the wide group has the same number of blocks wherever the group sits, so its parameter count is the same in every set of the family; a width below it keeps only the lower-triangle blocks and has fewer parameters ($63.9$M at $384$, $93.7$M at $512$, $127.6$M at $640$; marked $^\dagger$ in Table~\ref{tab:arch-placement}).
The average in Table~\ref{tab:arch-placement} is taken over the supported widths of each set, so it covers different widths in different rows, and the core placements have no width below the core.

\begin{figure}[htp]
\centering
\includegraphics[width=\linewidth]{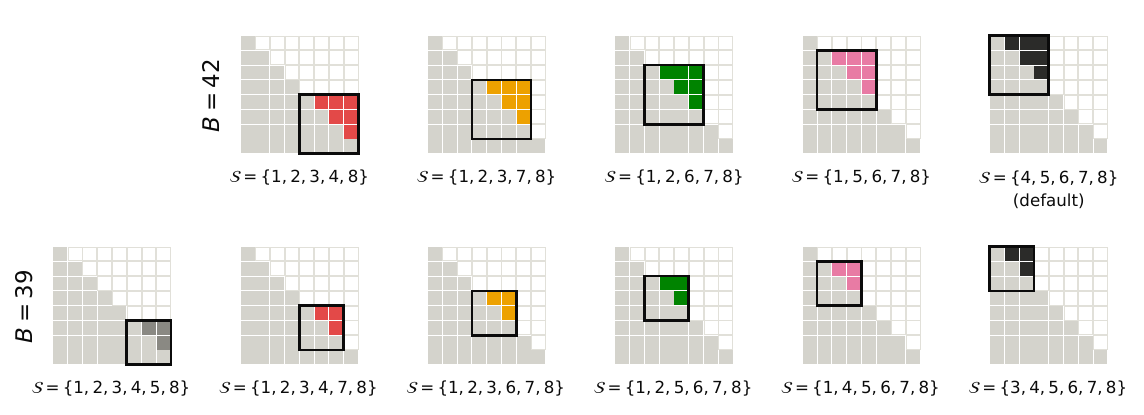}
\caption{\textbf{Block masks of the supported sets in Figures~\ref{fig:arch-placement}c and~\ref{fig:arch-placement-b39}} ($M=8$, $g=128$). Cell $(i,j)$ is the $g\times g$ block of every hidden matrix from input block $j$ to output block $i$. Gray: lower-triangle blocks kept by every set; colored: blocks above the diagonal, all inside the one group wider than $g$ (outlined). Each row has the same $B$ and moves the wide group from the last shell (left) to the core (right).}
\label{fig:arch-masks}
\end{figure}

\begin{figure}[htp]
\centering
\includegraphics[width=0.6\linewidth]{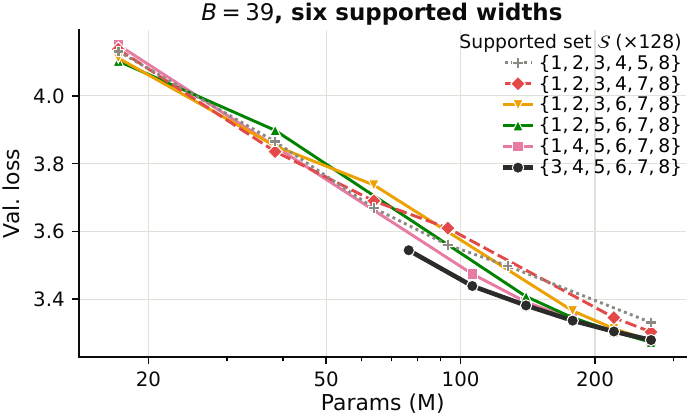}
\caption{As Figure~\ref{fig:arch-placement}c, at $B=39$ with six supported widths (core placement $\mathcal S=\{3,4,5,6,7,8\}$, black).}
\label{fig:arch-placement-b39}
\end{figure}

\begin{table}[htp]
\caption{\textbf{Validation loss of every supported width in the architecture search}, at step $25{,}000$, as the one group wider than $g$ moves from the last shell (first row of each block) to the core (last row, bold). Params: non-zero parameters (M) of each width; $^\dagger$a width below the wide group, which has fewer parameters. Avg.: mean over the supported widths of each set. Bold marks the lowest value in each column.}
\label{tab:arch-placement}
\centering
\setlength{\tabcolsep}{3pt}
\begin{tabular}{@{}lccccccccc@{}}
\toprule
$\mathcal S$ & 128 & 256 & 384 & 512 & 640 & 768 & 896 & 1024 & Avg. \\
\midrule
\multicolumn{10}{@{}l}{\textit{$B=42$, 5 supported widths}} \\
Params & 17.1 & 38.4 & 63.9 & 118.8 & 152.7 & 190.8 & 233.1 & 279.6 & \\
$\{1,2,3,4,8\}$ & 4.1178 & \textbf{3.8183} & \textbf{3.6738} & 3.6023$^\dagger$ & -- & -- & -- & 3.2698 & 3.6964 \\
$\{1,2,3,7,8\}$ & 4.1037 & 3.8310 & 3.7224 & -- & -- & -- & 3.2910 & 3.2503 & 3.6397 \\
$\{1,2,6,7,8\}$ & \textbf{4.0895} & 3.8832 & -- & -- & -- & 3.3193 & 3.2721 & \textbf{3.2400} & 3.5608 \\
$\{1,5,6,7,8\}$ & 4.1154 & -- & -- & -- & 3.3657 & 3.3068 & 3.2687 & 3.2409 & 3.4595 \\
$\boldsymbol{\{4,5,6,7,8\}}$ & -- & -- & -- & \textbf{3.4086} & \textbf{3.3375} & \textbf{3.2944} & \textbf{3.2641} & 3.2403 & \textbf{3.3090} \\
\midrule
\multicolumn{10}{@{}l}{\textit{$B=39$, 6 supported widths}} \\
Params & 17.1 & 38.4 & 76.5 & 106.2 & 140.1 & 178.2 & 220.5 & 267.0 & \\
$\{1,2,3,4,5,8\}$ & 4.1310 & 3.8646 & 3.6689$^\dagger$ & 3.5601$^\dagger$ & 3.4983$^\dagger$ & -- & -- & 3.3314 & 3.6757 \\
$\{1,2,3,4,7,8\}$ & 4.1371 & \textbf{3.8356} & 3.6898$^\dagger$ & 3.6098$^\dagger$ & -- & -- & 3.3460 & 3.3035 & 3.6537 \\
$\{1,2,3,6,7,8\}$ & 4.1112 & 3.8520 & 3.7369$^\dagger$ & -- & -- & 3.3659 & 3.3134 & 3.2775 & 3.6095 \\
$\{1,2,5,6,7,8\}$ & \textbf{4.1022} & 3.8981 & -- & -- & 3.4082 & 3.3456 & \textbf{3.3047} & \textbf{3.2734} & 3.5554 \\
$\{1,4,5,6,7,8\}$ & 4.1499 & -- & -- & 3.4746 & 3.3919 & 3.3419 & 3.3057 & 3.2778 & 3.4903 \\
$\boldsymbol{\{3,4,5,6,7,8\}}$ & -- & -- & \textbf{3.5445} & \textbf{3.4391} & \textbf{3.3818} & \textbf{3.3371} & 3.3050 & 3.2798 & \textbf{3.3812} \\
\bottomrule
\end{tabular}
\end{table}

\FloatBarrier
\section{Ablation studies}
\label{app:ablation-studies}
\label{app:ablations}

All ablations use the supernet of Section~\ref{sec:setup} ($g=128$, $\mathcal S=\{4,5,6,7,8\}$) at the $119$M--$285$M scale and change one setting of its recipe at a time.
Tables~\ref{tab:ablations-1} and~\ref{tab:ablations-2} list all results, numbered as the paragraphs below; each block of rows ends with the default.
\begin{itemize}
\item \textbf{Default and run-to-run variation.} The default rows average the three \Ours{} runs of Table~\ref{tab:main-scales}, whose standard deviation is at most $0.003$ nats in the validation loss of any tier, $0.002$ nats in the mean validation loss and $0.32$ points in the mean 5-shot accuracy; every other row is a single run. A further run of the default on H100 GPUs is within $0.003$ nats of this mean at every tier, so we treat differences below $0.003$ nats as run-to-run variation.
\item \textbf{Hardware and cost.} The default runs used A100 GPUs with activation checkpointing on every other block, as did ablations~(11) and~(12), whereas ablations~(1) to~(10) used H100 GPUs without it. Step times and peak memory are compared between runs on the same GPUs with the same activation checkpointing; on H100 GPUs, the reference is the loss-weight runs of~(2) ($37.1$\,GB peak memory).
\end{itemize}

\begin{table}[htp]
\caption{\textbf{\boldmath Ablations (1)--(6) of \Ours{} at the $119$M--$285$M scale.} Each block changes one setting of the default recipe and ends with the default. Val.\ loss is taken at step $25{,}000$ for each tier (Low to Max: widths $512$ to $1024$), and 5-shot acc.\ averages the $13$ downstream tasks over the five tiers. Bold marks the best value of each column within a block.}
\label{tab:ablations-1}
\centering
\small
\setlength{\tabcolsep}{3pt}
\begin{tabular}{@{}lcccccc@{\hspace{8pt}}c@{}}
\toprule
 & \multicolumn{6}{c}{Val.\ loss $\downarrow$} & 5-shot acc.\ $\uparrow$ \\
\cmidrule(r{4pt}){2-7} \cmidrule(l{0pt}){8-8}
Setting & Low & Medium & High & Extra & Max & Avg. & Avg. \\
\midrule
\multicolumn{8}{@{}l}{\emph{(1) Selected sub-networks per step $\mathcal A_\tau$}} \\
Sandwich: Low, Max and one other & \textbf{3.4019} & 3.3617 & 3.3183 & 3.2816 & \textbf{3.2395} & 3.3206 & 36.1 \\
Random: one of the five sub-networks & 3.4311 & 3.3786 & 3.3542 & 3.3410 & 3.3356 & 3.3681 & 34.8 \\
Max + random: Max and one other & 3.4944 & 3.4079 & 3.3447 & 3.2956 & 3.2397 & 3.3564 & 35.5 \\
All: all five sub-networks (default) & 3.4086 & \textbf{3.3375} & \textbf{3.2944} & \textbf{3.2641} & 3.2403 & \textbf{3.3090} & \textbf{36.5} \\
\midrule
\multicolumn{8}{@{}l}{\emph{(2) Loss weights $\omega_s$ (Low to Max)}} \\
$(1,1,1,1,1)$ & 3.3888 & \textbf{3.3279} & \textbf{3.2923} & 3.2677 & 3.2518 & 3.3057 & 36.5 \\
$(1,0.5,0.3,0.2,2)$ & \textbf{3.3801} & 3.3312 & 3.2978 & 3.2690 & 3.2412 & \textbf{3.3039} & \textbf{36.6} \\
$(1,1,1,1,2)$ & 3.3959 & 3.3317 & 3.2934 & 3.2673 & 3.2486 & 3.3074 & 36.3 \\
$(1,1,1,1,6)$ & 3.4236 & 3.3437 & 3.2977 & \textbf{3.2638} & \textbf{3.2365} & 3.3130 & 36.2 \\
$(1,1,1,1,4)$ (default) & 3.4086 & 3.3375 & 3.2944 & 3.2641 & 3.2403 & 3.3090 & 36.5 \\
\midrule
\multicolumn{8}{@{}l}{\emph{(3) Trunk normalization}} \\
Cumulative prefix & \textbf{3.4060} & 3.3376 & 3.2952 & 3.2645 & 3.2410 & \textbf{3.3088} & 36.1 \\
Block-local over every $g$ coordinates & 3.4093 & \textbf{3.3368} & 3.2955 & 3.2652 & 3.2411 & 3.3096 & \textbf{36.7} \\
Group-local over $\mathcal G_k$ (default) & 3.4086 & 3.3375 & \textbf{3.2944} & \textbf{3.2641} & \textbf{3.2403} & 3.3090 & 36.5 \\
\midrule
\multicolumn{8}{@{}l}{\emph{(4) Terminal norm $N^{\rm out}_s$}} \\
Group-local, as the trunk & 3.4122 & 3.3414 & 3.2975 & 3.2656 & 3.2417 & 3.3117 & 35.4 \\
Over all $sg$ coordinates (default) & \textbf{3.4086} & \textbf{3.3375} & \textbf{3.2944} & \textbf{3.2641} & \textbf{3.2403} & \textbf{3.3090} & \textbf{36.5} \\
\midrule
\multicolumn{8}{@{}l}{\emph{(5) Readout}} \\
Shared $U$ & 3.4152 & 3.3446 & 3.3000 & 3.2696 & 3.2450 & 3.3149 & 35.6 \\
Separate $U_s$ (default) & \textbf{3.4086} & \textbf{3.3375} & \textbf{3.2944} & \textbf{3.2641} & \textbf{3.2403} & \textbf{3.3090} & \textbf{36.5} \\
\midrule
\multicolumn{8}{@{}l}{\emph{(6) In-place distillation from Max}} \\
Forward KL, weight $0.1$ & 3.4047 & 3.3355 & 3.2937 & 3.2641 & 3.2410 & 3.3078 & 36.4 \\
Forward KL, weight $0.5$ & 3.3920 & 3.3287 & 3.2931 & 3.2675 & 3.2482 & 3.3059 & \textbf{36.6} \\
Forward KL, weight $1$ & \textbf{3.3748} & \textbf{3.3184} & \textbf{3.2873} & 3.2653 & 3.2483 & \textbf{3.2988} & 36.1 \\
Reverse KL, weight $0.5$ & 3.3868 & 3.3261 & 3.2887 & 3.2632 & 3.2437 & 3.3017 & 35.9 \\
Symmetric KL, weight $0.5$ & 3.3875 & 3.3256 & 3.2901 & \textbf{3.2629} & 3.2435 & 3.3019 & 36.0 \\
None (default) & 3.4086 & 3.3375 & 3.2944 & 3.2641 & \textbf{3.2403} & 3.3090 & 36.5 \\
\bottomrule
\end{tabular}
\end{table}

\begin{table}[t]
\caption{\textbf{\boldmath Ablations (7)--(12) of \Ours{} at the $119$M--$285$M scale}, in the format of Table~\ref{tab:ablations-1}.}
\label{tab:ablations-2}
\centering
\small
\setlength{\tabcolsep}{3pt}
\begin{tabular}{@{}lcccccc@{\hspace{8pt}}c@{}}
\toprule
 & \multicolumn{6}{c}{Val.\ loss $\downarrow$} & 5-shot acc.\ $\uparrow$ \\
\cmidrule(r{4pt}){2-7} \cmidrule(l{0pt}){8-8}
Setting & Low & Medium & High & Extra & Max & Avg. & Avg. \\
\midrule
\multicolumn{8}{@{}l}{\emph{(7) Normalization type}} \\
LayerNorm & \textbf{3.4037} & 3.3379 & 3.2955 & 3.2655 & 3.2424 & \textbf{3.3090} & 35.8 \\
RMSNorm (default) & 3.4086 & \textbf{3.3375} & \textbf{3.2944} & \textbf{3.2641} & \textbf{3.2403} & \textbf{3.3090} & \textbf{36.5} \\
\midrule
\multicolumn{8}{@{}l}{\emph{(8) Ensembles of the five sub-networks}} \\
Learned per-token width router & 3.7735 & 3.5611 & 3.4137 & 3.3055 & \textbf{3.2269} & 3.4562 & 35.1 \\
Distillation from loss-weighted ensemble & \textbf{3.3902} & \textbf{3.3275} & \textbf{3.2926} & 3.2673 & 3.2471 & \textbf{3.3049} & \textbf{37.0} \\
Distillation from soft oracle & 3.3982 & 3.3350 & 3.2980 & 3.2709 & 3.2501 & 3.3105 & 36.3 \\
None (default) & 3.4086 & 3.3375 & 3.2944 & \textbf{3.2641} & 3.2403 & 3.3090 & 36.5 \\
\midrule
\multicolumn{8}{@{}l}{\emph{(9) Stage-residual gradient scaling}} \\
$\gamma=0$ & 3.4053 & 3.3740 & 3.3530 & 3.3348 & 3.3115 & 3.3557 & 36.1 \\
$\gamma=0.3$ & \textbf{3.3806} & \textbf{3.3317} & 3.2991 & 3.2762 & 3.2485 & \textbf{3.3072} & 36.6 \\
$\gamma=0.5$ & 3.3939 & 3.3360 & 3.2967 & 3.2700 & 3.2447 & 3.3083 & 36.0 \\
$\gamma=0.8$ & 3.4030 & 3.3365 & 3.2946 & 3.2658 & 3.2419 & 3.3084 & \textbf{37.0} \\
$\gamma=1$ (default) & 3.4086 & 3.3375 & \textbf{3.2944} & \textbf{3.2641} & \textbf{3.2403} & 3.3090 & 36.5 \\
\midrule
\multicolumn{8}{@{}l}{\emph{(10) Trunk gradient of Low to Extra}} \\
Detached & 3.9289 & 3.6481 & 3.4489 & 3.3134 & \textbf{3.2259} & 3.5130 & 35.2 \\
Kept (default) & \textbf{3.4086} & \textbf{3.3375} & \textbf{3.2944} & \textbf{3.2641} & 3.2403 & \textbf{3.3090} & \textbf{36.5} \\
\midrule
\multicolumn{8}{@{}l}{\emph{(11) Base width $g$ and head dimension $d_{\mathrm{head}}$}} \\
$g=64$, $d_{\mathrm{head}}=64$ & \textbf{3.3999} & 3.3336 & \textbf{3.2919} & \textbf{3.2619} & \textbf{3.2383} & \textbf{3.3051} & 36.2 \\
$g=128$, $d_{\mathrm{head}}=64$ & 3.4021 & \textbf{3.3334} & 3.2929 & 3.2636 & 3.2400 & 3.3064 & 36.3 \\
$g=128$, $d_{\mathrm{head}}=128$ (default) & 3.4086 & 3.3375 & 3.2944 & 3.2641 & 3.2403 & 3.3090 & \textbf{36.5} \\
\midrule
\multicolumn{8}{@{}l}{\emph{(12a) Gradient clipping (global norm)}} \\
$0.5$ & 3.4088 & 3.3392 & 3.2962 & 3.2655 & 3.2420 & 3.3104 & 36.3 \\
$1$ & 3.4150 & 3.3431 & 3.2988 & 3.2679 & 3.2437 & 3.3137 & 35.9 \\
$2$ & \textbf{3.4037} & \textbf{3.3342} & \textbf{3.2927} & \textbf{3.2633} & 3.2406 & \textbf{3.3069} & \textbf{36.6} \\
$4$ (default) & 3.4086 & 3.3375 & 3.2944 & 3.2641 & \textbf{3.2403} & 3.3090 & 36.5 \\
\midrule
\multicolumn{8}{@{}l}{\emph{(12b) AdamW $\epsilon$}} \\
$10^{-6}$ & 3.4097 & 3.3411 & 3.3011 & 3.2737 & 3.2552 & 3.3162 & 35.7 \\
$10^{-8}$ & \textbf{3.4055} & \textbf{3.3348} & \textbf{3.2920} & \textbf{3.2628} & \textbf{3.2398} & \textbf{3.3070} & 36.0 \\
$10^{-10}$ & 3.4057 & 3.3373 & 3.2956 & 3.2660 & 3.2428 & 3.3095 & 36.4 \\
$10^{-14}$ (default) & 3.4086 & 3.3375 & 3.2944 & 3.2641 & 3.2403 & 3.3090 & \textbf{36.5} \\
\midrule
\multicolumn{8}{@{}l}{\emph{(12c) $z$-loss coefficient $\lambda_z$}} \\
$10^{-3}$ & 3.4096 & 3.3393 & 3.2970 & 3.2664 & 3.2414 & 3.3107 & 35.9 \\
$10^{-2}$ & \textbf{3.4074} & \textbf{3.3363} & \textbf{3.2938} & \textbf{3.2640} & \textbf{3.2403} & \textbf{3.3084} & \textbf{36.5} \\
$10^{-4}$ (default) & 3.4086 & 3.3375 & 3.2944 & 3.2641 & \textbf{3.2403} & 3.3090 & \textbf{36.5} \\
\bottomrule
\end{tabular}
\end{table}

\paragraph{(1) Selected sub-networks per step.}
Besides the default (All), which selects all five sub-networks at every step, we test three rules for the selected set $\mathcal A_\tau$ of Equation~\ref{eq:loss}:
\begin{itemize}
\item \emph{Sandwich}: Low and Max at every step and one of the other three uniformly at random;
\item \emph{Max + random}: Max and one of the other four;
\item \emph{Random}: one of the five sub-networks uniformly at random, as MatFormer and the fully elastic model do.
\end{itemize}
\textbf{Observation:} The loss of each sub-network follows how often it is trained, and All gives the lowest mean loss.
Sandwich and Max + random keep the loss of Max but raise those of the sub-networks they select less often, by up to $0.024$ and $0.086$ nats, respectively, and Random raises every tier, Max by $0.095$ nats.

\paragraph{(2) Loss weights.}
With all widths selected at every step, we vary the loss weights $\omega_s$ of Equation~\ref{eq:loss}: the weight of Max rises from $1$ to $2$, $4$ (the default) and $6$, and the decaying weights $(1,0.5,0.3,0.2,2)$ keep the relative weight of Max at $1/2$ and double that of Low.

\textbf{Observation:} The loss weight of a width moves its loss in the same way as how often it is trained.
Raising the weight of Max from $1$ to $6$ lowers its loss monotonically, by $0.015$ nats, and raises the mean loss from $3.306$ to $3.313$, mostly through Low and Medium.
The decaying weights match the default at Max and lower the mean loss by $0.005$ nats, mostly through Low.
Training all widths at every step with weights $(1,1,1,1,4)$ is thus within $0.005$ nats of the best trade-off we found.

\paragraph{(3) Trunk normalization.}
We compare three trunk normalizations that all preserve the supported prefixes (Appendix~\ref{app:prefix-main}): group-local normalization over each $\mathcal G_k$ (the default), block-local normalization over every $g$ coordinates, and a cumulative prefix norm, which normalizes each block of $g$ coordinates with the statistics of all coordinates up to the end of that block.
With $\mathcal B_b=\{(b-1)g+1,\dots,bg\}$ the $b$-th block of $g$ coordinates, all three apply Equation~\ref{eq:norm} with the statistics of a coordinate $j\in\mathcal G_k\cap\mathcal B_b$ taken over
\begin{equation}
 \mathcal G_k\ \ \text{(group-local)},\qquad \mathcal B_b\ \ \text{(block-local)},\qquad \{1,\dots,bg\}\ \ \text{(cumulative prefix)}.
\label{eq:normvariants}
\end{equation}

\textbf{Observation:} The three are within $0.001$ nats of each other in mean loss and within $0.003$ nats at every tier.
They differ only in cost: the prefix norm, which needs cumulative statistics for every prefix of blocks, takes $6\%$ more time per step and $23\%$ more memory in our implementation, while block-local normalization costs the same as group-local normalization.

\paragraph{(4) Terminal norm.}
We compare normalizing all $sg$ coordinates of sub-network $s$ before its readout (the default, Equation~\ref{eq:head}) with normalizing each group separately, as in the trunk.

\textbf{Observation:} The default is better: $0.003$ nats lower in mean loss, mostly at Low to High, and about $1$ point higher in mean 5-shot accuracy.

\paragraph{(5) Readout.}
We compare separate readouts $U_s$ for every width (the default) with one readout $U$ shared by all widths, of which width $s$ uses the first $sg$ columns (with the split of Equation~\ref{eq:split}).

\textbf{Observation:} Separate readouts are better at every tier, by $0.006$ nats in mean loss and $0.9$ points in mean 5-shot accuracy.

\paragraph{(6) In-place distillation.}
We distill Low to Extra from Max by adding to the loss of each a weighted KL divergence between its output distribution and that of Max over the full vocabulary: forward KL with weights $0.1$, $0.5$ and $1$, and reverse and symmetric KL with weight $0.5$.

\textbf{Observation:} Distillation lowers the loss of Low to High, by up to $0.034$ nats at Low, but raises that of Max, by up to $0.008$ nats.
The mean loss falls by up to $0.010$ nats, but the mean 5-shot accuracy stays within $0.6$ points of the default and a step takes $2.2$ to $2.6$ times as long, so we do not use it.

\paragraph{(7) Normalization type.}
We compare RMSNorm (the default) with LayerNorm with a gain and no bias.

\textbf{Observation:} Both give the same mean loss ($3.309$) and differ by at most $0.005$ nats at any tier, so we keep RMSNorm.

\paragraph{(8) Ensembles of the sub-networks.}
One trunk pass yields the output distributions $p_s=\operatorname{softmax}(z_s)$ of all five sub-networks, so ensembling them costs little beyond the readouts, and nested sub-networks may still err on different tokens.
We use per-token ensembles $\sum_s\pi_sp_s$, whose weights $\pi_s$ are non-negative and sum to one, in three ways:
\begin{itemize}
\item \emph{Four ensembles of a trained supernet}: uniform weights, $\pi_s=1/5$; confidence weights, $\pi_s\propto\max_vp_s(v)$; the most confident readout, which puts all weight on the sub-network with the largest $\max_vp_s(v)$; and the best fixed ensemble, whose weights minimize the loss of the tokens it is evaluated on.
As an upper bound, an oracle puts all weight on the readout that assigns the target the highest probability.
\item \emph{Learned router}: a two-layer MLP ($1024\to256\to5$, SiLU) maps the last-layer hidden state of Max, after its terminal norm, to one logit per sub-network, and a softmax turns the five logits into the per-token weights $\pi_s$; its output layer is initialized to zero, so the weights start uniform.
It is trained jointly with the supernet, whose loss becomes the $\pi$-weighted sum of the five losses at every token plus $0.01$ times their unweighted mean, which keeps every sub-network trained.
\item \emph{Distillation from ensembles}: every sub-network, including Max, adds a forward KL divergence with weight $0.5$ toward one of two teacher ensembles, detached from the gradient: the \emph{loss-weighted ensemble}, whose weights are the loss weights $\omega_s$ of Equation~\ref{eq:loss} normalized to sum to one ($1/2$ on Max, $1/8$ on each other sub-network), or the \emph{soft oracle}, $\pi_s\propto p_s(y)^\beta$ for target $y$, a smooth version of the oracle that favors the sub-networks assigning $y$ a high probability, with $\beta$ set at every step so that the weights have a mean normalized entropy of $0.7$.
\end{itemize}
\textbf{Observation:} No ensemble beats Max alone (Table~\ref{tab:ensemble-mixtures}).
The best fixed ensemble matches it, uniform and confidence weights are $0.03$ nats worse, and the most confident readout is $0.068$ nats worse; only the oracle is better, but it needs the target.
The router collapses onto Max, which improves by $0.013$ nats, while the other sub-networks, trained only through the $0.01$ term, lose $0.041$ to $0.365$ nats.
As predictors, the loss-weighted ensemble is $0.012$ nats worse than Max and the soft oracle $0.143$ nats better, as it uses the target; yet distillation toward them raises the loss of Max by $0.007$ and $0.010$ nats and lowers that of Low by $0.018$ and $0.010$ nats, as in-place distillation does in~(6), and a step takes about $2.6$ and $3.0$ times as long.

\textbf{Nested sub-networks versus separately trained models} (Table~\ref{tab:ensemble-oracle}, bottom):
\begin{itemize}
\item \emph{Separately trained models ensemble well.} Averaging the Max of two supernets trained with the same settings lowers the loss by $0.056$ nats, and that of three by $0.077$ nats; uniform weights are already the best fixed weights, as the members are equally strong. The oracle over two such Max sub-networks also gains more than that over the five sub-networks, $0.23$ against $0.19$ nats. For the five sub-networks of one supernet, no fixed weights improve on Max: they share the trunk and are much more alike, and the narrower ones are weaker, so averaging only dilutes Max.
\item \emph{Low gives the largest share of the gain.} With Low and Max alone, the oracle is already $0.12$ nats below Max, about two thirds of its gain with all five tiers.
\item \emph{Cost.} Each separate model needs its own training run and trunk pass, whereas one trunk pass yields all five sub-networks.
\end{itemize}

\textbf{Outlook:} Ensembling the sub-networks remains promising for elastic models: the oracle is $0.19$ nats below Max alone and picks a narrower sub-network for $71\%$ of held-out tokens (Table~\ref{tab:ensemble-oracle}), and a label-free selection better than confidence could turn part of it into a gain at almost no extra cost.

\begin{table}[htp]
\caption{\textbf{Ensembles of the five sub-networks as predictors and as teachers.} Train and validation loss minus those of Max. The row labeled Router is the ensemble weighted per token by the learned router, which is trained together with its own supernet, so this row subtracts the losses of that supernet's Max. The loss-weighted ensemble and the soft oracle are the two ensembles used as distillation teachers in ablation~(8).}
\label{tab:ensemble-mixtures}
\centering
\small
\begin{tabular}{@{}lcc@{}}
\toprule
 & \multicolumn{2}{c}{Loss $-$ Max loss} \\
\cmidrule(l{0pt}){2-3}
Ensemble & Train loss & Val.\ loss \\
\midrule
\multicolumn{3}{@{}l}{\emph{Predictors}} \\
Uniform weights & $+0.0313$ & $+0.0316$ \\
Confidence weights & $+0.0295$ & $+0.0296$ \\
Most confident readout & $+0.0648$ & $+0.0678$ \\
Best fixed ensemble & $0.0000$ & $-0.0003$ \\
Router & $0.0000$ & $0.0000$ \\
\midrule
\multicolumn{3}{@{}l}{\emph{Teachers}} \\
Loss-weighted ensemble & -- & $+0.0116$ \\
Soft oracle (uses the target) & -- & $-0.1433$ \\
\midrule
Oracle (uses the target) & $-0.1731$ & $-0.1851$ \\
\bottomrule
\end{tabular}
\end{table}

\begin{table}[htp]
\caption{\textbf{The oracle on held-out C4 text} (final checkpoints). Top: share of tokens on which each tier assigns the target the highest probability (the oracle's pick), its contribution to the oracle's mean gain over Max (the four sum to $0.185$), and the share of tokens on which it beats Max. Bottom: validation loss minus that of Max when the models in each row are ensembled with uniform weights, the best fixed weights or the oracle.}
\label{tab:ensemble-oracle}
\centering
\small
\begin{tabular}{@{}lccccc@{}}
\toprule
Tier & Low & Medium & High & Extra & Max \\
\midrule
Picked by the oracle (\%) & 22.7 & 16.3 & 15.1 & 16.4 & 29.4 \\
Contribution to the gain (nats) & 0.0878 & 0.0454 & 0.0309 & 0.0210 & -- \\
Beats Max (\%) & 39.0 & 42.7 & 44.9 & 46.7 & -- \\
\bottomrule
\end{tabular}

\vspace{6pt}
\begin{tabular}{@{}lccc@{}}
\toprule
Members & Uniform & Best fixed & Oracle \\
\midrule
Five tiers of one supernet & $+0.0316$ & $-0.0003$ & $-0.1851$ \\
Low and Max of one supernet & $+0.0355$ & $-0.0002$ & $-0.1187$ \\
Max of two supernets & $-0.0564$ & $-0.0564$ & $-0.2306$ \\
Max of three supernets & $-0.0767$ & $-0.0767$ & $-0.3401$ \\
\bottomrule
\end{tabular}
\end{table}

\paragraph{(9) Stage-residual gradient scaling.}
The logits of width $s_k$ are a sum of one term per group,
\begin{equation}
 z_{s_k}=\alpha_{s_k}\sum_{j\le k}h^{(k)}_{\mathcal G_j}\big(U_{s_k}\big)_{:,\mathcal G_j}^{\top},\qquad h^{(k)}=N^{\rm out}_{s_k}\big(x^{(L)}_{:s_kg}\big),
\label{eq:stagelogits}
\end{equation}
so the supernet can be viewed as a sequence of stages: the core gives a base prediction, and each shell adds a correction to the terms of the groups before it, as in residual learning~\citep{he2016deep}.
To train the shells toward this role, we scale by $\gamma$ the gradient that the loss of each width sends into the terms of the narrower groups, through the coordinates of the next narrower sub-network:
\begin{equation}
 \frac{\partial\ell_{s_k}}{\partial h^{(k)}_{:s_{k-1}g}}\;\leftarrow\;\gamma\,\frac{\partial\ell_{s_k}}{\partial h^{(k)}_{:s_{k-1}g}},\qquad k=2,\dots,K.
\label{eq:stagegrad}
\end{equation}
Every width has its own readout and terminal norm, which only its own loss trains, so for $\gamma>0$ Adam, being invariant to the gradient scale, still trains them in full; the scaling only reduces the gradient that these coordinates pass on to the trunk.
The forward pass is unchanged, and $\gamma=1$ is the joint objective of Equation~\ref{eq:loss}.
With $\gamma=0$, the loss of each width reaches the last-layer features of its own shell only, which then fit what the narrower groups leave unexplained, while the wider losses still train the narrower groups inside the trunk.

\textbf{Observation:} A $\gamma$ between $0.3$ and $0.8$ lowers the loss of Low by up to $0.028$ nats but raises those of Extra and Max by up to $0.012$ nats and leaves the mean loss within $0.002$ nats.
With $\gamma=0$, Low is $0.003$ nats lower, but the wider widths lose more, from $0.036$ nats at Medium to $0.071$ at Extra and Max, and the mean loss rises by $0.047$ nats, as the last-layer features of the narrower groups no longer adapt to the losses of the wider widths.
We therefore keep $\gamma=1$.

\textbf{Outlook:} Although $\gamma<1$ does not lower the mean loss here, the stage view matches a property that \Ours{} has but MatFormer and fully elastic models lack: it can grow after training.
Since no group reads a later one (Equation~\ref{eq:grouplinear}) and every group is normalized on its own (Equation~\ref{eq:norm}), a new shell $\mathcal G_{K+1}$ can be appended to a trained supernet and trained as a new stage, for example on downstream data, while all trained weights stay frozen, so every trained sub-network keeps its exact outputs.
MatFormer and fully elastic models cannot grow this way: new FFN units of MatFormer write into the hidden states that its trained layers read, and new channels of a fully elastic model change the inputs of every trained weight in the wider sub-network, so their trained weights would operate on inputs they were not trained on.

\paragraph{(10) Trunk gradient of the narrower widths.}
We detach the trunk from the losses of Low to Extra, which then train only their terminal norms and readouts on a trunk trained by Max alone.

\textbf{Observation:} Detaching lowers the loss of Max by $0.014$ nats but raises those of Low to Extra by $0.049$ to $0.520$ nats, so keeping their trunk gradients is much better overall.

\paragraph{(11) Base width and head dimension.}
At the same supported widths $512$ to $1024$ and the same parameter counts, halving the base width to $g=64$ ($M=16$, $\mathcal S=\{8,10,12,14,16\}$) keeps the $512$-wide core and every shell $128$ wide, so the groups, their normalization and the \muE{} factors $ms$ are unchanged.
We also halve the head dimension to $d_{\mathrm{head}}=64$, which $g=64$ requires.

\textbf{Observation:} With $d_{\mathrm{head}}=64$ in both, $g=64$ and $g=128$ define the same model, and their mean losses differ by $0.001$ nats.
Halving the head dimension lowers the mean loss by $0.003$ to $0.004$ nats, mostly at Low.
The head dimension thus has a small effect near run-to-run variation, and the base width none; we keep $g=d_{\mathrm{head}}=128$.

\paragraph{(12) Optimization HPs.}
We vary the gradient clipping (global norm $0.5$, $1$ and $2$), the AdamW $\epsilon$ ($10^{-6}$, $10^{-8}$ and $10^{-10}$) and the $z$-loss coefficient $\lambda_z$ ($10^{-3}$ and $10^{-2}$) around the defaults of Table~\ref{tab:hparams}: global norm $4$, $\epsilon=10^{-14}$ and $\lambda_z=10^{-4}$.

\textbf{Observation:} The defaults are within $0.002$ nats in mean loss of clipping at $0.5$ or $2$, $\epsilon=10^{-8}$ or $10^{-10}$, and $\lambda_z=10^{-3}$ or $10^{-2}$.
Clipping at $1$ is $0.005$ nats worse and $\epsilon=10^{-6}$ is $0.007$ nats worse, so the recipe is robust to these settings except for a large $\epsilon$.

\end{document}